\documentclass[11pt]{amsart}

\usepackage{amsmath,amssymb,mathtools}

\usepackage{amssymb}
\usepackage{amsfonts}
\usepackage{color}
\usepackage[linktoc=page, colorlinks, linkcolor=blue, citecolor=blue]{hyperref}
\usepackage{bookmark}
\usepackage{xcolor}

\usepackage{float}

\usepackage{algorithmic}
\usepackage{algorithm, algorithmic}

\newtheorem{definition}{Definition}[section]

\newtheorem{theorem}[definition]{Theorem}
\newtheorem{lemma}[definition]{Lemma}

\newtheorem{remark}[definition]{Remark}

\newcommand{\xkh}[1]{\left(#1\right)}
\newcommand{\dkh}[1]{\left\{#1\right\}}

\newcommand{\R}{\mathbb{R}}

\newcommand{\N}{\mathbb{N}}

\newcommand{\EEb}{\boldsymbol{E}}

\newcommand{\overleq}[1]{\overset{#1}{\le}}
\newcommand{\overeq}[1]{\overset{#1}{=}}
\newcommand{\bracing}[2]{\underset{#1}{\underbrace{#2}}}

\newcommand{\twonorm}[1]{\big\Vert #1 \big\Vert_2}
\newcommand{\specnorm}[1]{\big\Vert #1 \big\Vert_2}
\newcommand{\normf}[1]{\big\Vert #1 \big\Vert_\mathrm{F}}

\newcommand{\nj}[1]{\langle #1 \rangle}

\makeatletter
\def\@tvsp{\mathchoice{{}\mkern-4.5mu}{{}\mkern-4.5mu}{{}\mkern-2.5mu}{}}
\def\ltrivert{\left|\@tvsp\left|\@tvsp\left|}
\def\rtrivert{\right|\@tvsp\right|\@tvsp\right|}
\makeatother

\newcommand{\bm}{\boldsymbol}

\newcommand{\AAi}{\boldsymbol{A}_i}

\newcommand{\AAiw}{\boldsymbol{A}_{i}^{\left(\ww,\vv\right)}}
\newcommand{\AAf}{\boldsymbol{A}}
\newcommand{\ZZ}{\boldsymbol{Z}}
\newcommand{\BB}{\boldsymbol{B}}
\newcommand{\UU}{\boldsymbol{U}}
\newcommand{\RR}{\boldsymbol{R}}
\newcommand{\OO}{\boldsymbol{O}}
\newcommand{\VV}{\boldsymbol{V}}
\newcommand{\WW}{\boldsymbol{W}}
\newcommand{\GG}{\boldsymbol{G}}
\newcommand{\LL}{\boldsymbol{L}}
\newcommand{\QQ}{\boldsymbol{Q}}

\newcommand{\XXt}{\boldsymbol{X}_t}
\newcommand{\XXtw}{\boldsymbol{X}_{t}^{\left(\ww,\vv \right)}}
\newcommand{\XXtplus}{\boldsymbol{X}_{t+1}}
\newcommand{\XXtwplus}{\boldsymbol{X}_{t+1}^{\left(\ww,\vv \right)}}

\newcommand{\LLt}{\boldsymbol{L}_t}
\newcommand{\LLtw}{\boldsymbol{L}_{t}^{\left(\ww,\vv \right)} }
\newcommand{\LLtplusw}{\boldsymbol{L}_{t+1}^{\left(\ww,\vv \right)} }
\newcommand{\LLtT}{\boldsymbol{L}_t^\top}
\newcommand{\LLtplus}{\boldsymbol{L}_{t+1}}
\newcommand{\LLtwT}{\boldsymbol{{L}_{t}^{\left(\ww,\vv \right)}}^\top  }

\newcommand{\VLLt}{\boldsymbol{V}_{t}}

\newcommand{\VLLtP}{\boldsymbol{V}_{t}^\top}

\newcommand{\VLLtw}{\boldsymbol{V}_{t}^{\left(\ww,\vv \right)}}

\newcommand{\VLLtwP}{\boldsymbol{{V}_{t}^{\left(\ww,\vv \right)}}^\top}

\newcommand{\RRt}{\boldsymbol{R}_t}
\newcommand{\RRtw}{\boldsymbol{R}_{t}^{\left(\ww,\vv \right)} }
\newcommand{\RRtwT}{\boldsymbol{{R}_{t}^{\left(\ww,\vv \right)}}^\top }
\newcommand{\RRtplusw}{\boldsymbol{R}_{t+1}^{\left(\ww,\vv \right)} }
\newcommand{\RRtT}{\boldsymbol{R}_t^\top}
\newcommand{\RRtplus}{\boldsymbol{R}_{t+1}}

\newcommand{\VRRtP}{\boldsymbol{W}_{t}^\top}
\newcommand{\VRRt}{\boldsymbol{W}_{t}}

\newcommand{\VRRtw}{\boldsymbol{W}_{t}^{\left(\ww,\vv \right)}}

\newcommand{\VRRtwP}{\boldsymbol{{W}_{t}^{\left(\ww,\vv \right)}}^\top}

\newcommand{\WXX}{\boldsymbol{W}_{\star}}
\newcommand{\WXXT}{\boldsymbol{W}_{\star}^\top}
\newcommand{\WXXP}{\boldsymbol{W}_{\star,\bot}}
\newcommand{\WXXPT}{\boldsymbol{W}_{\star,\bot}^\top}

\newcommand{\XXstar}{\boldsymbol{X}_{\star}}

\newcommand{\XX}{\boldsymbol{X}}
\newcommand{\VXX}{\boldsymbol{V}_{\star}}

\newcommand{\VXXT}{\boldsymbol{V}_{\star}^\top}
\newcommand{\VXXP}{\boldsymbol{V}_{\star,\bot}}
\newcommand{\VXXPT}{\boldsymbol{V}_{\star,\bot}^\top}
\newcommand{\MM}{\boldsymbol{M}}

\newcommand{\NN}{\boldsymbol{N}}
\newcommand{\SSigma}{\boldsymbol{\Sigma}}

\newcommand{\II}{\boldsymbol{I}}
\newcommand{\Sg}{\boldsymbol{\Sigma}}

\newcommand{\bS}{\mathbb S}

\newcommand{\IdOp}{\mathcal{I}}
\newcommand{\vv}{\boldsymbol{v}}

\newcommand{\yy}{\boldsymbol{y}}
\newcommand{\xx}{\boldsymbol{x}}
\newcommand{\ww}{\boldsymbol{w}}

\newcommand{\LLambda}{\boldsymbol{\Lambda}}

\DeclareMathOperator{\dist}{dist}
\DeclareMathOperator{\st}{\mathbf{s.t.}}
\DeclareMathOperator{\rank}{rank}
\DeclareMathOperator{\aand}{and}

\newcommand{\Aop}{\mathcal{A}}
\newcommand{\Aopw}{\mathcal{A}_{\left(\ww,\vv\right)}}
\newcommand{\Aops}{\Aop^* \Aop}
\newcommand{\Aopws}{\Aopw^* \Aopw}

\newcommand{\Projw}{\mathcal{P}_{\ww \vv^\top}}
\newcommand{\Projwperp}{\mathcal{P}_{\ww \vv^\top,\bot}}

\newcommand{\supw}{\underset{\left(\ww,\vv \right) \in \mathcal{N}}{\sup}}

\newcommand{\constone}{c_1}
\newcommand{\consttwo}{c_2}
\newcommand{\constthree}{c_3}

\newcommand{\constfive}{c_5}

\title[Scaled Gradient Descent for Ill-Conditioned Low-Rank Matrix Recovery]{Scaled Gradient Descent for Ill-Conditioned Low-Rank Matrix Recovery with Optimal Sampling Complexity}

\author{Zhenxuan Li} 
\address{School of Mathematical Sciences, Beihang University, Beijing, 100191, China}
\email{Lizhenxuan@buaa.edu.cn }

\author{Meng Huang}
\address{School of Mathematical Sciences, Beihang University, Beijing, 100191, China}
\email{menghuang@buaa.edu.cn}
\thanks{M. Huang was supported by Beijing Natural Science Foundation (1262013) and the National Nature Science Foundation of China (12201022).}

\begin{document}

\maketitle

\begin{abstract}
The low-rank matrix recovery problem seeks to reconstruct an unknown $n_1 \times n_2$ rank-$r$ matrix from $m$ linear measurements, where $m\ll n_1n_2$. This problem has been extensively studied over the past few decades, leading to a variety of algorithms with solid theoretical guarantees. Among these, gradient descent based non-convex methods have become particularly popular due to their  computational efficiency. However, these methods typically suffer from two key limitations: a sub-optimal sample complexity of $O((n_1 + n_2)r^2)$ and an iteration complexity of $O(\kappa \log(1/\epsilon))$ to achieve $\epsilon$-accuracy, resulting in slow convergence when the target matrix is ill-conditioned. Here, $\kappa$ denotes the condition number of the unknown matrix. Recent studies show that a preconditioned variant of GD, known as scaled gradient descent (ScaledGD), can significantly reduce  the iteration complexity to $O(\log(1/\epsilon))$. Nonetheless, its sample complexity remains sub-optimal at $O((n_1 + n_2)r^2)$. In contrast, a delicate virtual sequence technique demonstrates that the standard GD  in the positive semidefinite (PSD) setting achieves the optimal sample complexity $O((n_1 + n_2)r)$, but converges more slowly with an iteration complexity $O(\kappa^2 \log(1/\epsilon))$.
In this paper, through a more refined analysis, we show that ScaledGD achieves  both the optimal sample complexity $O((n_1 + n_2)r)$ and  the improved iteration complexity $O(\log(1/\epsilon))$. Notably, our results extend beyond the PSD setting to general low-rank matrix recovery problem. Numerical experiments further validate that ScaledGD accelerates  convergence for ill-conditioned matrices with the optimal sampling complexity.
\end{abstract}

\keywords{Keywords: low-rank matrix recovery, scaled gradient descent, optimal sampling complexity, ill-conditioned matrix}

\section{Introduction}\label{sec:introduction}
\subsection{Problem setup}
Low-rank matrix recovery has wide-ranging applications in machine learning \cite{CandesRecht_2012}, recommendation systems \cite{10.1007/s10115-021-01629-6}, imaging science \cite{10.1109/TIP.2015.2432712}, and other areas \cite{davenport2016overview,chen2018harnessing}. It encompasses several classical problems, including matrix completion \cite{CandesRecht_2012}, phase retrieval \cite{Cand2013PhaseLift}, robust PCA \cite{10.1145/1970392.1970395}, blind deconvolution \cite{10.1109/TIT.2013.2294644}, and blind demixing \cite{10.1109/TIT.2017.2701342}. Broadly speaking, these problems can often be cast as solving the following non-convex program:
\begin{align}  \label{eq:prime problem}
      \min_{\XX \in \mathbb{\R}^{n_1 \times n_2}}& f(\XX):=\frac{1}{2}\|\bm{y}-\Aop\left(\XX\right)\|_2^2  \notag \\
     &\st \quad \rank(\XX)\le r,
\end{align}
where $\yy:=\Aop(\XXstar)\in \mathbb{R}^m$ is the observed measurement vector. Here, $\XXstar \in \mathbb{R}^{n_1 \times n_2}$ is the rank-$r$ matrix to be recovered, and $\Aop: \mathbb{R}^{n_1 \times n_2}\rightarrow \mathbb{R}^m$ is  linear operator  defined by
\begin{equation*}
 \left[\Aop (\XX) \right]_i :=  \frac1{\sqrt m}\nj{\AAi, \XX},
    \quad \quad i=1,2,\ldots, m,
\end{equation*}
where $\AAi \in \mathbb{R}^{n_1 \times n_2}$ are known measurement matrices, $\nj{\AAi, \XX}:=\operatorname{trace}(\AAi^\top\XX)$ denotes the standard inner product, and $m\ll n_1n_2$.

A commonly used and efficient strategy for solving \eqref{eq:prime problem}  is to parametrize the low-rank matrix as $\XX=\LL\RR^\top$, where $\LL \in \mathbb{\R}^{n_1\times r}$ and $\RR \in \mathbb{\R}^{n_2\times r}$, so that \eqref{eq:prime problem} is  reformulated as \cite{Burer2003nonlinear,Boumal2016nonconvex,li2016rapidrobustreliableblind,keshavan2009matrixcompletionentries}
\begin{equation}   \label{eq:parametrize problem}
    \min_{\LL\in \mathbb{\R}^{n_1\times r},\RR \in \mathbb{\R}^{n_2\times r}} \mathcal{L} {\left(\LL, \RR\right)}:=\frac{1}{2}\|\yy-\Aop\left(\LL \RR^\top\right)\|_2^2.
\end{equation}
Although \eqref{eq:parametrize problem} is non-convex, under the Gaussian design where each $\AAi$ is a standard Gaussian random matrix,  it has been shown  that simple gradient descent with spectral initialization converges linearly to the true solution, provided $m\ge O((n_1+n_2) r^2 )$ \cite{Ma_2021,charis}.
Compared with the optimal sample complexity $O (\left(n_1+n_2\right)r)$, this requirement is sub-optimal in its dependence on $r^2$.  Moreover,  the iteration complexity scales at least as $O(\kappa \log(1/\epsilon))$ to achieve $\epsilon$-accuracy, which leads to slow convergence when the target matrix is ill-conditioned. Here, $\kappa$ denotes the condition number of the unknown matrix. 

To accelerate convergence for ill-conditioned low-rank matrix recovery, Tong et al.\cite{JMLR:v22:20-1067,Tong2020LowRankMR} proposed the Scaled Gradient Descent (ScaledGD) algorithm, which achieves iteration complexity $O\left(\log(1/\epsilon)\right)$. Nonetheless, its sample complexity remains sub-optimal at $O ((n_1+n_2)r^2)$.  In contrast, St{\"o}ger and Zhu \cite{stoger2025nonconvexmatrixsensingbreaking} made  a major breakthrough by showing that standard GD  with proper spectral initialization enjoys  linear convergence even under  the information-theoretically optimal sample complexity $O (\left(n_1+n_2\right)r)$, but suffer from slower convergence with iteration complexity $O(\kappa^2 \log(1/\epsilon))$.  Furthermore, their guarantees require the target matrix to be positive semidefinite (PSD), which limits its applications. Motivated by these developments, we are led to the following question:

{\em Can scaled gradient descent for low-rank matrix recovery retain an iteration complexity $O(\log(1/\epsilon))$ while simultaneously achieving the optimal sample complexity $O (\left(n_1+n_2\right)r)$?}

 \subsection{Relate work}
 The low-rank matrix recovery problem, which seeks to reconstruct a rank-$r$ matrix $\XXstar \in \R^{n_1\times n_2}$ from a small number of linear measurements $y:=\Aop(\XXstar)\in \mathbb{R}^m$ with $m \ll n_1n_2$, has undergone intensive investigation in recent years. Over the past few decades, numerous algorithms with provable performance guarantees have been developed for this task. A prominent line of work is based on convex relaxation, which replaces the rank function with the nuclear norm $\|\cdot\|_*$  as a convex surrogate, thereby reformulating low-rank matrix recovery as a convex optimization problem. Such convex approaches have been extensively studied in matrix sensing\cite{recht2010guaranteed}, matrix completion\cite{hu2012fast}, and related problems\cite{10.1109/TIT.2013.2294644,10.1145/1970392.1970395}. These methods are known to achieve exact recovery under mild conditions when the number of measurements $m$ scales as $O((n_1+n_2)r)$ up to logarithmic factors, matching the information-theoretic sample complexity. However, because these methods operate over the full matrix space $\mathbb{R}^{n_1 \times n_2}$, their computational cost becomes prohibitive for large-scale problems.

To alleviate this computational burden, another research direction focuses on optimizing the nonconvex objective in \eqref{eq:parametrize problem} using gradient-based methods. For analytical convenience, early works typically introduced explicit regularization terms, such as $\frac{1}{2}\normf{\LL^\top \LL-\RR^\top \RR}^2$ or $\frac{1}{2}\normf{\LL}^2+\frac{1}{2}\normf{\RR}^2$, to balance the norms of the factor matrices $\LL$ and $\RR$ \cite{tu2016lowranksolutionslinearmatrix,park2016nonsquarematrixsensingspurious,Zhu_2018,chen2019noisymatrixcompletionunderstanding} . In sharp contrast, Li et al.\ \cite{Li_2020} demonstrated from a landscape perspective that such balancing regularization is unnecessary, and Ma et al.\ \cite{Ma_2021} showed that unregularized gradient descent converges linearly to the ground-truth matrix provided the initialization is balanced. Similar guarantees were later obtained in related works \cite{du2018algorithmicregularizationlearningdeep,charis}. Although standard gradient descent can theoretically converge to the ground truth, it suffers from a critical limitation: its iteration complexity scales at least linearly with the condition number $\kappa$ of the target matrix. Specifically, it requires $O(\kappa \log(1/\epsilon))$ iterations to reach $\epsilon$-accuracy, leading to slow convergence for ill-conditioned problems. To remedy this, Tong et al.\ \cite{JMLR:v22:20-1067,Tong2020LowRankMR} proposed the Scaled Gradient Descent (ScaledGD) algorithm, combined with spectral initialization, which achieves a fast, condition-number-independent iteration complexity $O\left(\log(1/\epsilon)\right)$. More recently, Jia et al.\ \cite{jia2023preconditioning} demonstrated that ScaledGD with random initialization converges to an $\epsilon$-global minimum in $O(\log(r/\delta)+\log(r/\epsilon))$ iterations, where $\delta$ is a sufficiently small constant. It is also worth noting that the alternating minimization algorithm \cite{jain2013low} and the singular value projection algorithm \cite{netrapalli2014non} enjoy the same $O\left(\log(1/\epsilon)\right)$ convergence rate as ScaledGD, but incur higher per-iteration computational costs: the former solves two least-squares subproblems per iteration, while the latter requires computing the top $r$ singular components of an $n_1\times n_2$ matrix.

Although several  algorithms exhibit fast convergence for ill-conditioned problems, their generally require a sample complexity that  scales at least quadratically in the rank $r$, which is sub-optimal. For instance, ScaledGD \cite{JMLR:v22:20-1067} has sample complexity $O(r^2\kappa^2(n_1+n_2))$, whereas alternating minimization \cite{jain2013low}  requires $O(\left(n_1+n_2\right)r^3 \kappa^4)$ samples. A recent breakthrough by St\"{o}ger and Zhu \cite{stoger2025nonconvexmatrixsensingbreaking} addressed this limitation in the context of low-rank positive semidefinite (PSD) matrix recovery. Using a delicate virtual sequence technique, they showed that vanilla gradient descent, with suitable initialization, achieves the information-theoretically optimal sample complexity $O(r(n_1+n_2)\kappa^2)$, albeit with a slower iteration complexity of $O(\kappa^2 \log(1/\epsilon))$. Building on the techniques developed in \cite{stoger2025nonconvexmatrixsensingbreaking}, a Riemannian Gradient Descent (RGD) method was proposed in \cite{caifast} that attains both the optimal sample complexity $O(r(n_1+n_2)\kappa^2)$ and a fast iteration complexity $O\left(\log(1/\epsilon)\right)$. However, RGD incurs higher memory and computational overhead, since it operates directly in the full matrix space rather than the factor space, and it additionally requires computing projection and retraction operations on a manifold.

In many practical applications, the true rank $r$ of $\XXstar$ is unknown, which naturally leads to studying low-rank recovery in the overparameterized regime. In this setting, one chooses a search rank $k$ for the factorization $\XX=\LL\RR^\top$, with $\LL\in \mathbb{R}^{n_1\times k}$ and $\RR\in \mathbb{R}^{n_2\times k}$,  where $k$ is  strictly larger than the true rank $r$. To ensure global convergence under overparameterization, several extensions of ScaledGD have been proposed. For instance, in the PSD setting,  Zhang et al. \cite{zhang2021preconditioned} generalized ScaledGD by introducing a damping factor $\lambda_t$ to control the singularity of the preconditioning matrix, whereas Xu et al.  \cite{xu2023power} employed a tunable hyperparameter $\lambda$ that remains constant across iterations. For additional developments on overparameterized matrix sensing, the reader is referred to related recent works \cite{geyer2020low,10.5555/3648699.3648862}.

\begin{table}[tp]
 \centering
  \fontsize{10}{16}\selectfont
\caption{Comparison of Non-Convex Methods for Low-Rank Matrix Sensing($n_1=n_2$)}
   \label{tab:method_compare}
  \centering
  \resizebox{\textwidth}{!}{
  \begin{tabular}{c|c|c|c}
    \textbf{Algorithm} & \textbf{Sample Complexity} & \textbf{Iterations} &\textbf{Cost}\\  
    \hline 
    ScaledGD \cite{JMLR:v22:20-1067}  &  $O(n_1r^2\kappa^2)$ & $O(\log\left(1/\ \epsilon\right))$ & $O(n^2_1r)$+$O(r^3)$\\  
    \hline  
    GD (PSD only)  \cite{stoger2025nonconvexmatrixsensingbreaking} & $O(n_1r\kappa^2)$ & $O(\kappa^2 \log\left(1/\ \epsilon\right))$    & $O(n^2_1r)$\\ 
    \hline 
    RGD \cite{caifast} & $O(n_1r\kappa^2)$ &  $O(\log\left(1/\ \epsilon\right))$ & $O(n^2_1r)$+$O(n_1r^2)$+$O(r^3)$\\  
        \hline 
    ScaledGD (this paper) & $O(n_1r\kappa^2)$ &  $O(\log\left(1/\ \epsilon\right))$  & $O(n^2_1r)$+$O(r^3)$\\  
    \hline 
  \end{tabular}
  }
\end{table}

\subsection{Our contributions}
As discussed earlier, almost all existing nonconvex methods based on matrix factorization require a sample complexity of
 $O(r^2\kappa^2(n_1+n_2))$, with the only exception being the work of St\"{o}ger and Zhu \cite{stoger2025nonconvexmatrixsensingbreaking}, who showed that $O(r(n_1+n_2)\kappa^2)$ Gaussian measurements suffice to ensure that vanilla gradient descent enjoys a linear convergence rate. However, their results apply only to PSD matrix sensing, and the step size still depends on the condition number of the low-rank matrix, which leads to slow convergence for ill-conditioned problems.

In this paper, the focus is on the more general problem of recovering an asymmetric low-rank matrix, and it is shown that
$O(r(n_1+n_2)\kappa^2)$ Gaussian measurements are sufficient to guarantee that ScaledGD converges linearly. Moreover, the iteration complexity is $O(\log(1/\epsilon))$ to reach $\epsilon$-accuracy. Compared with existing methods, the proposed result simultaneously achieves three desirable properties: the sample complexity matches the information-theoretic limit, the convergence rate is independent of the condition number of the low-rank matrix, and the per-iteration computational cost is low.  A comparison of several commonly used nonconvex algorithms is summarized in Table \ref{tab:method_compare}. It is worth emphasizing that, although RGD attains the same optimal sample and iteration complexities as the ScaledGD method developed in this paper, it incurs higher memory and computational overhead due to the additional projection and retraction operations on the manifold.

\subsection{Notations}
Throughout this paper, we use $\specnorm{\cdot}$ and $\normf{\cdot}$ to denote the operator norm and Frobenius norm of a matrix, respectively, and  $ \twonorm{\bm{v}}$  to denote the Euclidean norm of a vector $\bm{v}$.
The condition number of the true matrix $\XXstar$ is defined as
\begin{equation*} 
    \kappa:= \frac{\specnorm{\XXstar}}{\sigma_{\min} (\XXstar)},
\end{equation*}
where $\sigma_{\min} (\XXstar)$ is the smallest nonzero singular value of $\XXstar$.  The compact singular value decomposition (SVD) of $\XXstar$ is given by $\XXstar={\VV}_{\star} \Sg_{\star} {\WW}_{\star}^\top$, where  ${\VV}_{\star} \in \R^{n_1\times r}$,  ${\WW}_{\star} \in \R^{n_2\times r}$, and $\SSigma_{\star} \in \R^{r\times r}$ is a diagonal matrix whose diagonal entries are the singular value of $\XXstar$,  arranged in  non-increasing order.  Moreover, let ${\VV}_{\star, \perp} \in \R^{n_1 \times (n_1-r)}$ ( resp. ${\WW}_{\star, \perp} \in \R^{n_2 \times (n_2-r)}$)  denote  matrices  whose columns form orthonormal basis for the orthogonal complements of the  column spaces of ${\VV}_{\star}$ ( resp. ${\WW}_{\star}$).

\subsection{Organnization} \label{Organnization}
The rest of the paper is organized as follows. Section~\ref{sec:algorithms} introduces the proposed ScaledGD algorithm. In Section~\ref{sec:main result}, we present our main theoretical result, Theorem~\ref{thm:main}. Section~\ref{sec:proof main result} contains the proof of the main theorem, including the construction of virtual sequences and the key lemmas, while most technical details are deferred to the Appendix. Section~\ref{sec:experiment} illustrates the performance of ScaledGD on low-rank matrix recovery problems. Finally, Section~\ref{sec:discussions} concludes the paper with  a discussion of potential directions for future research.

\section{Scaled Gradient Descent}\label{sec:algorithms}
The program we consider is
\begin{equation}   \label{eq:parametrize}
    \min_{\LL\in \mathbb{\R}^{n_1\times r},\RR \in \mathbb{\R}^{n_2\times r}} \mathcal{L} {\left(\LL, \RR\right)}:=\frac{1}{2}\|\Aop\left(\LL \RR^\top\right)-\yy \|_2^2, \notag
\end{equation}
where $\Aop: \mathbb{R}^{n_1 \times n_2}\rightarrow \mathbb{R}^m$ is  linear operator  defined by
\begin{equation} \label{eq:obsevation}
 \left[\Aop (\XX) \right]_i :=  \frac1{\sqrt m} \nj{\AAi, \XX}, \quad  i=1,2,\ldots, m,  
\end{equation}
and $\yy:=\Aop(\XXstar)\in \mathbb{R}^m$ with $\rank(\XXstar)=r$. To solve it, we apply the scaled gradient descent developed in \cite{JMLR:v22:20-1067}, whose iteration updates are given by
\begin{align} \label{equ:gradientdescentdefinition}
    \qquad \qquad \qquad \LL_{t+1}:=&\LL_t - \mu \nabla_{\LL_t} \mathcal{L} \left(\LL_t, \RRt\right)\left(\RRtT\RRt\right)^{-1},\notag \\
    \RR_{t+1}:=&\RR_t - \mu \nabla_{\RR_t} \mathcal{L} \left(\LL_t, \RRt\right)\left(\LLtT\LLt\right)^{-1},
\end{align} 
where $\mu >0$ denotes the step size. A direct calculation shows that 
\[
    \nabla_{\LL_t} \mathcal{L} \left(\LL_t, \RRt\right) = \Aop^* \left( \Aop\left(\XXt\right)-\yy \right)   \RRt ,
 \]
    and 
    \[
        \nabla_{\RR_t} \mathcal{L} \left(\LL_t, \RRt\right)= \Aop^* \left( \Aop\left(\XXt\right) -\yy \right)^\top \LLt,
    \]
where $\XX_t:=\LL_t \RR_t^\top$, and $\Aop^*: \R^m \rightarrow \R^{n_1\times n_2}$ is the adjoint operator of $\Aop$ defined by
\begin{align*}
    \Aop^* (\vv) = \sum_{i=1}^m \vv_i \AAi  \quad \mbox{for any} \quad \vv\in \R^m.
\end{align*}
Due to the non-convexity of the problem, a good initialization $(\LL_0, \RR_0)$ is crucial. We adopt the  spectral method used in \cite{stoger2025nonconvexmatrixsensingbreaking,Tong2020LowRankMR,JMLR:v22:20-1067}. Specifically, let the top-$r$  singular value decomposition of  $ \Aop^* (\yy)$ be
\begin{align*}
    \Aop^* (\yy)
    &= \widetilde\VV\widetilde\SSigma \widetilde\WW^\top,
\end{align*}
where $\widetilde{\VV} \in \R^{n_1 \times r}$ and $\widetilde{\WW} \in \R^{n_2 \times r}$ contain the top-$r$ left and right singular vectors, respectively, and $\widetilde\SSigma \in \R^{r \times r}$ is a diagonal matrix with the corresponding singular values arranged in nonincreasing order. Then the initial guess $(\LL_0, \RR_0)$ is then chosen as
\begin{align*}
    \LL_0 =\widetilde \VV{\widetilde\SSigma}^{1/2}, \quad \mbox{and} \quad \RR_0=\widetilde \WW{\widetilde\SSigma}^{1/2}.
\end{align*}
The scaled gradient descent algorithm with spectral initialization is summarized in Algorithm \ref{algorithm:SGD}.
\begin{algorithm}
    \caption{Scaled Gradient Descent (ScaledGD) for Low-Rank Matrix Recovery}
    \label{algorithm:SGD}
    \begin{algorithmic}
    \STATE{\textbf{Input:}
     Measurement operator $\mathcal{A}: \R^{n_1 \times n_2}\to \R^{m}$,
     observations $\yy \in \R^m$, step size $\mu>0$, the number of iteration $T$.
    }
    \STATE{
        \textbf{Spectral Initialization:}
        Let  $ \widetilde\VV{\widetilde\SSigma} \widetilde\WW^\top $ be the top-$r$ SVD of  $  \mathcal{A}^* (\yy)$.  Define  $ \LL_0 :=  \widetilde \VV{\widetilde\SSigma}^{1/2} $ and $\RR_0=\widetilde \WW{\widetilde\SSigma}^{1/2}$.
    }
    \STATE{
        \textbf{Iteration:}
        \FOR{$t = 0, 1, 2,\ldots, T-1$}
        \STATE{
           \begin{align*}
               \LL_{t+1}  =&
    \LLt 
    + \mu \Aop^* \left(\yy- \Aop\left(\XXt\right) \right)   \RRt \left(\RRtT\RRt\right)^{-1}, \\
    \RR_{t+1} 
    =&
    \RRt + \mu \Aop^* \left( \yy-\Aop\left(\XXt\right) \right)^\top \LLt \left(\LLtT\LLt\right)^{-1}.
           \end{align*}
        }
        \ENDFOR
    }
     \STATE{\textbf{Output:}} $\XX_T:=\LL_{T}\RR_{T}^\top$.
    \end{algorithmic}
\end{algorithm}

\section{ Main result} \label{sec:main result}
In this section, we demonstrate that the ScaledGD for  low-rank matrix recovery converges linearly while achieving  the optimal sample complexity $O((n_1+n_2)r)$. In particular,  its  iteration complexity is $O(\log(1/\epsilon))$ to reach an $\epsilon$-accuracy solution. 
To begin, let $\VXX \SSigma_{\star} \WXXT$ denote the compact singular value decomposition of true matrix $\XXstar$, and define 
\begin{equation*}
    \LL_{\star}:=\VXX\SSigma_{\star}^{1/2},  \quad  \quad \RR_{\star}:=\WXX\SSigma_{\star}^{1/2}.
\end{equation*}
Note that for any invertible matrix $\QQ \in \R^{r\times r}$, one can write $\XXstar=\left(\LL_{\star}\QQ\right)\left(\RR_{\star}\QQ^{-\top}\right)^\top$. Therefore, to measure the discrepancy between the $t$-th iteration $(\LL_t, \RR_t)$ and the true factors $( \LL_{\star},  \RR_{\star})$, we adopt the following metric \cite{JMLR:v22:20-1067}:
\begin{equation}\label{eq:def_dist}
    {\dist}^2 \left( \XXt, \XXstar \right)
    :=
    \underset{\QQ \in \text{GL}(r)}{\inf}
    \normf{
        \left(\LLt \QQ - \LL_{\star} \right)\mathbf\SSigma^{1/2}_{\star}}^2
        +\normf{
        \left(\RRt \QQ^{-\top} - \RR_{\star} \right)\mathbf\SSigma^{1/2}_{\star}}^2,
\end{equation}
where $\XXt:=\LLt \RRtT \in \R^{n_1 \times n_2}$ and GL$(r)$ denotes the set of all invertible $r\times r$ matrices.
With this notation in place, our main result is as follows:

\begin{theorem}\label{thm:main}
Let $\XXstar \in \R^{n_1 \times n_2}$ with $\rank(\XXstar)=r$,  and let $\boldsymbol{A}_1,\ldots,\boldsymbol{A}_m \in \mathbb{R}^{n_1 \times n_2}$ be Gaussian random matrices with i.i.d. entries distributed as $ \mathcal{N} \left( 0,1 \right) $. Assume that  $\mathcal{A}: \R^{n_1 \times n_2} \to \R^m$ is the linear operator defined in \eqref{eq:obsevation}. Let $\{(\LL_t, \RR_t)\}_{t\ge 0}$  be the sequence generated by Algorithm \ref{algorithm:SGD} with $\yy = \Aop \left( \XXstar \right) \in \R^m $ and step size $c_{\mu} \le \mu \le \frac{1}{32}$. Then,  with probability at least $1-7\exp \left( -\left(n_1+n_2\right) \right)$, 
\begin{equation} \label{eq:maindist}
        \dist \left( \XXt, \XXstar \right) \le
      8\sqrt{r}\left(1-\frac{\mu}{10} \right)^{t}c_0 \sigma_{\min} (\XXstar),
    \end{equation}
and 
 \begin{align}
    \normf{\XXt-\XXstar} \le 12\sqrt{r}\left(1-\frac{\mu}{10} \right)^{t} c_0 \sigma_{\min} (\XXstar)
\end{align}
hold for all iterations $ t\ge 0$,  provided $ m \ge C \left(n_1+n_2\right) r \kappa^2$.  Here, $\XXt=\LLt \RRtT \in \R^{n_1 \times n_2}$, $\dist(\cdot)$ is defined in \eqref{eq:def_dist},  $C,c_0$ are absolute constants, and $c_{\mu}>0$ is a sufficiently small constant.
\end{theorem}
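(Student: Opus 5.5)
The plan is to combine the ScaledGD contraction analysis of \cite{JMLR:v22:20-1067} with the virtual-sequence (leave-one-out over a net) technique of \cite{stoger2025nonconvexmatrixsensingbreaking}, adapted to the asymmetric factorization. The reason RIP-based arguments need $m\gtrsim (n_1+n_2)r^2$ is that they control $(\Aops-\IdOp)$ uniformly over rank-$2r$ matrices and then pay a factor $\sqrt r$ when passing from spectral to Frobenius norms. Instead, we only need $\specnorm{(\Aops-\IdOp)(\XXt-\XXstar)}$ and its products with the factors. These quantities can be controlled at sample size $(n_1+n_2)r$ once we know that $\XXt$ depends only weakly on the measurement matrices in any fixed direction.

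\textbf{Step 1 (initialization).} Under $m\ge C(n_1+n_2)r\kappa^2$, we have $\specnorm{(\Aops-\IdOp)(\XXstar)}\lesssim \sqrt{(n_1+n_2)r/m}\,\specnorm{\XXstar}\le c_0\sigma_{\min}(\XXstar)$ with high probability. This follows from a standard net argument over unit vectors $(\ww,\vv)$, since $\nj{\AAi,\ww\vv^\top}$ is Gaussian. Combining Wedin's theorem with the argument of \cite{JMLR:v22:20-1067} (Lemma on spectral init) gives $\dist(\XX_0,\XXstar)\le 8\sqrt r c_0\sigma_{\min}(\XXstar)$. We also obtain the spectral-norm version $\specnorm{(\LL_0\QQ_0-\LL_\star)\SSigma_\star^{1/2}}\vee\specnorm{(\RR_0\QQ_0^{-\top}-\RR_\star)\SSigma_\star^{1/2}}\le c_0\sigma_{\min}$, which is what keeps the preconditioners $(\RRtT\RRt)^{-1}$ well conditioned.

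\textbf{Step 2 (virtual sequences).} For each $(\ww,\vv)$ in a $1/2$-net $\mathcal N$ of $\bS^{n_1-1}\times\bS^{n_2-1}$, define the operator $\Aopw$ by replacing the component of each $\AAi$ along $\ww\vv^\top$ with an independent copy, or dropping it via $\Projwperp$. Run ScaledGD with $\Aopw$, initialized spectrally from $\Aopw^*\Aopw(\XXstar)$, to produce $(\LLtw,\RRtw)$ and $\XXtw$. These iterates are independent of $\{\nj{\AAi,\ww\vv^\top}\}_i$. Conditioning on them, $\ww^\top(\Aops-\IdOp)(\XXtw-\XXstar)\vv$ is sub-exponential, and a union bound over $|\mathcal N|\le e^{C(n_1+n_2)}$ costs only $(n_1+n_2)/m$. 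Together with the bound on $\specnorm{\XXt-\XXtw}$, this yields
\begin{equation*}
\specnorm{(\Aops-\IdOp)(\XXt-\XXstar)}\lesssim \sqrt{\tfrac{(n_1+n_2)}{m}}\,\normf{\XXt-\XXstar}+\sup_{\mathcal N}\specnorm{\XXt-\XXtw}.
\end{equation*}

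\textbf{Step 3 (joint induction).} Induct on $t$ with three hypotheses: (a) $\dist(\XXt,\XXstar)\le 8\sqrt r(1-\mu/10)^t c_0\sigma_{\min}$ together with a spectral-norm analogue without $\sqrt r$; (b) the same bounds hold for every virtual sequence; (c) $\dist(\XXt,\XXtw)$, measured in the $\SSigma_\star^{1/2}$-weighted Procrustes metric, is at most $c(1-\mu/10)^t\sigma_{\min}/\kappa$-small, so that the $\kappa^2$ factor in $m$ absorbs the mismatch. For (a), follow \cite{JMLR:v22:20-1067}: choose the optimal alignment $\QQ_t$ and expand the scaled update into the population term, which contracts by a factor $1-0.6\mu$ in $\dist$, plus the error term $\mu(\Aops-\IdOp)(\XXt-\XXstar)\RRt(\RRtT\RRt)^{-1}\SSigma_\star^{1/2}$. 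The error term is bounded by Step 2, using $\normf{\MM\RRt(\RRtT\RRt)^{-1}\SSigma_\star^{1/2}}\le\sqrt r\specnorm{\MM}\cdot O(1)$, and this $\sqrt r$ is consistent with the $\sqrt r$ in the hypothesis. For (c), subtract the two recursions. The difference obeys the same local contraction, driven by $(\Aops-\Aopws)(\XXtw-\XXstar)$, which is small because the two operators differ only in one rank-one direction per measurement. That discrepancy is of size $\sqrt{(n_1+n_2)/m}$ times the current error.

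\textbf{Main obstacle.} The hardest step is (c). In the asymmetric case the alignment matrices $\QQ_t,\QQ_t^{(\ww,\vv)}$ lie in $\mathrm{GL}(r)$ rather than $O(r)$, and the preconditioners $(\RRtT\RRt)^{-1}$ and $(\RRtwT\RRtw)^{-1}$ must also be compared. I would first show that both sequences stay in a region where the optimal $\QQ$ exists and satisfies $\specnorm{\QQ-\II}$-type bounds, following the existence lemma in \cite{JMLR:v22:20-1067}. I would then apply a perturbation bound of the form $\specnorm{(\RRtT\RRt)^{-1}-(\RRtwT\RRtw)^{-1}}\lesssim\sigma_{\min}^{-2}\specnorm{\RRt-\RRtw}$ after alignment, so that every term in the difference recursion is linear in the virtual mismatch with a coefficient below $\mu/10$.

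Finally, the Frobenius bound follows from (a) through the standard inequality $\normf{\XXt-\XXstar}\le(1+\tfrac{3}{2}\cdot c)\,\dist(\XXt,\XXstar)$, valid when $\dist$ is small. The probability $1-7e^{-(n_1+n_2)}$ comes from collecting the union bounds used in Steps 1–3.
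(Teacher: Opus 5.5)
Your proposal has a genuine gap in Step~3(a), at the heart of the argument. You place ``a spectral-norm analogue without $\sqrt r$'' in the induction hypothesis, but the only mechanism you give for propagating it is the factor-level analysis of \cite{JMLR:v22:20-1067}. That analysis contracts only the Frobenius-type $\dist$: its cross terms are handled through the first-order optimality condition of the alignment $\QQ_t$ inside an inner-product expansion, which has no operator-norm counterpart. The $\dist$ recursion is also not where the difficulty lies, since with a constant RIP level its error term is already $O(\delta)\normf{\XXt-\XXstar}$ without any virtual sequences. What that recursion needs, and cannot supply once $\dist$ is as large as $\sqrt r\,c_0\sigma_{\min}(\XXstar)$, is exactly the spectral closeness you assume.

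The missing idea is to abandon the factors. By the $\mathrm{GL}(r)$-invariance of ScaledGD, \eqref{eq:Xt1} gives
\[
\XXtplus=\XXt+\mu\MM_t\VRRt\VRRtP+\mu\VLLt\VLLtP\MM_t+\mu^2\MM_t\VRRt\SSigma_t^{-1}\VLLtP\MM_t,\qquad \MM_t=\Aops(\XXstar-\XXt).
\]
From this alignment-free formula, Lemma~\ref{lemma:localconv} contracts $\specnorm{\VXXT(\XXstar-\XXt)}+\specnorm{(\XXstar-\XXt)\WXX}$ by $1-\mu/4$ up to $6\mu\specnorm{\EEb_t}$, with the block $\VXXPT\XXt\WXXP$ absorbed by Lemma~\ref{lemma:localconvaux}. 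The same formula, with projector and pseudoinverse perturbation bounds, controls the virtual mismatch. So the $\mathrm{GL}(r)$ alignments and preconditioner comparisons you single out as the main obstacle never arise. This spectral phase runs only for $T=\lceil\frac{10}{\mu}\log(10\sqrt r)\rceil$ steps. Then $\dist(\XX_T,\XXstar)\le\sqrt{2r}\sqrt{\sqrt2+1}\,\specnorm{\XX_T-\XXstar}\le 0.1\sigma_{\min}(\XXstar)$, and \cite[Lemma~14]{JMLR:v22:20-1067} takes over, so no $\dist$ recursion is needed before $T$.

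The coupling between spectral error and virtual mismatch is also set up in a way that cannot close. In Step~2 the mismatch enters as $\sup_{\mathcal N}\specnorm{\XXt-\XXtw}$ with an $O(1)$ coefficient. But the leftover term $\nj{\ww\vv^\top,(\Aops-\IdOp)(\XXt-\XXtw)}$ is controlled only by $\delta\normf{\XXt-\XXtw}$, and going through the spectral norm costs $\sqrt r$. The correct bound is Lemma~\ref{lemma:key}: $4c'\specnorm{\XXstar-\XXt}+6c'\supw\normf{\XXt-\XXtw}$. The small factor $c'$ is what makes the pair contract jointly in Lemma~\ref{Le:4_6}, because the mismatch recursion of Lemma~\ref{lemma:auxsequencecloseness} feeds back $\frac{\mu}{9}\specnorm{\XXstar-\XXt}$ with an absolute constant.

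Your substitute, hypothesis (c) at scale $\sigma_{\min}(\XXstar)/\kappa$, already fails at $t=0$. The difference $(\Aops-\Aopws)(\XXstar)$ contains the term $\nj{\ww\vv^\top,\XXstar}(\Aops-\IdOp)(\ww\vv^\top)$. Its coefficient is about $\specnorm{\XXstar}$ for net points near the top singular pair, and its projection onto the column space of $\XXstar$ has Frobenius norm of order $\sqrt{r(n_1+n_2)/m}$. Hence $\supw\normf{\XX_0-\XX_0^{(\ww,\vv)}}$ is generically of order $\sqrt{r(n_1+n_2)/m}\,\kappa\,\sigma_{\min}(\XXstar)$. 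That is only $c_0\sigma_{\min}(\XXstar)$ when $m\asymp(n_1+n_2)r\kappa^2$; reaching $\sigma_{\min}(\XXstar)/\kappa$ would require $m\gtrsim(n_1+n_2)r\kappa^4$. The paper needs only a mismatch of size $\lesssim c_0(1-\mu/10)^t\sigma_{\min}(\XXstar)$, tracked jointly with the projected spectral error through $\GG_{t,\star}$.
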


\begin{remark}
Theorem \ref{thm:main} implies that after $O\left(\frac{\log(\sqrt{r}/\epsilon)}{\mu}\right)$ iterations, ScaledGD  satisfies $\dist(\XXt, \XXstar) \le \epsilon \sigma_{\min}(\XXstar)$. In particular, the step size  is independent of the condition number $\kappa$ of $\XXstar$, which leads  to a fast convergence rate for ill-conditioned matrices.  Moreover, the result achieves the optimal sample complexity $O ((n_1+n_2) r \kappa^2)$ and applies to  the general asymmetric matrix setting, rather than being restricted to the PSD case \cite{stoger2025nonconvexmatrixsensingbreaking}.
\end{remark}

\section{Proof of the main result}\label{sec:proof main result}
In this section, we present the proof of the main result. Throughout, we assume that $\boldsymbol{A}_1,\ldots,\boldsymbol{A}_m \in \mathbb{R}^{n_1 \times n_2}$ are Gaussian random matrices with i.i.d. entries drawn from $ \mathcal{N} \left( 0,1 \right) $. We begin by recalling the Restricted Isometry Property (RIP)\cite{recht2010guaranteed,CandesRecht_2012,CandesRecht_2012,candes2011tight}, which plays a central  role in the analysis of low-rank matrix recovery problems.

\begin{definition}[RIP]\label{definition:RIP}
  A linear measurement operator $\Aop: \mathbb{R}^{n_1 \times n_2} \to \R^m$ 
   is said to satisfy the rank-$r$ RIP with a constant $\delta_r\in(0,1)$, if 
   \begin{equation*} 
    \left(1-\delta_r\right) \normf{\ZZ}^2
    \le 
    \twonorm{\mathcal{A} (\ZZ)}^2
    \le 
    \left(1+\delta_r\right) \normf{\ZZ}^2
   \end{equation*} 
   holds for all  $\ZZ \in \R^{n_1\times n_2}$ with  $\rank(\ZZ)\le r$.
\end{definition}

\begin{lemma}\cite{candes2011tight}\label{lem:rank_RIP}
Let $\boldsymbol{A}_1,\ldots,\boldsymbol{A}_m \in \mathbb{R}^{n_1 \times n_2}$ be Gaussian random matrices with i.i.d. entries distributed as $ \mathcal{N} \left( 0,1 \right) $, and assume $\mathcal{A}: \R^{n_1 \times n_2} \to \R^m$ is the linear operator defined in \eqref{eq:obsevation}. Then, for any $0<\varepsilon<1$, with probability $1-\varepsilon$, the operator $\mathcal{A}$ satisfies rank-$r$ RIP with constant $\delta_r$, provided 
\begin{align}\label{eq:lowerbound_m}
    m\geq C\delta_{r}^{-2} (r\left(n_1+n_2\right)+\log(2\varepsilon^{-1})),
\end{align}
where $C>0$ is a universal constant. In particular, if $m\geq C\delta_r^{-2} r\left(n_1+n_2\right)$, then with probability at least $1-\exp(-\left(n_1+n_2\right))$,  the operator $\mathcal{A}$ satisfies the rank-$r$ RIP with constant $\delta_r$.
\end{lemma}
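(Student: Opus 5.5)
The statement just above is Lemma~\ref{lem:rank_RIP}, the rank-$r$ RIP for Gaussian measurements, taken from \cite{candes2011tight}. I would prove it by the standard route: concentration for a single fixed matrix, a union bound over an $\epsilon$-net of the set of unit-norm rank-$r$ matrices, and a bootstrap from the net to the whole set.

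\emph{Step 1 (pointwise concentration).} Fix $\ZZ$ with $\normf{\ZZ}=1$. Then $\nj{\AAi,\ZZ}\sim\mathcal N(0,1)$ independently across $i$, so $m\twonorm{\Aop(\ZZ)}^2$ is $\chi^2_m$. Bernstein's (or Laurent--Massart's) inequality gives
\begin{equation*}
\mathbb P\left(\left|\twonorm{\Aop(\ZZ)}^2-1\right|>t\right)\le 2\exp(-c\,m t^2)
\end{equation*}
for $t\in(0,1)$.

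\emph{Step 2 (net).} Let $S_r=\{\ZZ:\rank(\ZZ)\le r,\ \normf{\ZZ}=1\}$. Write $\ZZ=\UU\SSigma\VV^\top$ and cover the three factors separately: the singular values by a net of the unit sphere in $\R^r$, and $\UU$, $\VV$ by nets of the set of orthonormal-column matrices in the $\|\cdot\|_{1,2}$ norm (maximum column norm). This produces a $\epsilon$-net $\bar S$ of $S_r$ in Frobenius norm with $|\bar S|\le(9/\epsilon)^{(n_1+n_2+1)r}$. Apply Step~1 with $t=\delta_r/2$ and take a union bound over $\bar S$. Every element of the net is then controlled with probability at least
\begin{equation*}
1-2\exp\big((n_1+n_2+1)r\log(9/\epsilon)-c\,m\delta_r^2\big).
\end{equation*}

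\emph{Step 3 (extension to all of $S_r$).} Let $K=\sup_{\ZZ\in S_r}\twonorm{\Aop(\ZZ)}$. For $\ZZ\in S_r$ pick $\bar\ZZ\in\bar S$ with $\normf{\ZZ-\bar\ZZ}\le\epsilon$. The difference $\ZZ-\bar\ZZ$ has rank at most $2r$, so it splits into two orthogonal rank-$r$ pieces, and therefore
\begin{equation*}
\twonorm{\Aop(\ZZ-\bar\ZZ)}\le\sqrt2\,\epsilon K.
\end{equation*}
Taking the supremum gives $K\le\sqrt{1+\delta_r/2}+\sqrt2\epsilon K$, which solves to $K\le 1+\delta_r$ once $\epsilon$ is a small multiple of $\delta_r$. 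The lower bound follows the same way.

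With $\epsilon\asymp\delta_r$, the $\log(1/\delta_r)$ factor from the net size is absorbed into $C\delta_r^{-2}$, for instance by requiring $m\ge C\delta_r^{-2}\log(1/\delta_r)\,r(n_1+n_2)$ and treating $\delta_r$ as a constant as the paper does. The failure probability is then at most $\varepsilon$ provided $m\ge C\delta_r^{-2}\big(r(n_1+n_2)+\log(2\varepsilon^{-1})\big)$, which is \eqref{eq:lowerbound_m}. Setting $\varepsilon=e^{-(n_1+n_2)}$ makes $\log(2\varepsilon^{-1})\le 2r(n_1+n_2)$, which gives the particular case.

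The main obstacle is Step~3. A naive bootstrap does not close, because $\ZZ-\bar\ZZ$ has rank up to $2r$ rather than $r$. This is why the argument has to be arranged self-referentially through $K$, with the rank-$2r$ difference split into two orthogonal rank-$r$ parts so that the bound is again in terms of $K$.
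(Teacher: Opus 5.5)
The paper gives no proof of this lemma and only cites Cand\`es--Plan. Your Steps 1--3 are the standard net-and-bootstrap argument behind that result, and they are sound: the $\chi^2_m$ tail, the $(9/\epsilon)^{(n_1+n_2+1)r}$ covering of $S_r$, and the self-referential bound through $K$ using the orthogonal split of the rank-$2r$ difference. One small point: the paper's RIP is in squared form, so you need $K^2\le 1+\delta_r$, not $K\le 1+\delta_r$. Taking $\epsilon=\delta_r/(8\sqrt2)$ gives this, and the matching lower bound, directly.

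\textbf{The gap is in your last paragraph.} The bootstrap error $\sqrt2\,\epsilon K$ forces $\epsilon\asymp\delta_r$, so $\log|\bar S|\asymp r(n_1+n_2)\log(1/\delta_r)$. What your argument actually proves is
\begin{equation*}
m\ge C\delta_r^{-2}\big(r(n_1+n_2)\log(1/\delta_r)+\log(2\varepsilon^{-1})\big).
\end{equation*}
With $C$ universal, as the statement requires, the factor $\log(1/\delta_r)$ cannot be ``absorbed into $C\delta_r^{-2}$''. Treating $\delta_r$ as fixed proves a weaker lemma. That weaker version is enough for the paper, which only uses absolute constants ($\delta_{2r}\le 0.02$, $\delta_{4r+1}\le c'$), but it is not what is stated.

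\textbf{How to remove the log.} Replace the fine net by Gaussian comparison. Write $\Aop=m^{-1/2}\boldsymbol{G}$ with $\boldsymbol{G}\in\R^{m\times n_1n_2}$ standard Gaussian, and view $S_r$ inside the unit sphere. Gordon's inequality (with Sudakov--Fernique for the upper side) gives
\begin{equation*}
\sqrt m-1-w\le\mathbb{E}\inf_{\ZZ\in S_r}\twonorm{\boldsymbol{G}\ZZ}\le\mathbb{E}\sup_{\ZZ\in S_r}\twonorm{\boldsymbol{G}\ZZ}\le\sqrt m+w .
\end{equation*}
Here $w=\mathbb{E}\sup_{\ZZ\in S_r}\nj{\boldsymbol{H},\ZZ}\le\sqrt r\,\mathbb{E}\specnorm{\boldsymbol{H}}\le\sqrt{2r(n_1+n_2)}$, where $\boldsymbol{H}$ is an $n_1\times n_2$ standard Gaussian matrix.

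Both extrema are $1$-Lipschitz functions of $\boldsymbol{G}$ in Frobenius norm. Apply Gaussian concentration to each at deviation $\sqrt{2\log(2\varepsilon^{-1})}$. Then, with probability at least $1-\varepsilon$,
\begin{equation*}
\sup_{\ZZ\in S_r}\big|\twonorm{\Aop(\ZZ)}-1\big|\le\frac{1+\sqrt{2r(n_1+n_2)}+\sqrt{2\log(2\varepsilon^{-1})}}{\sqrt m}\le\frac{\delta_r}{3},
\end{equation*}
where the last inequality holds under \eqref{eq:lowerbound_m} with $C$ large. Squaring yields the RIP with constant $\delta_r$ in exactly the stated form, including the $\log(2\varepsilon^{-1})$ term.
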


\subsection{The main idea of the proof}
Under a mild RIP condition, prior local convergence theory \cite[Lemma 14]{JMLR:v22:20-1067} shows that once  $\text{dist} \left( \XX_T, \XXstar \right)  \le  0.1\sigma_{\min} (\XXstar)$ holds for some $T\ge 0$, ScaledGD then converges linearly to the true matrix $\XXstar$, as stated below. 

\begin{lemma}{\cite[Lemma 14]{JMLR:v22:20-1067}} \label{lemma:Phase2}
   Assume that the measurement operator $\Aop$ defined in \eqref{eq:obsevation} satisfies rank $2r$ RIP   with constant $\delta_{2r}\le 0.02$. Let $\{(\LL_t, \RR_t)\}_{t\ge 0}$  be the  sequence generated by Algorithm \ref{algorithm:SGD} with  $\yy = \Aop \left( \XXstar \right) \in \R^m $ and step size $0 <\mu \le 2/3$. If 
   \begin{equation}\label{ineq:closenesscondition}
    \dist \left( \XX_T, \XXstar \right)  \le  0.1\sigma_{\min} (\XXstar)
   \end{equation} 
   for some iteration number $T\ge 0$, then for all $t \ge T$ it holds  that 
   \begin{equation*}
    \dist \left( \XXt, \XXstar \right)
    \le  \left(1- 0.6\mu\right)^{t-T}
    \dist(\XX_T, \XXstar)
   \end{equation*}
   and 
   \[
   \normf{\XX_t-\XXstar} \le 1.5  \dist\left( \XXt, \XXstar \right).
   \]
\end{lemma}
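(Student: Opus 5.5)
The plan follows the local analysis of ScaledGD in \cite{JMLR:v22:20-1067}. I argue by induction on $t\ge T$. The inductive hypothesis is $\dist(\XXt,\XXstar)\le 0.1\sigma_{\min}(\XXstar)$, which holds at $t=T$ by \eqref{ineq:closenesscondition}. A one-step contraction
\[
\dist(\XX_{t+1},\XXstar)\le (1-0.6\mu)\dist(\XXt,\XXstar)
\]
then both propagates the hypothesis and gives the claimed rate.

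\textbf{Alignment and error terms.} First I show that the infimum in \eqref{eq:def_dist} is attained by some $\QQ_t\in \mathrm{GL}(r)$ when $\dist(\XXt,\XXstar)<\sigma_{\min}(\XXstar)$. Coercivity rules out $\QQ$ escaping to infinity or to singularity, because either would make one of the two terms exceed $\sigma_{\min}^2(\XXstar)$. Set $\LL:=\LLt\QQ_t$, $\RR:=\RRt\QQ_t^{-\top}$, $\Delta_L:=\LL-\LL_\star$ and $\Delta_R:=\RR-\RR_\star$. Differentiating in $\QQ$ at the optimum gives the balancing condition
\[
\LL^\top\Delta_L\SSigma_\star=\SSigma_\star\Delta_R^\top\RR .
\]
The hypothesis yields $\specnorm{\Delta_L\SSigma_\star^{-1/2}},\specnorm{\Delta_R\SSigma_\star^{-1/2}}\le 0.1$, so Weyl-type arguments give
\[
\specnorm{\RR(\RR^\top\RR)^{-1}\SSigma_\star^{1/2}}\le 1/0.9,
\]
and the same bound for $\LL$.

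\textbf{One-step contraction.} I bound $\dist(\XX_{t+1},\XXstar)$ by plugging in the suboptimal alignment $\QQ_t$. Since ScaledGD is equivariant under $\LL\mapsto\LL\QQ$, $\RR\mapsto\RR\QQ^{-\top}$, we get
\[
\LL_{t+1}\QQ_t-\LL_\star=\Delta_L-\mu\,\Aops(\LL\RR^\top-\XXstar)\RR(\RR^\top\RR)^{-1},
\]
with the symmetric formula for $\RR$. Write $\Aops=\IdOp+(\Aops-\IdOp)$ and use $\LL\RR^\top-\XXstar=\Delta_L\RR^\top+\LL_\star\Delta_R^\top$.

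The population part is
\[
(1-\mu)\Delta_L-\mu\LL_\star\Delta_R^\top\RR(\RR^\top\RR)^{-1}.
\]
I expand its squared $\SSigma_\star^{1/2}$-weighted Frobenius norm together with the $\RR$ counterpart. The cross terms are controlled via the balancing condition, which makes them nonnegative up to higher-order terms of size $O(0.1)\cdot\dist^2$. This yields the bound $(1-0.7\mu)^2\dist^2(\XXt,\XXstar)$ for $\mu\le 2/3$.

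For the perturbation part I use the variational form of RIP. For $\ZZ$ of rank at most $2r$ and any $\VV$,
\[
\normf{(\Aops-\IdOp)(\ZZ)\VV}=\sup_{\normf{\GG}=1}\nj{(\Aops-\IdOp)\ZZ,\GG\VV^\top}\lesssim\delta_{2r}\normf{\ZZ}\specnorm{\VV}.
\]
Here $\GG\VV^\top$ has rank at most $r$, and I expect to have to be careful to stay within rank-$2r$ RIP, or to accept a slightly larger constant. Taking $\VV=\RR(\RR^\top\RR)^{-1}\SSigma_\star^{1/2}$ and $\normf{\LL\RR^\top-\XXstar}\le 1.5\dist$ makes this term at most $\mu\cdot 0.02\cdot 1.5/0.9\cdot\sqrt2\,\dist$. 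Adding the two pieces gives $(1-0.6\mu)\dist$.

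\textbf{Second claim.} From $\XXt-\XXstar=\Delta_L\RR^\top+\LL_\star\Delta_R^\top$ I bound
\[
\normf{\Delta_L\RR^\top}\le\normf{\Delta_L\SSigma_\star^{1/2}}\specnorm{\SSigma_\star^{-1/2}\RR^\top}\le 1.1\,\normf{\Delta_L\SSigma_\star^{1/2}},
\]
and handle the other term similarly. Cauchy–Schwarz then gives a constant $1.1\cdot\sqrt2\cdot(\text{small correction})\le 1.5$.

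The main obstacle is the population cross-term estimate. It needs the exact first-order optimality condition of $\QQ_t$, together with careful second-order bookkeeping using $\specnorm{\Delta\SSigma_\star^{-1/2}}\le0.1$, so that the $\kappa$-free contraction $1-0.7\mu$ survives. The first thing I would try is to write $\RR(\RR^\top\RR)^{-1}=(\RR_\star+\Delta_R)(\cdots)^{-1}$, expand in powers of $\Delta_R\SSigma_\star^{-1/2}$, and cancel the leading cross term via the balancing identity.
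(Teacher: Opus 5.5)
Your architecture matches the proof of Tong et al.'s Lemma~14, which the paper only cites: optimal alignment $\QQ_t$ with the balancing condition $\LL^\top\Delta_L\SSigma_\star=\SSigma_\star\Delta_R^\top\RR$, equivariance of ScaledGD, a $(1-0.7\mu)$ population contraction, and an RIP bound on the perturbation. As sketched, though, the population step does not give the claimed rate on the whole range $\mu\le 2/3$.

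Expand the $\LL$-part as $(1-\mu)^2\normf{\Delta_L\SSigma_\star^{1/2}}^2-2\mu(1-\mu)T_2+\mu^2T_3$, where $T_2=\nj{\Delta_L\SSigma_\star^{1/2},\LL_\star\Delta_R^\top\RR(\RR^\top\RR)^{-1}\SSigma_\star^{1/2}}$ and $T_3=\normf{\LL_\star\Delta_R^\top\RR(\RR^\top\RR)^{-1}\SSigma_\star^{1/2}}^2$. The quadratic term $T_3$ is not higher order: it is about $\normf{\WXX^\top\Delta_R\SSigma_\star^{1/2}}^2$. Your sketch does not say how it is handled. Suppose you only use that $T_2$ is nonnegative up to $O(0.1)\dist^2$ and drop it, or you expect the leading cross term to ``cancel'' as your last paragraph suggests. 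Then you are left with roughly $(1-\mu)^2+\mu^2$, which is below $(1-0.7\mu)^2$ only for $\mu\lesssim 0.4$, even before the $O(0.1)$ corrections.

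The missing point is that the leading cross term is a positive square that must be kept, because it is exactly what absorbs $T_3$. Set $\mathcal{P}:=\RR(\RR^\top\RR)^{-1}\RR^\top$ and $\epsilon=0.1$. Substituting $\LL_\star=\LL-\Delta_L$ and the balancing condition gives $T_2=\normf{\mathcal{P}\Delta_R\SSigma_\star^{1/2}}^2-\operatorname{tr}\big(\SSigma_\star^{1/2}\Delta_L^\top\Delta_L\Delta_R^\top\RR(\RR^\top\RR)^{-1}\SSigma_\star^{1/2}\big)$. The trace is at most $\frac{\epsilon}{1-\epsilon}\normf{\Delta_L\SSigma_\star^{1/2}}\normf{\Delta_R\SSigma_\star^{1/2}}$. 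Meanwhile $T_3\le(1-\epsilon)^{-2}\normf{\mathcal{P}\Delta_R\SSigma_\star^{1/2}}^2$. The common square therefore carries the coefficient $-2\mu(1-\mu)+(1-\epsilon)^{-2}\mu^2=\mu(3\mu-2)+O(\epsilon)\mu^2$. This is at most $O(\epsilon)\mu^2$ precisely when $\mu\le 2/3$, which is where the threshold comes from. For the paper's actual use, $\mu\le 1/32$, your cruder bound would suffice, but the statement claims $\mu\le 2/3$.

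Two smaller repairs. First, the perturbation does not need $\delta_{3r}$. The bound $\normf{(\Aops-\IdOp)(\ZZ)\VV}\le\delta_{2r}\normf{\ZZ}\specnorm{\VV}$ holds for $\rank(\ZZ)\le r$, since the test matrix $\GG\VV^\top$ has rank at most $r$. Apply it separately to the rank-$r$ pieces $\Delta_L\RR^\top$ and $\LL_\star\Delta_R^\top$, as Tong et al.\ do. The perturbation is then about $0.05\mu\,\dist(\XXt,\XXstar)$, inside the $0.1\mu$ margin. Second, in the Frobenius bound $1.1\sqrt2\approx1.56>1.5$, so do not inflate both coefficients. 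You have $\normf{\LL_\star\Delta_R^\top}=\normf{\Delta_R\SSigma_\star^{1/2}}$ exactly, and Cauchy--Schwarz with weights $(1.1,1)$ gives $\sqrt{2.21}\approx1.49$. Alternatively, expand $\XXt-\XXstar=\Delta_L\RR_\star^\top+\LL_\star\Delta_R^\top+\Delta_L\Delta_R^\top$ to get $(1+\epsilon/2)\sqrt2<1.5$.
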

According to Lemma \ref{lemma:Phase2}, to guarantee the linear convergence of ScaleGD, it suffices to verify that $\Aop$ satisfies the rank-$2r$ RIP with constant $\delta_{2r}\le 0.02$, and that  condition \eqref{ineq:closenesscondition} holds for some $T>0$. Lemma \ref{lem:rank_RIP} shows that  under Gaussian design and when $m \ge O(\delta_{2r}^{-2}(n_1+n_2)r)$, the operator $\Aop$ satisfies the rank-$2r$ RIP with constant $\delta_{2r}$ with high probability. Hence, the requirement $\delta_{2r} \le 0.02$ can be easily met with optimal sample complexity.

The main difficulty lies in ensuring that \eqref{ineq:closenesscondition} holds for some $T>0$ under the sample complexity $m=O((n_1+n_2)r)$. Existing works such as \cite{JMLR:v22:20-1067} simply take $T=0$ and use spectral initialization to guarantee \eqref{ineq:closenesscondition}, at the price of requiring $m\ge O((n_1+n_2)r^2\kappa^2)$. In particular, under spectral initialization, Lemma 15 of \cite{JMLR:v22:20-1067} shows that
\begin{equation} \label{eq:distx0x}
 \text{dist} \left( \XX_0, \XXstar \right)  \le \sqrt{\sqrt{2}+1}  \normf{\XX_0 - \XXstar} \le 3 \sqrt{r}  \twonorm{\XX_0 - \XXstar} \le 6 \delta_{2r} \sqrt{r}  \kappa \sigma_{\min} (\XXstar).
\end{equation}
Therefore, to ensure \eqref{ineq:closenesscondition}, one needs $\delta_{2r} \le O(1/(\kappa \sqrt{r}))$, which in turn forces  $m\ge O((n_1+n_2)r^2\kappa^2)$, a sub-optimal sample complexity.

In this paper,  we aim to show that \eqref{ineq:closenesscondition} still holds even when $m=O((n_1+n_2)r\kappa^2)$.  From Lemma \ref{lemma:spectralinitialization}, under this sample complexity $m=O((n_1+n_2)r\kappa^2)$, we obtain
\[
\twonorm{\XX_0 - \XXstar} \le c_0 \sigma_{\min} (\XXstar).
\]
If, in addition,  we can establish a linear contraction in the operator norm, namely,
\begin{equation} \label{eq:2normcont}
\twonorm{\XX_t - \XXstar} \le (1-\rho)^{t} \twonorm{\XX_{0} - \XXstar}
\end{equation}
for some constant $0<\rho<1$, then after $T:=O(\log(\sqrt r))$ iterations we obtain 
\[
\twonorm{\XX_T- \XXstar} \lesssim \sigma_{\min} (\XXstar)/\sqrt{r}.
\]
Arguing as in \eqref{eq:distx0x}, this implies
\[
 \text{dist} \left( \XX_T, \XXstar \right)  \le \sqrt{\sqrt{2}+1}  \normf{\XX_T - \XXstar} \le 3 \sqrt{r}  \twonorm{\XX_T - \XXstar} \lesssim \sigma_{\min} (\XXstar),
\]
which yields condition \eqref{ineq:closenesscondition}. Applying Lemma~\ref{lemma:Phase2} then gives linear convergence of ScaledGD with optimal sample complexity.

Establishing \eqref{eq:2normcont}, however, is itself nontrivial. Since the contraction is in the operator norm rather than the Frobenius norm, a more delicate error analysis is required, and standard arguments based solely on RIP no longer suffice (see Section~\ref{sec:virtualsq} for details). To overcome this, we employ the decoupling technique based on virtual sequences developed by St\"{o}ger and Zhu in \cite{stoger2025nonconvexmatrixsensingbreaking}, and show that after
\begin{equation} \label{eq:defT}
    T :=   \Big\lceil \frac{10}{\mu } \log \left(  10\sqrt{r}  \right)     
    \Big\rceil
\end{equation}
iterations, the iterate $\XX_T$ generated by  ScaledGD satisfies condition \eqref{ineq:closenesscondition}. Here, $\mu$ is the step size and $r$ is the rank of the target matrix $\XXstar$.

In summary, the proof of the main theorem proceeds in three steps:
\begin{itemize}
\item[(i)] Introduce a refined decoupling framework based on virtual sequences and derive several sharp error bounds (Subsection~\ref{sec:virtualsq}).
\item[(ii)] Use these virtual sequences to establish convergence of $\XX_t$ to $\XXstar$ in operator norm (Subsection~\ref{subsection:mainconvergencelemma}).
\item[(iii)] Combine these ingredients to complete the proof of the main result (Subsection~\ref{secproofmainresult}).
\end{itemize}

\subsection{Virtual sequences} \label{sec:virtualsq}
A key step in establishing contraction of $\twonorm{\XX_t - \XXstar}$ in the operator norm is to show 
\begin{equation} \label{eq:2norm}
\specnorm{ \left(\Aops-\IdOp\right) (\XXstar-\XXt)} \lesssim \specnorm{\XXstar-\XXt}.
\end{equation}
Standard arguments based on the RIP yield only
\begin{equation} \label{eq:Fnorm}
\specnorm{ \left(\Aops-\IdOp\right) (\XXstar-\XXt)} \lesssim \normf{\XXstar-\XXt} \le \sqrt{2r}\specnorm{\XXstar-\XXt},
\end{equation}
where the last inequality is due to $\mbox{rank}(\XXstar-\XXt)\le 2r$. This leaves an undesirable $O(\sqrt{r})$ gap  between \eqref{eq:2norm} and \eqref{eq:Fnorm}.  To close this gap, the operator norm is estimated directly via an $\epsilon$-net argument combined with a decoupling technique. Let ${\mathbb S}^{n-1}:= \left\{ \xx \in \R^{n}: \twonorm{\xx} =1 \right\}$ be the unit sphere in $\R^n$. There  exists a $1/4$-net 
\begin{equation} \label{eq:fixedNep}
\mathcal{N} \subset \bS^{n_1-1} \times  \bS^{n_2-1} 
\end{equation}
with the cardinality $|{\mathcal N} | \le 12^{n_1+n_2}$.  For any matrix $\ZZ\in \R^{n_1\times n_2}$,  by a standard $\epsilon$-net bound on the spectral norm \cite[Excise 4.4.3]{HDP},
\begin{equation*}
    \specnorm{\ZZ}=\underset{\left(\ww, \vv\right) \in \bS}{\text{sup}} \nj{\ww \vv^\top, \ZZ} \le 2\underset{\left(\ww, \vv\right) \in \mathcal{N}}{\text{sup}} \nj{\ww \vv^\top, \ZZ},
\end{equation*}
where $\bS:=\bS^{n_1-1} \times  \bS^{n_2-1} $. Hence,
\[
\specnorm{ \left(\Aops-\IdOp\right) (\XXstar-\XXt)} \le \frac2m \underset{\left(\ww, \vv\right) \in \mathcal{N}}{\text{sup}} \sum_{i=1}^m  \xkh{\nj{\AAi,\XXstar-\XXt} \nj{\AAi,\ww \vv^\top}-\nj{\XXstar-\XXt,\ww \vv^\top}}
\]
Due to the stochastically dependent between the iteration $\XXt$ and $\dkh{\nj{\AAi,\ww \vv^\top}}_{i=1}^m$,  an expectation-type bound  $O(\specnorm{\XXstar-\XXt})$ cannot be obtained directly. To decouple this dependence, for each pair $\left(\ww, \vv\right) \in \mathcal{N}$,  define
\[
 \Projwperp (\AAi):= \AAi- \nj{\AAi,\ww\vv^\top}\ww\vv^\top.
\]
Since each $\AAi$ is a standard Gaussian random matrix, the family $\dkh{\Projwperp (\AAi)}_{i=1}^m$ is independent of 
 $\dkh{\nj{\AAi,\ww \vv^\top}}_{i=1}^m$.  Therefore, we can use $\dkh{\Projwperp (\AAi)}_{i=1}^m$ to construct an auxiliary virtual sequence $\dkh{\XX_{t}^{(\ww,\vv)}}_{t\ge 0}$ such that they are  stochastically independent of $\dkh{\nj{\AAi,\ww \vv^\top}}_{i=1}^m$ and sufficiently close to the original sequence $\dkh{\XX_{t}}_{t\ge 0}$. More precisely,  define a virtual measurement operator $\Aopw : \R^{n_1 \times n_2} \rightarrow \R^{m+1}  $ by
\[
 [\Aopw (\ZZ)]_i := \left\{ \begin{array}{ll}  \frac{1}{\sqrt{m}} \nj{\AAiw, \ZZ}, & \mbox{for} \quad i \in [m] \\
  \nj{\ww \vv^\top, \ZZ}, & \mbox{for} \quad i=m+1. \end{array} \right.
\]
Here, $\AAiw := \Projwperp (\AAi)$, and  the $(m+1)$-th coordinate records the component in the direction spanned by  $\ww \vv^\top$. Then,  following the same procedure as Algorithm \ref{algorithm:SGD},  for each pair $\left(\ww, \vv\right) \in \mathcal{N}$, a virtual sequence $\dkh{\XX_{t}^{(\ww,\vv)}}_{t\ge 0}$ is generated by replacing $\mathcal A$ with $\Aopw$. The details are summarized in Algorithm \ref{algorithm:SGDvirtual}.  By construction,  the entire sequences  $ \left( \LLtw \right)_{t\ge0}$ and $ \left( \RRtw \right)_{t\ge 0}$  are stochastically independent of $ \left( \nj{\AAi, \ww \vv^\top} \right)_{i=1}^m$, which is the key decoupling property used in the subsequent analysis.

\begin{algorithm}
    \caption{The virtual sequence corresponds to $\left(\ww, \vv\right)$}
    \label{algorithm:SGDvirtual}
    \begin{algorithmic}
    \STATE{\textbf{Input:}
     Measurement operator $\Aopw: \R^{n_1 \times n_2}\to \R^{m+1}$, step size $\mu>0$, the number of iteration $T$.
    }
    \STATE{
        \textbf{Spectral Initialization:}
        Let  $\widetilde\VV^{(\ww,\vv)}\widetilde\LLambda^{(\ww,\vv)} \left({\widetilde\WW^{(\ww,\vv)}}\right)^\top$ be the top-$r$ singular value decomposition of  $  \Aopws \left(\XXstar \right)$.  Then define  $ \LL_{0}^{(\ww,\vv)}:={\widetilde \VV}_{r}^{(\ww,\vv)}\left({{\widetilde\LLambda}_{r}^{(\ww,\vv)}}\right)^{1/2}$ and $\RR_{0}^{(\ww,\vv)}:={\widetilde \WW}_{r}^{(\ww,\vv)}\left({\widetilde\LLambda_{r}^{(\ww,\vv)}}\right)^{1/2}$.
    }
    \STATE{
        \textbf{Iteration:}
        \FOR{$t = 0, 1, 2,\ldots, T-1$}
        \STATE{
           \begin{align*}
              \LL_{t+1}^{(\ww,\vv)}:= &  \LLtw + \mu \Aop^* \left( \yy-\Aop \left(\XXtw\right)\right)  \RRtw\left(\RRtwT\RRtw\right)^{-1} \\
    \RR_{t+1}^{(\ww,\vv)}
:= &
\RRtw + \mu \Aop^* \left( \yy-\Aop \left(\XXtw\right)\right)^\top \LLtw\left(\LLtwT\LLtw\right)^{-1}.  
           \end{align*}
        }
        \ENDFOR
    }
     \STATE{\textbf{Output:}} $\dkh{\XX_{t}^{(\ww,\vv)}:=\LL_{t+1}^{(\ww,\vv)} {\RR_{t+1}^{(\ww,\vv)}}^\top}_{t\ge 0}$.
    \end{algorithmic}
\end{algorithm}

With the help of virtual sequences, the quantity $\specnorm{ \left(\Aops-\IdOp\right) (\XXstar-\XXt)}$ can be tightly controlled, as shown in \cite[Lemmas 4 and 5]{caifast}.

\begin{lemma} \cite{caifast} \label{lemma:key}
Let $\mathcal{N}$ be as in \eqref{eq:fixedNep} and let $\dkh{\XX_{t}^{(\ww,\vv)}}_{t\ge 0}$ be the virtual sequence constructed for each $\left(\ww,\vv\right) \in \mathcal{N}$ by  Algorithm \ref{algorithm:SGDvirtual}. Then, with probability at least $1-2\exp \left(-2\left(n_1+n_2\right) \right)$, for all $\left(\ww,\vv\right) \in \mathcal{N}$ and all $0\le t\le T-1$,
    \begin{equation*} \label{eq: front key idea}
    \vert   \nj{\ww \vv^\top, \left(\Aops \right) \left( \Projwperp \left( \XXstar - \XXtw  \right)  \right)  }\vert \\
    \le 
    4 \sqrt{\frac{n_1+n_2}{m}} \twonorm{ \Aop \left( \Projwperp \left( \XXstar - \XXtw \right) \right) }.
    \end{equation*}
    Here, $T$ is defined in \eqref{eq:defT}. In particular, if $\mathcal{A}$ satisfies RIP of order $4r+1$ with constant $\delta$,  then for all $\left(\ww,\vv\right) \in \mathcal{N}$ and $0\le t\le T-1$,
        \begin{equation*}   \label{eq:key idea}
        \specnorm{ \left( \Aops -\IdOp \right) \left( \XXstar - \XXt \right)  }
        \le  4c^{\prime} \specnorm{ \XXstar - \XXt }+  6c^{\prime} \supw  \normf{\XXt-\XXtw},    
     \end{equation*}
 where $c^{\prime}:=\max \left\{ \delta; 8 \sqrt{2r\left(n_1+n_2\right)/m} \right\}$. 
\end{lemma}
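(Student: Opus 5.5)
The proof has two parts, matching the two claims of Lemma \ref{lemma:key}. The first is a concentration bound for a fixed pair $(\ww,\vv)\in\mathcal N$ and a fixed $t\le T-1$, which we then union bound over the net and over time. The second is a deterministic decomposition of $(\Aops-\IdOp)(\XXstar-\XXt)$ that reduces it to the first bound, to RIP, and to the distance between the true and virtual sequences.

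\textbf{Step 1 (concentration).} Write $g_i:=\nj{\AAi,\ww\vv^\top}$, which are i.i.d. $\mathcal N(0,1)$. Write $\ZZ:=\Projwperp(\XXstar-\XXtw)$. Since $\ZZ\perp\ww\vv^\top$, we have $\nj{\AAi,\ZZ}=\nj{\AAiw,\ZZ}$. Hence
\[
\nj{\ww\vv^\top,\Aops(\ZZ)}=\frac1m\sum_{i=1}^m g_i\nj{\AAiw,\ZZ}.
\]
By construction, $\ZZ$ and $\{\AAiw\}$ are independent of $(g_i)$. Conditionally on them, the sum is Gaussian with standard deviation $\frac{1}{\sqrt m}\twonorm{\Aop(\ZZ)}$, because $\twonorm{\Aop(\ZZ)}^2=\frac1m\sum_i\nj{\AAiw,\ZZ}^2$. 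A Gaussian tail at level $4\sqrt{n_1+n_2}$ fails with probability at most $2e^{-8(n_1+n_2)}$. We union bound over $|\mathcal N|\le 12^{n_1+n_2}$ and over $T=O(\mu^{-1}\log r)\le e^{n_1+n_2}$ iterations, using $\log 12+1<6$. This gives total failure probability at most $2e^{-2(n_1+n_2)}$.

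\textbf{Step 2 (deterministic bound).} Using the net inequality, it suffices to bound $|\nj{\ww\vv^\top,(\Aops-\IdOp)(\XXstar-\XXt)}|$ uniformly over $\mathcal N$. We split
\[
\XXstar-\XXt=(\XXtw-\XXt)+\Projwperp(\XXstar-\XXtw)+\Projw(\XXstar-\XXtw).
\]
\begin{itemize}
\item \emph{First term.} It has rank at most $2r$. The RIP of order $4r+1$, in its polarized form $|\nj{\BB,(\Aops-\IdOp)\ZZ}|\le\delta\normf{\BB}\normf{\ZZ}$ for low-rank $\BB,\ZZ$, bounds its contribution by $\delta\normf{\XXt-\XXtw}$.
\item \emph{Second term.} Its inner product with $\ww\vv^\top$ under $\IdOp$ vanishes. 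Step 1 controls the $\Aops$ part by $4\sqrt{(n_1+n_2)/m}\,\twonorm{\Aop(\ZZ)}$. RIP gives $\twonorm{\Aop(\ZZ)}\le\sqrt{1+\delta}\normf{\ZZ}$, since $\ZZ$ has rank at most $2r+1$. Then $\normf{\ZZ}\le\sqrt{2r+1}\specnorm{\ZZ}$.
\item \emph{Third term.} It equals $\nj{\ww\vv^\top,\XXstar-\XXtw}\ww\vv^\top$, which is rank one. RIP gives a contribution of at most $\delta|\nj{\ww\vv^\top,\XXstar-\XXtw}|\le\delta\specnorm{\XXstar-\XXtw}$.
\end{itemize}
Finally we replace the virtual quantities by true ones. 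Since $\specnorm{\Projwperp(\cdot)}\le 2\specnorm{\cdot}$, we get $\specnorm{\ZZ}\le 2\specnorm{\XXstar-\XXtw}$. Then $\specnorm{\XXstar-\XXtw}\le\specnorm{\XXstar-\XXt}+\normf{\XXt-\XXtw}$. Collecting terms, the prefactor $4\sqrt{(n_1+n_2)(2r+1)(1+\delta)/m}$ is at most a fixed multiple of $c'$. The net factor $2$ then yields $4c'\specnorm{\XXstar-\XXt}+6c'\sup\normf{\XXt-\XXtw}$.

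\textbf{Main obstacle.} The delicate point is Step 1's independence claim. The virtual sequence must be a function only of $\{\AAiw\}$ and of the deterministic $(m+1)$-th coordinate. This includes the spectral initialization, which uses $\Aopws(\XXstar)$. Under the Gaussian design, $\Projwperp(\AAi)$ is independent of $g_i$ because the two are orthogonal projections of an isotropic Gaussian. Given that, everything else is bookkeeping of constants.
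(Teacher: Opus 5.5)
Your plan matches the argument the paper relies on for this cited lemma. You condition on $\{\AAiw\}_{i=1}^m$ and observe that $\nj{\ww\vv^\top,\Aops(\ZZ)}=\frac1m\sum_i g_i\nj{\AAiw,\ZZ}$ is conditionally Gaussian with standard deviation $\twonorm{\Aop(\ZZ)}/\sqrt m$. You then union bound over $\mathcal N$ and $t$, and finally split $\XXstar-\XXt$ deterministically. Step 1 is correct. The bound $T\le e^{n_1+n_2}$ tacitly needs $n_1+n_2\gtrsim\log(1/c_{\mu})$, which the paper also leaves implicit. You are also right that the virtual update must use $\Aopw$, not the $\Aop$ printed in Algorithm~\ref{algorithm:SGDvirtual}.

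The step that does not go through is the last sentence of Step 2: your chain does not produce the stated constants $4c^{\prime}$ and $6c^{\prime}$. The problem is the bound $\normf{\ZZ}\le\sqrt{2r+1}\specnorm{\ZZ}\le 2\sqrt{2r+1}\specnorm{\XXstar-\XXtw}$. It loses a factor $2$ and replaces $\sqrt{2r}$ by $\sqrt{2r+1}$. As a result your second term is only bounded by $K\cdot 8\sqrt{2r(n_1+n_2)/m}\,\bigl(\specnorm{\XXstar-\XXt}+\normf{\XXt-\XXtw}\bigr)$, where $K=\sqrt{(2r+1)(1+\delta)/(2r)}>1$. Adding the $\delta$'s from your first and third terms, your per-net-point coefficients are:
\begin{itemize}
\item $\delta+K\cdot 8\sqrt{2r(n_1+n_2)/m}$ on $\specnorm{\XXstar-\XXt}$;
\item $2\delta+K\cdot 8\sqrt{2r(n_1+n_2)/m}$ on $\normf{\XXt-\XXtw}$.
\end{itemize}
In the admissible case $\delta=8\sqrt{2r(n_1+n_2)/m}=c^{\prime}$, these are $(1+K)c^{\prime}>2c^{\prime}$ and $(2+K)c^{\prime}>3c^{\prime}$. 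After the net factor $2$ your argument therefore proves only $2(1+K)c^{\prime}$ and $2(2+K)c^{\prime}$, not $4c^{\prime}$ and $6c^{\prime}$.

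The repair is to stay in Frobenius norm, as in step (v) of Lemma~\ref{lemma:auxestimates}. Since $\Projwperp$ is a Frobenius-orthogonal projection and $\rank(\XXstar-\XXt)\le 2r$, you have $\normf{\ZZ}\le\normf{\XXstar-\XXtw}\le\sqrt{2r}\specnorm{\XXstar-\XXt}+\normf{\XXt-\XXtw}$. With $\sqrt{1+\delta}\le\sqrt2$, your second term is then at most $\frac{c^{\prime}}{\sqrt2}\specnorm{\XXstar-\XXt}+\frac{c^{\prime}}{2}\normf{\XXt-\XXtw}$. The per-point coefficients become $\delta+c^{\prime}/\sqrt2\le 2c^{\prime}$ and $2\delta+c^{\prime}/2\le 3c^{\prime}$, and the net factor gives the stated $4c^{\prime}$ and $6c^{\prime}$. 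Your three-term decomposition itself is fine; only this constant accounting has to change.
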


Let the compact singular value decomposition of the true matrix $\XXstar$ be $\XXstar=\VXX \SSigma_{\star} \WXXT$, where the diagonal entries of $\SSigma_{\star} \in \R^{r \times r}$ are  arranged in non-increasing order. Similarly, for each $t\ge 0$, let $\XXt={\VV}_{t} \Sg_{t} {\WW}_{t}^\top$ be the compact singular value decomposition of $\XXt$.
The next lemma shows that,  for each $\left(\ww,\vv\right) \in \mathcal{N}$,  under suitable conditions, if the  virtual sequence $\dkh{\XX_{t}^{(\ww,\vv)}}_{t\ge 0}$ is close to $\dkh{\XX_t}_{t\ge 0}$ at $t$-th iteration, then the sum of the  projections of $\XX_{t+1}-\XXtwplus$ onto the column and row spaces of $\XXstar$ admits a sharp upper bound.

\begin{lemma}\label{lemma:auxsequencecloseness}
For any constants $\constone \le \frac{1}{20}$ and $0<\consttwo, \constthree \le \frac{1}{360}$, suppose that 
\begin{align}
    \max\{ \specnorm{\VXXPT \VLLt}, \specnorm{\WXXPT \VRRt}\} 
    &\le 
   \constone, \label{assump:closeness5} \\
    \specnorm{ \XXt - \XXstar}
    &\le
    \consttwo \sigma_{\min} (\XXstar),\label{assump:closeness7}\\
    \normf{ \XXt - \XXtw }
    &\le 
    \constthree \sigma_{\min} \left(\XXstar \right).\label{assump:closeness8}
\end{align}
In addition, assume that the conclusion of Lemma \ref{lemma:key} holds and the  step size satisfies $\mu \le \frac{1}{32} $. Then,
\begin{align}
    &\normf{\VXXT \left( \XXtplus - \XXtwplus \right)}+\normf{\left( \XXtplus - \XXtwplus \right)\WXX}\\
    &\le 
    \left(1-\frac{\mu}{4}\right)\left(\normf{\VXXT\left(\XXt-\XXtw\right)}+\normf{\left(\XXt-\XXtw\right)\WXX}\right)+\frac{1}{9}\mu \specnorm{\XXstar-\XXt}. \notag
\end{align}
Moreover, it holds
\begin{equation}
    \normf{\XXtplus-\XXtwplus} \le  \frac{\sigma_{\min} \left(\XXstar \right)}{80} .
    \label{eq:advance estimate1}
\end{equation}
\end{lemma}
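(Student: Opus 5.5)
We plan to compare one step of ScaledGD applied to $(\LLt,\RRt)$ with one step applied to $(\LLtw,\RRtw)$. The idea is to write $\XXtplus-\XXtwplus$ as a first-order term that is linear in the difference of the two current iterates, plus a second-order remainder. Write $\EEb_t:=\Aop^*\Aop(\XXstar-\XXt)$, and let $\EEb_t^{w}$ be the same quantity with $\Aopw$ and $\XXtw$. Then
\begin{align*}
\XXtplus &= \XXt + \mu\, \EEb_t\, \RRt(\RRtT\RRt)^{-1}\RRtT + \mu\, \LLt(\LLtT\LLt)^{-1}\LLtT \EEb_t + \mu^2 \EEb_t \RRt(\RRtT\RRt)^{-1}(\LLtT\LLt)^{-1}\LLtT \EEb_t .
\end{align*}
In this expression $\RRt(\RRtT\RRt)^{-1}\RRtT=\VRRt\VRRtP$ and $\LLt(\LLtT\LLt)^{-1}\LLtT=\VLLt\VLLtP$ are orthogonal projections. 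Consequently the first-order part depends only on $\XXt$, not on the chosen factorization, and the same identity holds for the virtual sequence. This invariance is what makes the scaled method tractable: the update is
\[
\XXtplus=\XXt+\mu\big(\EEb_t \VRRt\VRRtP+\VLLt\VLLtP\EEb_t\big)+O(\mu^2),
\]
and the second-order term has norm of order $\mu^2\|\EEb_t\|^2/\sigma_{\min}(\XXstar)$, since $\sigma_{\min}(\XXt)\ge(1-\consttwo)\sigma_{\min}(\XXstar)$ by \eqref{assump:closeness7} and Weyl's inequality.

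Next we project onto $\VXX$ and apply $\VXXT$, which gives
\[
\VXXT(\XXtplus-\XXtwplus)=\VXXT\Delta_t+\mu\big[\VXXT(\EEb_t\VRRt\VRRtP-\EEb_t^{w}\VRRtw\VRRtwP)+\VXXT(\VLLt\VLLtP\EEb_t-\VLLtw\VLLtwP\EEb_t^{w})\big]+\text{h.o.t.},
\]
where $\Delta_t=\XXt-\XXtw$. We decompose $\EEb_t=(\XXstar-\XXt)+(\Aops-\IdOp)(\XXstar-\XXt)$, and $\EEb_t^w$ analogously. The identity parts contribute $-\mu\,\VXXT(\Delta_t\VRRt\VRRtP+\VLLt\VLLtP\Delta_t)$ plus terms involving differences of projections times $\XXstar-\XXtw$. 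We then use \eqref{assump:closeness5}: $\VXXT\VLLt\VLLtP\approx\VXXT$ up to $\constone$, and $\VRRt\VRRtP$ acts on $\Delta_t$ almost as $\WXX\WXXT$ plus a perpendicular part. As a result the sum of the two Frobenius quantities contracts by a factor of about $1-\mu(1-O(\constone))$, which is better than the required $1-\mu/4$. Projection differences such as $\|\VRRt\VRRtP-\VRRtw\VRRtwP\|_F$ are bounded by $\normf{\Delta_t}/\sigma_{\min}$ by Wedin's theorem, using \eqref{assump:closeness8}. Multiplied by $\|\XXstar-\XXtw\|\lesssim(\consttwo+\constthree)\sigma_{\min}$, these are absorbed into the contraction. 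Here it matters that $\normf{\Delta_t}$ itself is controlled by the two projected norms plus the $\VXXP$--$\WXXP$ block. That block is itself handled through the projections and \eqref{assump:closeness5}, and I expect it to be bounded by $O(\constone)$ times the projected norms plus lower-order terms.

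The main obstacle is the RIP-deviation terms. The difference $(\Aops-\IdOp)(\XXstar-\XXt)-(\Aopws-\IdOp)(\XXstar-\XXtw)$ must be bounded in Frobenius norm after multiplication by the rank-$r$ projections. We would split it as $(\Aops-\IdOp)(\XXtw-\XXt)$, whose projection onto rank-$r$ spaces is bounded by $\delta\normf{\Delta_t}$ via RIP of order $4r+1$, plus $(\Aops-\Aopws)(\XXstar-\XXtw)$. The latter involves only the rank-one direction $\ww\vv^\top$, and its size is controlled by the first display of Lemma \ref{lemma:key}, namely $\sqrt{(n_1+n_2)/m}\,\|\Aop(\cdot)\|\lesssim c'\|\XXstar-\XXtw\|_F$. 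This is where the additive $\frac{\mu}{9}\specnorm{\XXstar-\XXt}$ arises, after converting via $\normf{\XXstar-\XXtw}\le\sqrt{2r}\specnorm{\XXstar-\XXt}+\normf{\Delta_t}$ and using $c'\lesssim\sqrt{r(n_1+n_2)/m}$ to cancel the $\sqrt r$. The piece $(\Aops-\IdOp)(\XXstar-\XXt)$ multiplied by projection differences is bounded using the second display of Lemma \ref{lemma:key} together with Wedin's theorem.

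Finally, \eqref{eq:advance estimate1} follows by crude bounds. The full difference satisfies $\normf{\XXtplus-\XXtwplus}\le(1+O(\mu))\normf{\Delta_t}+O(\mu)\specnorm{\XXstar-\XXt}$, which is at most $(1+3\mu)\constthree\sigma_{\min}+\mu\consttwo\sigma_{\min}$. With $\consttwo,\constthree\le 1/360$ and $\mu\le 1/32$ this is below $\sigma_{\min}(\XXstar)/80$.
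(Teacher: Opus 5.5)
\paragraph{Overall comparison.} Most of your outline matches the paper's proof. You use the same expansion of $\XXtplus$ through the projections $\VLLt\VLLtP$, $\VRRt\VRRtP$ and the pseudo-inverse $\XXt^{+}=\VRRt\SSigma_t^{-1}\VLLtP$. You split $\XXtplus-\XXtwplus$ the same way: a contracting linear part, projection differences times $\XXstar-\XXtw$ (controlled by Wedin's bound), and RIP-deviation differences (controlled by decoupling). You also control the $\VXXP$--$\WXXP$ block of $\Delta_t$ in the same way; this is the paper's Lemma~\ref{lemma:auxcloseness1}. Your split $(\Aops-\IdOp)(\XXtw-\XXt)+(\Aops-\Aopws)(\XXstar-\XXtw)$ is a harmless variant of the paper's split in Lemma~\ref{lemma:auxestimates}. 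Two small corrections:
\begin{itemize}
\item Since $(\Aops-\Aopws)(\ZZ)=(\Aops-\IdOp)(\Projw(\ZZ))+\nj{\Aop(\ww\vv^\top),\Aop(\Projwperp(\ZZ))}\ww\vv^\top$, the first piece is handled by RIP, not by Lemma~\ref{lemma:key}.
\item The inequality you need is $\sqrt{r(n_1+n_2)/m}\lesssim c'$, not the reverse.
\end{itemize}

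\paragraph{The gap: the second-order term.} You dismiss it as ``h.o.t.'' on the strength of the size estimate $\mu^2\specnorm{\MM_t}^2/\sigma_{\min}(\XXstar)$, where $\MM_t:=\Aops(\XXstar-\XXt)$ is your $\EEb_t$. That estimate bounds $\mu^2\MM_t\XXt^{+}\MM_t$ and its virtual counterpart separately, and it cannot be used here, for two reasons.
\begin{itemize}
\item $\mu$ is only a constant ($\le 1/32$), so these terms are not negligible. They must be bounded by multiples of $\normf{\Delta_t}$ and $\specnorm{\XXstar-\XXt}$, in the same form as the right-hand side.
\item In Frobenius norm the individual term only gives $\normf{\VXXT\MM_t\XXt^{+}\MM_t}\le\sqrt{r}\,\specnorm{\MM_t}^2/\sigma_{\min}(\XXt)$. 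This reintroduces exactly the $\sqrt r$ the lemma is built to avoid, since the right-hand side contains only $\frac{\mu}{9}\specnorm{\XXstar-\XXt}$.
\end{itemize}
You must bound the \emph{difference} of the two second-order terms. The paper does this in three pieces:
\begin{itemize}
\item $\MM_t(\XXt^{+}-(\XXtw)^{+})\MM_t$, which Wedin's pseudo-inverse bound (Lemma~\ref{lemma:fake inverse}) controls as $O((c_2+c_5)^2)\normf{\Delta_t}$.
\item Two terms in which $\MM_t-\MM_t^{(\ww,\vv)}$ is always multiplied by $\VRRtw$ on the right or by $\VLLtwP$ on the left. Only in that form does Lemma~\ref{lemma:auxestimates} give the bound $2c'\specnorm{\XXstar-\XXt}+(4c'+1)\normf{\Delta_t}$, because $\normf{\MM_t-\MM_t^{(\ww,\vv)}}$ itself is not controlled by RIP.
\end{itemize}
Your crude estimate for \eqref{eq:advance estimate1} silently assumes this same difference bound. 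It is also what you need to justify that the contraction beats $1-\mu/4$. In the paper's accounting, the $\mu^2$ contributions (up to $6\mu^2\normf{\Delta_t}$) account for much of the loss from $1-\frac{15\mu}{16}$ down to $1-\frac{\mu}{4}$. So your claimed factor ``about $1-\mu(1-O(c_1))$'' is not yet established.
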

\begin{proof}
See Section \ref{subsec:proofauxsequencecloseness}.
\end{proof}

\subsection{Error contraction} \label{subsection:mainconvergencelemma}
Building on the virtual sequences, this section shows that the iterates $\XX_t $ converges to $\XXstar$ in the operator norm. The next lemma establishes a linear contraction for the sum of the projections of  $\XXstar-\XX_t$ onto the column and row spaces of $\XXstar$.

\begin{lemma}\label{lemma:localconv}
    For any constants $0<c_2,c_5 \le 0.01$, assume that
    \begin{align}
        \max\{ \specnorm{\VXXPT \VLLt}, \specnorm{\WXXPT \VRRt}\} 
        &\le 
        \frac{1}{8}, \label{ineq:localconv8} \\
        \specnorm{\XXstar - \XXt} 
        &\le \consttwo \sigma_{\min} (\XXstar), \label{ineq:localconv5}\\
         \specnorm{\left(\Aops - \IdOp \right) \left( \XXstar - \XXt \right)}
        &\le 
        \constfive \sigma_{\min} \left( \XXstar \right),\label{ineq:localconv4}
    \end{align}
 and that the step size satisfies  $ \mu \le \frac{1}{15} $. Then 
    \begin{equation}
        \specnorm{\XXtplus-\XXstar} \le \frac{\sigma_{\min} (\XXstar)}{80} 
        \label{eq:advance estimate2}
    \end{equation}
and
   \begin{align*}
            \specnorm{
                \VXXT \left( \XXstar - \XXtplus \right)}+&
                \specnorm{
                \left( \XXstar - \XXtplus \right)\WXX} \notag \\
                \le& 
                 \left( 1 - \frac{\mu  }{4} \right)\left(
            \specnorm{\VXXT\left( \XXstar - \XXt \right)}
            +\specnorm{\left( \XXstar -\XXt \right)\WXX}\right)
            +
            6 \mu  \specnorm{\EEb_t }.
        \end{align*} 
   Here, $\EEb_t :=\left( \Aops - \IdOp \right) \left(\XXstar - \XXt \right)$.     
\end{lemma}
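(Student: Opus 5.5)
We prove Lemma \ref{lemma:localconv} by writing the ScaledGD update as an exact recursion for $\XXtplus=\LLtplus\RRtplus^\top$ and then projecting it onto the column space of $\VXX$ and the row space of $\WXX$. Put $\boldsymbol{\Delta}_t:=\XXstar-\XXt$, so that $\Aop^*(\yy-\Aop(\XXt))=\boldsymbol{\Delta}_t+\EEb_t$. Let $\boldsymbol{P}_{L}:=\LLt(\LLtT\LLt)^{-1}\LLtT=\VLLt\VLLtP$ and $\boldsymbol{P}_R:=\VRRt\VRRtP$ be the projectors onto the column and row spaces of $\XXt$; they are well defined because \eqref{ineq:localconv5} forces $\rank(\XXt)=r$. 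Multiplying out the update gives
\begin{equation*}
\XXtplus=\XXt+\mu(\boldsymbol{\Delta}_t+\EEb_t)\boldsymbol{P}_R+\mu\boldsymbol{P}_L(\boldsymbol{\Delta}_t+\EEb_t)+\mu^2(\boldsymbol{\Delta}_t+\EEb_t)\RRt(\RRtT\RRt)^{-1}(\LLtT\LLt)^{-1}\LLtT(\boldsymbol{\Delta}_t+\EEb_t).
\end{equation*}
The second-order term equals $\mu^2(\boldsymbol{\Delta}_t+\EEb_t)\XXt^{\dagger}(\boldsymbol{\Delta}_t+\EEb_t)$, because $\RRt(\RRtT\RRt)^{-1}(\LLtT\LLt)^{-1}\LLtT$ is the pseudoinverse of $\XXt=\LLt\RRtT$. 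This is the point where the scaling removes $\kappa$. By Weyl and \eqref{ineq:localconv5}, $\sigma_r(\XXt)\ge(1-c_2)\sigma_{\min}(\XXstar)$, so $\specnorm{\XXt^\dagger}\le 1/((1-c_2)\sigma_{\min}(\XXstar))$. Together with \eqref{ineq:localconv4} this gives a quadratic term of size $\mu^2(c_2+c_5)(\specnorm{\boldsymbol{\Delta}_t}+\specnorm{\EEb_t})/(1-c_2)$, which is negligible.

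Next, subtract the recursion from $\XXstar$ and apply $\VXXT$ on the left. This gives
\begin{equation*}
\VXXT\boldsymbol{\Delta}_{t+1}=\VXXT\boldsymbol{\Delta}_t(\II-\mu\boldsymbol{P}_R)-\mu\VXXT\boldsymbol{P}_L\boldsymbol{\Delta}_t-\mu\VXXT(\EEb_t\boldsymbol{P}_R+\boldsymbol{P}_L\EEb_t)-\text{(quadratic)}.
\end{equation*}
The plan is to split $\VXXT\boldsymbol{P}_L=\VXXT-\VXXT(\II-\boldsymbol{P}_L)$, so the main linear part becomes $\VXXT\boldsymbol{\Delta}_t\big((1-\mu)\II-\mu\boldsymbol{P}_R\big)$ plus a cross term $\mu\VXXT(\II-\boldsymbol{P}_L)\boldsymbol{\Delta}_t$. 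Since $(1-\mu)\II-\mu\boldsymbol{P}_R$ has operator norm $1-\mu$ for $\mu\le1/2$, the main part contracts at rate $1-\mu$.

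\textbf{The cross term is the main obstacle.} Use $(\II-\boldsymbol{P}_L)\XXt=0$ to write $(\II-\boldsymbol{P}_L)\boldsymbol{\Delta}_t=(\II-\boldsymbol{P}_L)\XXstar$. Then $\VXXT(\II-\boldsymbol{P}_L)\XXstar=\VXXT(\II-\boldsymbol{P}_L)\VXX\SSigma_\star\WXXT$, and $\VXXT(\II-\boldsymbol{P}_L)\VXX=\VXXT\VXXP\VXXPT(\II-\boldsymbol{P}_L)\VXX$ up to the term $\VXXT\VXX\VXXT(\II-\boldsymbol{P}_L)\VXX$. To control this I would use the standard identity $(\II-\boldsymbol{P}_L)\VXX=(\II-\boldsymbol{P}_L)\boldsymbol{\Delta}_t\WXX\SSigma_\star^{-1}$. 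This alone is not $\kappa$-free, so instead I would exploit that the offending piece is second order in the subspace angle: $\VXXT(\II-\boldsymbol{P}_L)\VXX=\VXXT\VLLt^{\perp}(\VLLt^{\perp})^\top\VXX$, whose norm is $\specnorm{\VXXPT\VLLt}^2$-type. Combined with \eqref{ineq:localconv8} (the $1/8$ bound), this lets the cross term be absorbed as a fraction of $\mu\specnorm{\VXXT\boldsymbol{\Delta}_t}$ after relating it back through $\boldsymbol{\Delta}_t$. I expect the precise bookkeeping of this absorption to give the loss from $1-\mu$ to $1-\mu/4$, possibly by bounding $\specnorm{(\II-\boldsymbol{P}_L)\XXstar}\le\specnorm{\VXXT\boldsymbol{\Delta}_t}\cdot\frac{1}{8}/(1-1/64)$-type estimates. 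The symmetric argument handles $\boldsymbol{\Delta}_{t+1}\WXX$.

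Finally, sum the two projected bounds. The $\EEb_t$ terms contribute at most $2\mu\specnorm{\EEb_t}$ each, plus the quadratic contributions, which stays within $6\mu\specnorm{\EEb_t}$. For \eqref{eq:advance estimate2}, bound the full recursion crudely: $\specnorm{\boldsymbol{\Delta}_{t+1}}\le(1+2\mu)\specnorm{\boldsymbol{\Delta}_t}+2\mu\specnorm{\EEb_t}+\text{quadratic}\le(1+2/15)c_2+(2/15)c_5+\dots$, all times $\sigma_{\min}(\XXstar)$. With $c_2,c_5\le0.01$ this is below $1/80$.
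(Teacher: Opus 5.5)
Your outline is sound up to the split $\VXXT\boldsymbol{P}_L=\VXXT-\VXXT(\II-\boldsymbol{P}_L)$, whose main part contracts at rate $1-\mu$. The gap is the cross term $\mu\VXXT(\II-\boldsymbol{P}_L)\boldsymbol{\Delta}_t$: you flag it as the main obstacle but never bound it, and the route you sketch cannot work. Writing it as $\mu\VXXT(\II-\boldsymbol{P}_L)\VXX\,\SSigma_\star\WXXT$ and bounding the two factors separately brings in $\specnorm{\SSigma_\star}=\sigma_{\max}(\XXstar)$. The factor $\specnorm{\VXXPT\VLLt}^2$ then only gives, via Davis--Kahan, something like $\kappa\specnorm{\VXXPT\VLLt}\specnorm{\boldsymbol{\Delta}_t}$.

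More fundamentally, the cross term cannot be absorbed into a fraction of $\mu\specnorm{\VXXT\boldsymbol{\Delta}_t}$, and your suggested estimate $\specnorm{(\II-\boldsymbol{P}_L)\XXstar}\lesssim\specnorm{\VXXT\boldsymbol{\Delta}_t}$ is false. Take $n_1=n_2=2$, $r=1$, $\XXstar=\sigma\boldsymbol{e}_1\boldsymbol{e}_1^\top$ and $\XXt=(\sigma\boldsymbol{e}_1+b\boldsymbol{e}_2)\boldsymbol{e}_1^\top$ with $0<|b|\le c_2\sigma$. All hypotheses hold and $\VXXT\boldsymbol{\Delta}_t=0$, yet $\specnorm{(\II-\boldsymbol{P}_L)\XXstar}=\sigma|b|/\sqrt{\sigma^2+b^2}$. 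If moreover $\EEb_t=0$, the recursion gives $\VXXT\boldsymbol{\Delta}_{t+1}=\mu\sigma b^2(\sigma^2+b^2)^{-1}\boldsymbol{e}_1^\top\neq0$. So $\VXXT\boldsymbol{\Delta}_{t+1}$ genuinely depends on the other side, $\specnorm{\boldsymbol{\Delta}_t\WXX}=|b|$. This coupling is why the lemma only asserts contraction of the sum.

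Two ingredients are missing.
\begin{enumerate}
\item Keep one projector on each factor: $\VXXT(\II-\boldsymbol{P}_L)\boldsymbol{\Delta}_t=[(\II-\boldsymbol{P}_L)\VXX]^\top(\II-\boldsymbol{P}_L)\boldsymbol{\Delta}_t$. Its norm is then at most $\specnorm{\VXXPT\VLLt}\specnorm{\boldsymbol{\Delta}_t}\le\frac18\specnorm{\boldsymbol{\Delta}_t}$, and no $\SSigma_\star$ appears.
\item You need the comparison $\specnorm{\boldsymbol{\Delta}_t}\le\frac54 S_t$, where $S_t:=\specnorm{\VXXT\boldsymbol{\Delta}_t}+\specnorm{\boldsymbol{\Delta}_t\WXX}$. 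This is not automatic, because of the block $\VXXPT\boldsymbol{\Delta}_t\WXXP=-\VXXPT\XXt\WXXP$. The paper gets it in Lemma~\ref{lemma:localconvaux} from $\VXXPT\XXt\WXXP=\VXXPT\VLLt(\VXXT\VLLt)^{-1}\VXXT(\XXt-\XXstar)\WXXP$, whose norm is at most $2\specnorm{\VXXPT\VLLt}\specnorm{\VXXT\boldsymbol{\Delta}_t}$. Your quadratic term needs this as well, since you bounded it by $\specnorm{\boldsymbol{\Delta}_t}$ rather than by the projected errors.
\end{enumerate}
With both in place your split does close: summing the two sides gives $S_{t+1}\le(1-\frac{11\mu}{16}+O(\mu^2))S_t+(4\mu+O(\mu^2))\specnorm{\EEb_t}$.

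The paper argues differently. It inserts $\VXX\VXXT+\VXXP\VXXPT$ after $\VXXT(\II-\mu\boldsymbol{P}_L)$, so the main term contracts at $1-\mu\sigma_{\min}^2(\VXXT\VLLt)\le1-\frac{15\mu}{16}$. The cross term $\mu\VXXT\VLLt\VLLtP\VXXP\VXXPT\XXt(\II-\mu\boldsymbol{P}_R)$ is then at most $\frac{\mu}{8}\specnorm{\boldsymbol{\Delta}_t\WXX}+\frac{\mu}{32}\specnorm{\VXXT\boldsymbol{\Delta}_t}$, again via Lemma~\ref{lemma:localconvaux}.

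Separately, your bound for \eqref{eq:advance estimate2} is numerically wrong. At $\mu=\frac1{15}$ and $c_2=c_5=0.01$ one has $(1+2\mu)c_2+2\mu c_5=\frac{19}{1500}>\frac1{80}$. Write $\boldsymbol{\Delta}_t-\mu\boldsymbol{\Delta}_t\boldsymbol{P}_R-\mu\boldsymbol{P}_L\boldsymbol{\Delta}_t=(\II-\mu\boldsymbol{P}_L)\boldsymbol{\Delta}_t(\II-\mu\boldsymbol{P}_R)-\mu^2\boldsymbol{P}_L\boldsymbol{\Delta}_t\boldsymbol{P}_R$ to get the factor $1+\mu^2$, as the paper does; even the factor $1+\mu$ suffices.
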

\begin{proof}
See Appendix \ref{sec:prooflocalconv_b}.
\end{proof}

For $t\in \N$, define
\begin{equation} \label{eq:Gt0}
 \GG_t:=  \specnorm{ \XXstar - \XXt }+  \supw  \normf{\XXt - \XXtw},
\end{equation}
and 
    \begin{align} \label{eq:Gtstar0}
        \GG_{t,\star}:&= \specnorm{\VXXT\left(\XXstar - \XXt \right)}+\specnorm{\left(\XXstar - \XXt \right)\WXX}
        + \supw  \normf{\VXXT\left(\XXt - \XXtw\right)} \notag\\
        &+\supw  \normf{\left(\XXt - \XXtw\right)\WXX}. 
    \end{align}
    
Based on  Lemma~\ref{lemma:auxsequencecloseness} and Lemma Lemma~\ref{lemma:localconv}, we can obtain the following result, which shows that both $\GG_t$ and $ \GG_{t,\star}$ contracts linearly.

\begin{lemma}  \label{Le:4_6}
 Assume that  $0<c_0 \le \frac{1}{1080}$ is a universal constant, and  $m \ge C \left(n_1+n_2\right)r \kappa^2$ for some constant $C>0$. Then,  with probability at least $1- 6\exp(-\left(n_1+n_2\right))$, 
\begin{equation}  \label{assumption:initiation}
    \GG_{0,\star}\le 2\GG_0\le 4c_0 \sigma_{\min} (\XXstar).
\end{equation}
 Moreover, if  the step size satisfies $\mu \le \frac{1}{32}$, then for all $0\le t\le T$,
\begin{equation} \label{eq:Gtstar}
    \GG_{t,\star}\le 2\left(1-\frac{\mu}{10} \right)^t c_0 \sigma_{\min} (\XXstar)
\end{equation}
and
\begin{align} \label{eq:Gt}
    \GG_t \le 3\left(1-\frac{\mu}{10} \right)^{t} c_0 \sigma_{\min} (\XXstar).
\end{align}
Here, $T$ is defined in \eqref{eq:defT}.
\end{lemma}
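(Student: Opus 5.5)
We will prove Lemma \ref{Le:4_6} by induction on $t$. First we handle the base case $t=0$, and then we propagate the bound using Lemmas \ref{lemma:auxsequencecloseness}, \ref{lemma:localconv} and \ref{lemma:key}. All events are intersected at the start: the RIP of order $4r+1$ with a small constant $\delta$ (Lemma \ref{lem:rank_RIP}), the conclusion of Lemma \ref{lemma:key}, and the spectral-initialization bounds. Under $m\ge C(n_1+n_2)r\kappa^2$ this intersection has probability at least $1-6\exp(-(n_1+n_2))$. With $C$ large, the constant $c'$ of Lemma \ref{lemma:key} can be made as small as desired. It will be at most $c_0/\kappa$ or smaller if needed, but we only need it below a small absolute constant.

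\textbf{Base case.} The first inequality in \eqref{assumption:initiation} is immediate. Each projected term in $\GG_{0,\star}$ is bounded by the corresponding unprojected quantity. Indeed, $\specnorm{\VXXT \ZZ}\le\specnorm{\ZZ}$, and $\normf{\VXXT \ZZ}\le \normf{\ZZ}$. Hence $\GG_{0,\star}\le 2\GG_0$.

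For $\GG_0\le 2c_0\sigma_{\min}(\XXstar)$, we use the spectral initialization bound $\specnorm{\XX_0-\XXstar}\le c_0\sigma_{\min}(\XXstar)$ quoted from Lemma \ref{lemma:spectralinitialization}. We also need $\supw\normf{\XX_0-\XX_0^{(\ww,\vv)}}\le c_0\sigma_{\min}(\XXstar)$. Here $\XX_0$ and $\XX_0^{(\ww,\vv)}$ are the best rank-$r$ approximations of $\Aops(\XXstar)$ and $\Aopws(\XXstar)$ respectively. Their difference is $\Aops(\XXstar)-\Aopws(\XXstar)$, which involves only the rank-one direction $\ww\vv^\top$ and the scalars $\nj{\AAi,\ww\vv^\top}$. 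Its norm is controlled by the decoupling bound of Lemma \ref{lemma:key} type, giving order $\sqrt{(n_1+n_2)/m}\,\normf{\XXstar}$-like terms. A Wedin/Davis--Kahan perturbation argument for the top-$r$ truncation then transfers this to $\XX_0-\XX_0^{(\ww,\vv)}$, using the gap $\sigma_r(\Aops\XXstar)\ge \frac{1}{2}\sigma_{\min}(\XXstar)$. We expect this to be the statement of the spectral-initialization lemma, and we invoke it.

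\textbf{Induction step.} Suppose \eqref{eq:Gtstar} and \eqref{eq:Gt} hold at step $t<T$. Then $\specnorm{\XXt-\XXstar}\le 3c_0\sigma_{\min}$ and $\normf{\XXt-\XXtw}\le 3c_0\sigma_{\min}$. Since $3c_0\le 1/360$, this gives \eqref{assump:closeness7} and \eqref{assump:closeness8}. Wedin's theorem gives $\max\{\specnorm{\VXXPT\VLLt},\specnorm{\WXXPT\VRRt}\}\le \sqrt2\,\specnorm{\XXt-\XXstar}/\sigma_{\min}\le 1/20$, so \eqref{assump:closeness5} and \eqref{ineq:localconv8} hold. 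Lemma \ref{lemma:key} then gives $\specnorm{\EEb_t}\le 4c'\specnorm{\XXstar-\XXt}+6c'\supw\normf{\XXt-\XXtw}\le 6c'\GG_t$, which is small and yields \eqref{ineq:localconv4}.

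Adding the conclusions of Lemma \ref{lemma:localconv} and Lemma \ref{lemma:auxsequencecloseness}, after taking the supremum over $\mathcal N$, gives
\begin{equation*}
\GG_{t+1,\star}\le \left(1-\frac{\mu}{4}\right)\GG_{t,\star}+\mu\left(36c'+\frac19\right)\GG_t .
\end{equation*}

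\textbf{Main obstacle.} The difficulty is that the error term involves $\GG_t$ and not $\GG_{t,\star}$. The plan is therefore to prove $\GG_t\le \frac{3}{2}\GG_{t,\star}$, or some comparable bound, along the iteration. The key observation is that when $\XXt$ is nearly aligned with $\XXstar$, the part of $\XXstar-\XXt$ orthogonal to both singular subspaces, $\VXXPT(\XXt)\WXXP$, is quadratically small. For example, using \eqref{assump:closeness5} it is bounded by $c_1\cdot$(projected terms). Then $\specnorm{\XXstar-\XXt}\le \specnorm{\VXXT(\cdot)}+\specnorm{(\cdot)\WXX}+\specnorm{\VXXPT\XXt\WXXP}$. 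An analogous decomposition applies to $\XXt-\XXtw$, where the orthogonal block is controlled via the smallness of both iterates' subspace angles.

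With $\GG_t\le \frac32\GG_{t,\star}$, the recursion becomes $\GG_{t+1,\star}\le(1-\frac{\mu}{4}+\frac{\mu}{6}+54c'\mu)\GG_{t,\star}\le(1-\frac{\mu}{10})\GG_{t,\star}$. This gives \eqref{eq:Gtstar} at $t+1$, and \eqref{eq:Gt} follows with the factor $3/2$. If the factor $3/2$ turns out to be unattainable directly, the fallback is to track $\GG_t$ itself: the full-norm bounds \eqref{eq:advance estimate1} and \eqref{eq:advance estimate2} ensure that the hypotheses persist, and the orthogonal-block estimate is what closes the loop.
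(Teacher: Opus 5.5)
Your plan follows the paper's proof. The base case is the same: Lemma \ref{lemma:spectralinitialization}, plus dropping the projections to get $\GG_{0,\star}\le 2\GG_0$. The induction is driven by the same Lemmas \ref{lemma:auxsequencecloseness}, \ref{lemma:localconv} and \ref{lemma:key}. You also use the same device of comparing $\GG_t$ with $\GG_{t,\star}$ through orthogonal-block estimates, which in the paper are Lemmas \ref{lemma:auxcloseness1} and \ref{lemma:localconvaux}.

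The step that fails is the closing arithmetic. You have $1-\frac{\mu}{4}+\frac{\mu}{6}=1-\frac{\mu}{12}$, and $1-\frac{\mu}{12}+54c'\mu\le 1-\frac{\mu}{10}$ would require $54c'\le-\frac{1}{60}$. That is impossible for every $c'\ge 0$. So charging the whole error to $\GG_t$ and then using $\GG_t\le\frac32\GG_{t,\star}$ only gives the rate $1-\frac{\mu}{12}$, not \eqref{eq:Gtstar} with the stated constants.

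The loss comes from replacing the error $\frac19\mu\specnorm{\XXstar-\XXt}$ of Lemma \ref{lemma:auxsequencecloseness} by $\frac19\mu\GG_t$. Keep the two blocks separate instead. Set $A_t:=\specnorm{\VXXT(\XXstar-\XXt)}+\specnorm{(\XXstar-\XXt)\WXX}$ and $B_t:=\supw\normf{\VXXT(\XXt-\XXtw)}+\supw\normf{(\XXt-\XXtw)\WXX}$.

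Lemma \ref{lemma:localconvaux} (with angles $\le\frac18$) gives $\specnorm{\XXstar-\XXt}\le\frac98A_t$. Lemma \ref{lemma:auxcloseness1} gives $\supw\normf{\XXt-\XXtw}\le\frac54B_t$. Hence $6\mu\specnorm{\EEb_t}\le 27c'\mu A_t+45c'\mu B_t$, so $A_{t+1}\le(1-\frac{9\mu}{40})A_t+\frac{\mu}{10}B_t$ for $c'\le\frac1{1080}$. The virtual error is only $\frac{\mu}{8}A_t$, so $B_{t+1}\le(1-\frac{\mu}{4})B_t+\frac{\mu}{8}A_t$. Adding the two gives $\GG_{t+1,\star}\le(1-\frac{\mu}{10})A_t+(1-\frac{3\mu}{20})B_t\le(1-\frac{\mu}{10})\GG_{t,\star}$. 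This is what the paper does.

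Alternatively, your aggregate version survives if you use the ratio $\GG_t\le\frac54\GG_{t,\star}$ that these lemmas actually give. The coefficient becomes $1-\frac{\mu}{9}+45c'\mu\le1-\frac{\mu}{10}$ once $c'\le\frac1{4050}$, which you can reach by enlarging $C$.

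Your factor $\frac32$ is fine for the final step, deducing \eqref{eq:Gt} at $t+1$ from \eqref{eq:Gtstar}. This requires checking the hypotheses of Lemmas \ref{lemma:auxcloseness1} and \ref{lemma:localconvaux} via \eqref{eq:advance estimate1}, \eqref{eq:advance estimate2} and Lemma \ref{lemma:Davis-Kahan inequality}.

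One more detail, which the paper also passes over. $\GG_{t,\star}$ takes the two suprema separately, so the summed conclusion of Lemma \ref{lemma:auxsequencecloseness} does not directly bound $B_{t+1}$. Use the two one-sided inequalities from its proof (coefficients $1-\frac{5\mu}{8}$ and $\frac{3\mu}{8}$), take suprema separately, and then add.
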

\begin{proof}
See Section \ref{pf:Le4.6}.
\end{proof}

\subsection{Proof of Theorem \ref{thm:main}}\label{secproofmainresult}
Now all the ingredients are in place to prove the main result.
\begin{proof}[Proof of Theorem \ref{thm:main}]
Since $\bm{A}_1,\ldots,\bm{A}_m \in \mathbb{R}^{n_1 \times n_2}$ are Gaussian random matrix with i.i.d. $ \mathcal{N} \left( 0,1 \right)$ entries,  Lemma  \ref{lem:rank_RIP} implies that  when $m \gtrsim r\left(n_1+n_2\right) \kappa^2$, the measurement operator $\mathcal{A}$ satisfies RIP of order $\left(4r+1\right)$ with a constant $\delta=\delta_{4r+1} \le c^{\prime}$ with probability $1-\exp(-\left(n_1+n_2\right))$, where $0<c_0, c^{\prime} \le \frac{1}{1080}$. 
Set
\begin{equation*}
    T:=
   \Big\lceil \frac{10}{\mu } \log \left(  10\sqrt{r}  \right)  \Big\rceil.
\end{equation*} 
According to Lemma~\ref{Le:4_6}, when $m \ge C \left(n_1+n_2\right)r \kappa^2$ for some constant $C>0$, with probability at least $1- 6\exp(-\left(n_1+n_2\right))$, 
\[
 \GG_t \le  3\left(1-\frac{\mu}{10} \right)^{t} c_0 \sigma_{\min} (\XXstar)
\]
holds for all $0\le t\le T$.  From the definition of $\GG_t$ in \eqref{eq:Gt0}, this yields
\[
\specnorm{ \XXstar - \XXt } \le 3\left(1-\frac{\mu}{10} \right)^{t} c_0 \sigma_{\min} (\XXstar).
\]
Hence
\begin{eqnarray}
 \dist \left( \XXt, \XXstar \right)   & \le &  \sqrt{\sqrt{2}+1}  \normf{\XXt - \XXstar}  \notag\\
 &\le &  \sqrt{2r} \cdot \sqrt{\sqrt{2}+1}  \specnorm{\XXt - \XXstar}  \notag \\
 &\le & 8\sqrt{r}\left(1-\frac{\mu}{10} \right)^{t}c_0 \sigma_{\min} (\XXstar), \notag
\end{eqnarray}
where the first inequality follows from Lemma~\ref{lemma:procrustebound} and the second uses $\rank(\XXt - \XXstar)\le 2r$. Recalling 
\[
T :=   \Big\lceil \frac{10}{\mu } \log \left(  10\sqrt{r}  \right)     
    \Big\rceil,
\]
we obtain
\begin{align}
    \dist \left( \XX_T, \XXstar \right) \le &  8\sqrt{r}\left(1-\frac{\mu}{10} \right)^{T}c_0 \sigma_{\min} (\XXstar) \label{eq:distxtxsta}  \\
    \overleq{(a)}
    &8\sqrt{r}c_0
    \exp \left( \frac{- T\mu  }{10}\right)
    \sigma_{\min} (\XXstar) \notag \\
    \overleq{(b)}
    & 0.1\sigma_{\min} (\XXstar). \label{ineq:proofmain2}
\end{align}
where $(a)$ uses  $ \ln (1+x) \le x $  for $x>-1$, and the inequality $(b)$ follows from the choice of  $T = \Big\lceil \frac{10}{\mu } \log \left(  10\sqrt{r}  \right)  \Big\rceil $ and the bound $c_0 \le 0.1$.   Moreover, since $\Aop$  satisfies rank-$2r$  RIP with constant $\delta_{2r}\le 0.02$ with probability at least $1-\exp(-(n_1+n_2))$ when $m\ge C(n_1+n_2)r \kappa^2$, 
Lemma \ref{lemma:Phase2} yields, for all $t\ge T$,
    \begin{align}\label{ineq:proofmain4}
     \dist \left( \XXt, \XXstar \right)
    \le 
    \left(1- 0.6\mu\right)^{t-T}
    \dist (\XX_T, \XXstar).
    \end{align}
    and
    \begin{equation} \label{eq:F_norm_dist}     
   \normf{\XX_t-\XXstar} \le 1.5  \dist\left( \XXt, \XXstar \right).
    \end{equation}
    Combining \eqref{eq:distxtxsta} with \eqref{ineq:proofmain4} gives that
\[
        \dist \left( \XXt, \XXstar \right) \le
      8\sqrt{r}\left(1-\frac{\mu}{10} \right)^{t}c_0 \sigma_{\min} (\XXstar)
\]
holds with probability at least $1- 7\exp(-\left(n_1+n_2\right))$, provided $m\ge C(n_1+n_2)r \kappa^2$. This is exactly \eqref{eq:maindist}. Finally, using \eqref{eq:F_norm_dist} yields the Frobenius error bound in Theorem~\ref{thm:main}. 
    This completes the proof.
    \end{proof}

\section{Experiment}\label{sec:experiment}
In this section, several numerical experiments are conducted to evaluate the effectiveness of ScaledGD in comparison with vanilla gradient descent (GD) \cite{charis} and Riemannian gradient descent (RGD) \cite{caifast}. The ground-truth low-rank matrix $\XXstar \in \mathbb{R}^{n_1 \times n_2}$ is constructed as follows. First, orthonormal matrices $\VXX \in \mathbb{R}^{n_1 \times r}$ and $\WXX \in \mathbb{R}^{n_2 \times r}$ are generated, where $r$ is the target rank. Next, the $r$ non-zero singular values of $\XXstar$ are drawn uniformly from $[1/\kappa,1]$ and arranged on the diagonal of $\SSigma_{\star} \in \mathbb{R}^{r \times r}$. The ground-truth matrix is then set to $\XXstar = \VXX \SSigma_{\star} \WXXT$, which has rank $r$ and condition number $\kappa$.  To ensure stable convergence for each algorithm, the step size for vanilla GD is set to $\mu=\eta / \sigma_1(\XXstar)$, where $\sigma_1(\XXstar)$ is the largest singular value of $\XXstar$; for ScaledGD and RGD,  we use $\mu = \eta$. Throughout  all experiments, we fix $\eta = 0.5$, following the step-size recommendation in  Tong et al. \cite{JMLR:v22:20-1067}.

In the first experiment, we set $n_1 = n_2 = 100$, $r = 30$, $m = 4n_1r$, and $\kappa = 5$. Figure~\ref{fig:iteration and time with re} reports the relative error versus the iteration count and versus the computational time. The results indicate that ScaledGD outperforms vanilla GD in achieving both lower relative error and  runtime, and it also slightly improves upon RGD, thereby confirming its effectiveness for low-rank matrix recovery.

\begin{figure}[htbp]
    \centering
    \includegraphics[width=0.48\textwidth]{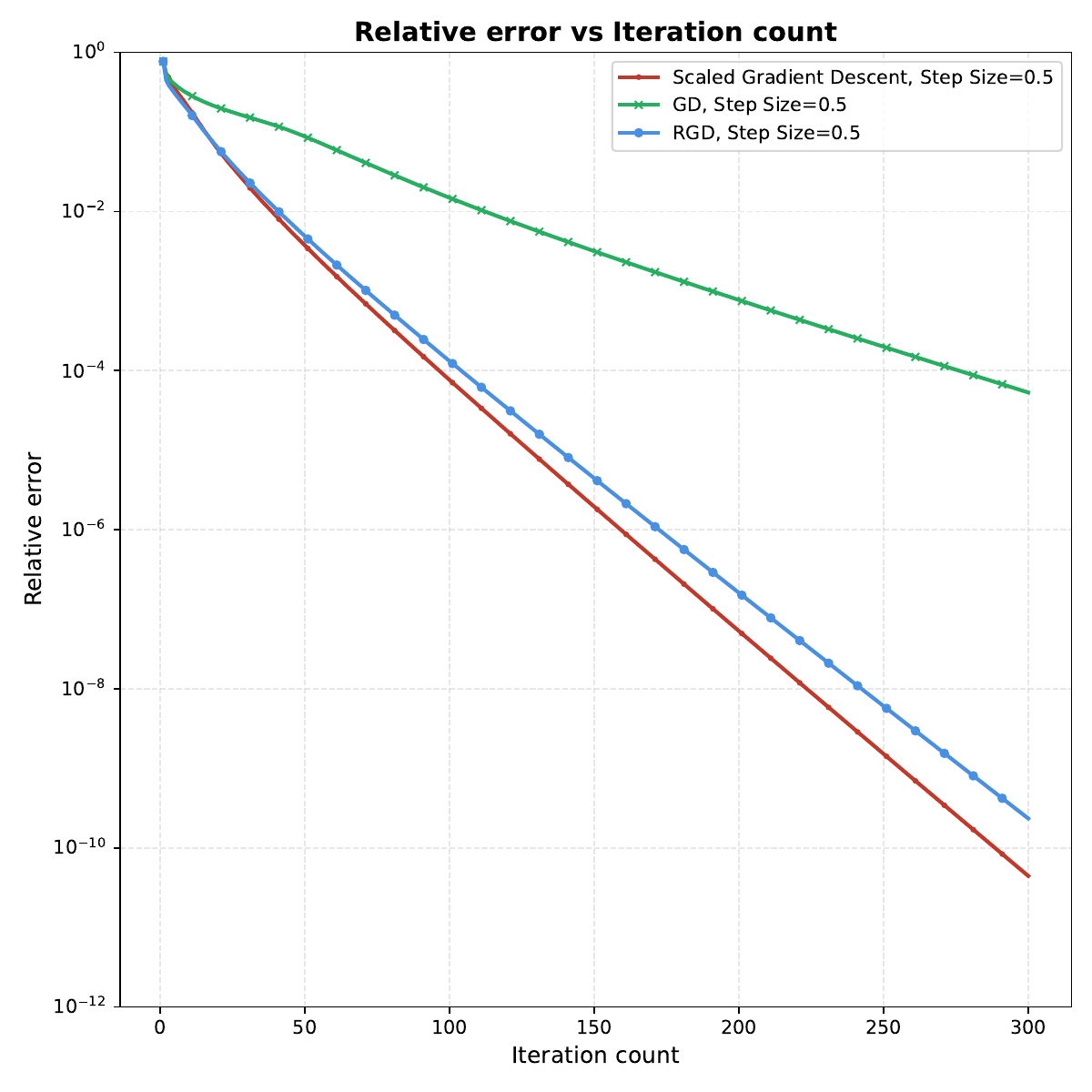}
    \hspace{0.2cm}
    \includegraphics[width=0.48\textwidth]{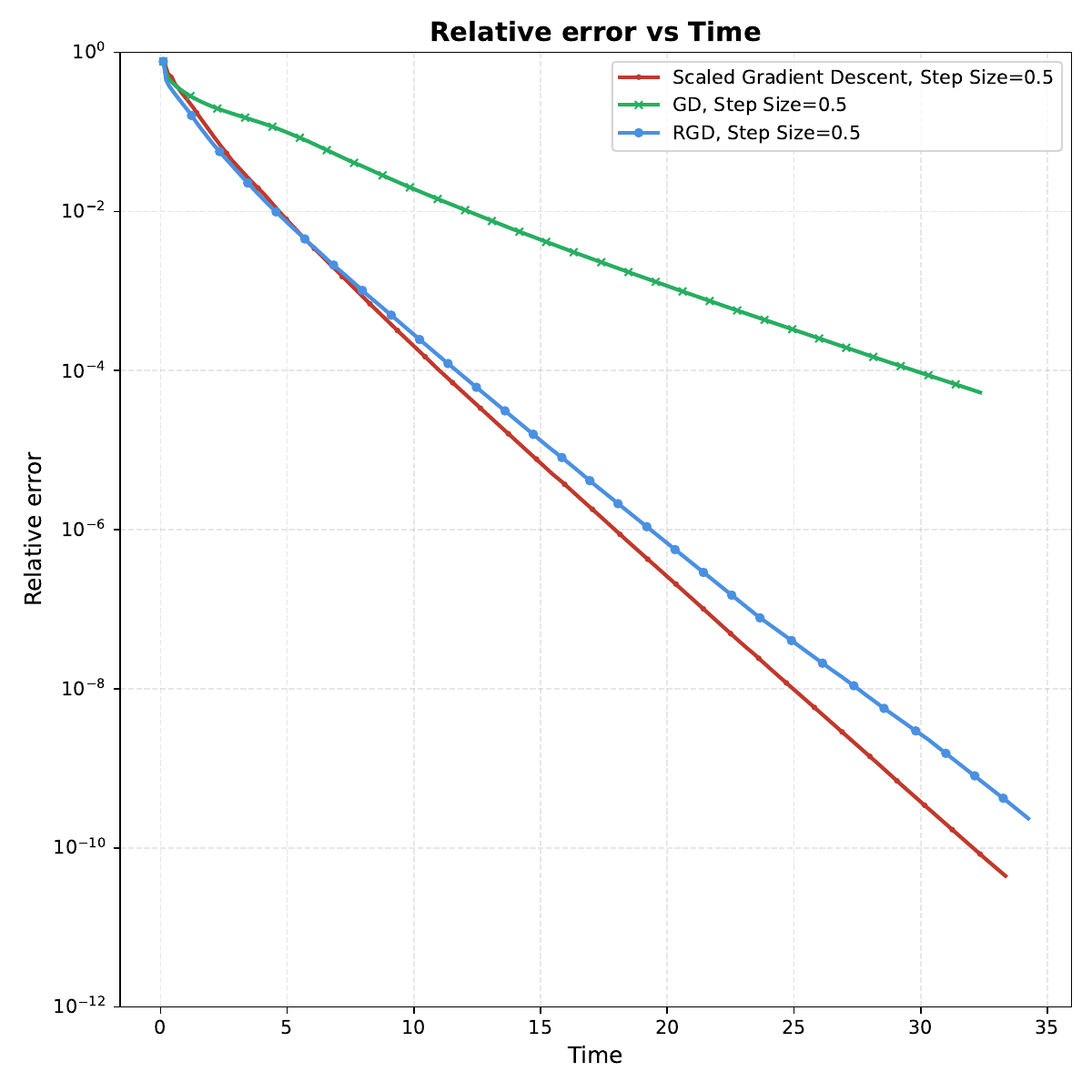}
     \caption{Relative error with iterations (left); relative error with runtime (right).}
    \label{fig:iteration and time with re}
\end{figure}

\begin{figure}[htbp]
    \centering
    \includegraphics[width=0.7\textwidth]{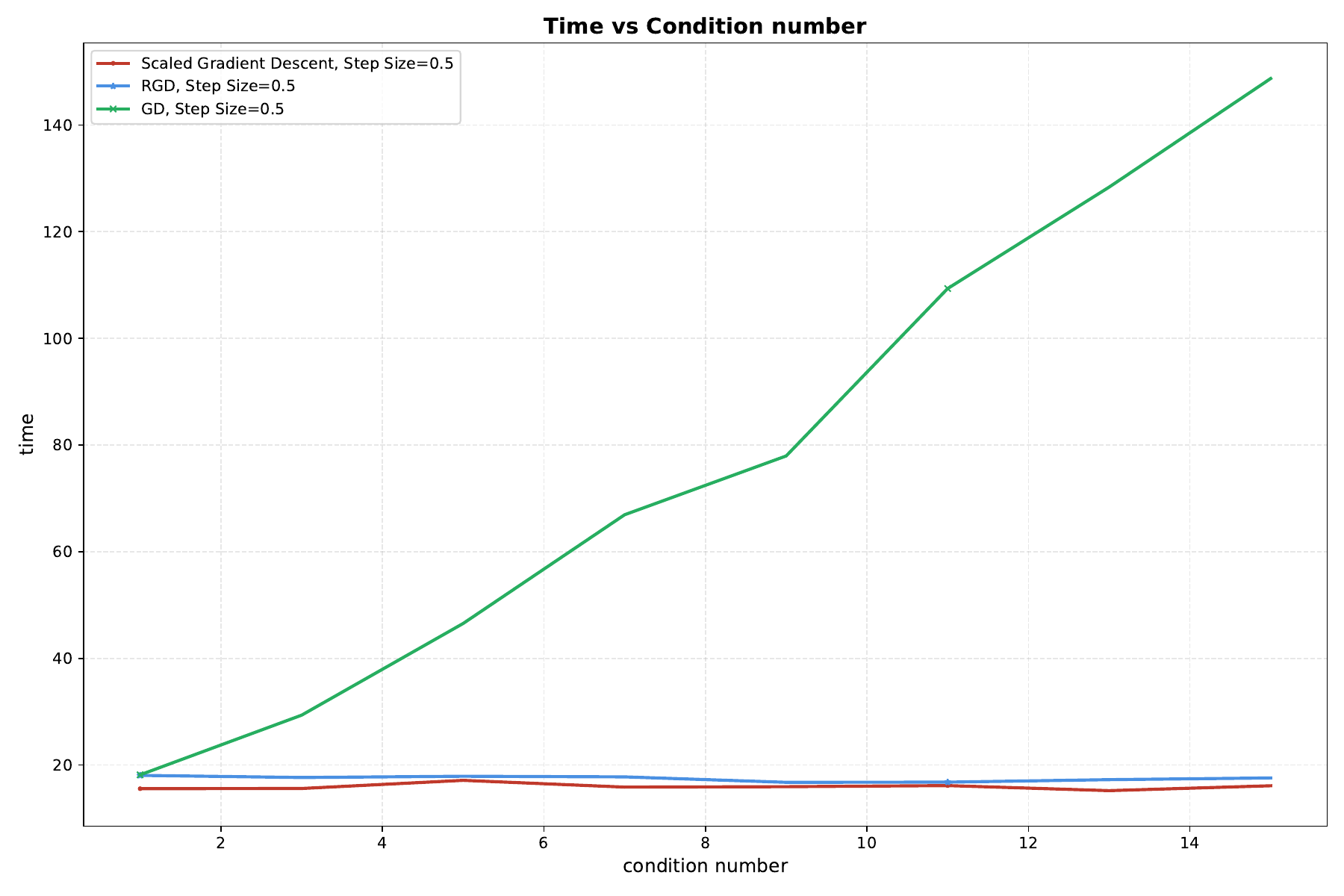}
    \caption{Time cost of different methods under varying condition numbers.}
    \label{fig:time condition}
\end{figure}
In the second experiment, the performance of ScaledGD, GD, and RGD is evaluated on ill-conditioned low-rank matrix recovery with large condition number $\kappa$. The dimensions are fixed as $n_1 = n_2 = 100$, $r = 30$, $m =5n_1r$,
and the maximum number of iterations is set to $N_1=1000$. The condition number $\kappa$ varies from $1$ to $15$, and for each method we record the time required to obtain an estimate $\XX_{t}$ satisfying  $\normf{\XX_{t}-\XXstar}   / \normf{\XXstar}\le 10^{-6}$. The results in Figure~\ref{fig:time condition} show that, as  $\kappa$ increases, the computational cost of ScaledGD and RGD remains relatively stable, whereas that of GD grows roughly linearly with $\kappa$, thereby confirming the effectiveness of ScaledGD for ill-conditioned low-rank matrix recovery.

Finally, the success rate is examined as a function of the number of measurements $m$ and the rank $r$ of the target matrix. The dimensions are fixed at $n_1 = 70$ and $n_2 = 80$, with condition number $\kappa = 5$. The rank $r$  varies from $1$ to $20$, while $m$ ranges from $1000$ to $13000$. For each $(r, m)$ pair, 10 independent trials are conducted, and a trial is declared successful if the relative error satisfies $\normf{\XX_{\NN_2} - \XXstar} / \normf{\XXstar}  \le 10^{-8}$ after $N_2=100$ iterations. As shown in Figure~\ref{fig:transition diagram}, a clear phase transition phenomenon is observed, and the phase transition boundary depends linearly on the rank $r$, which is consistent with the theoretical predictions.
\begin{figure}[htbp]
    \centering
    \includegraphics[width=0.7\textwidth]{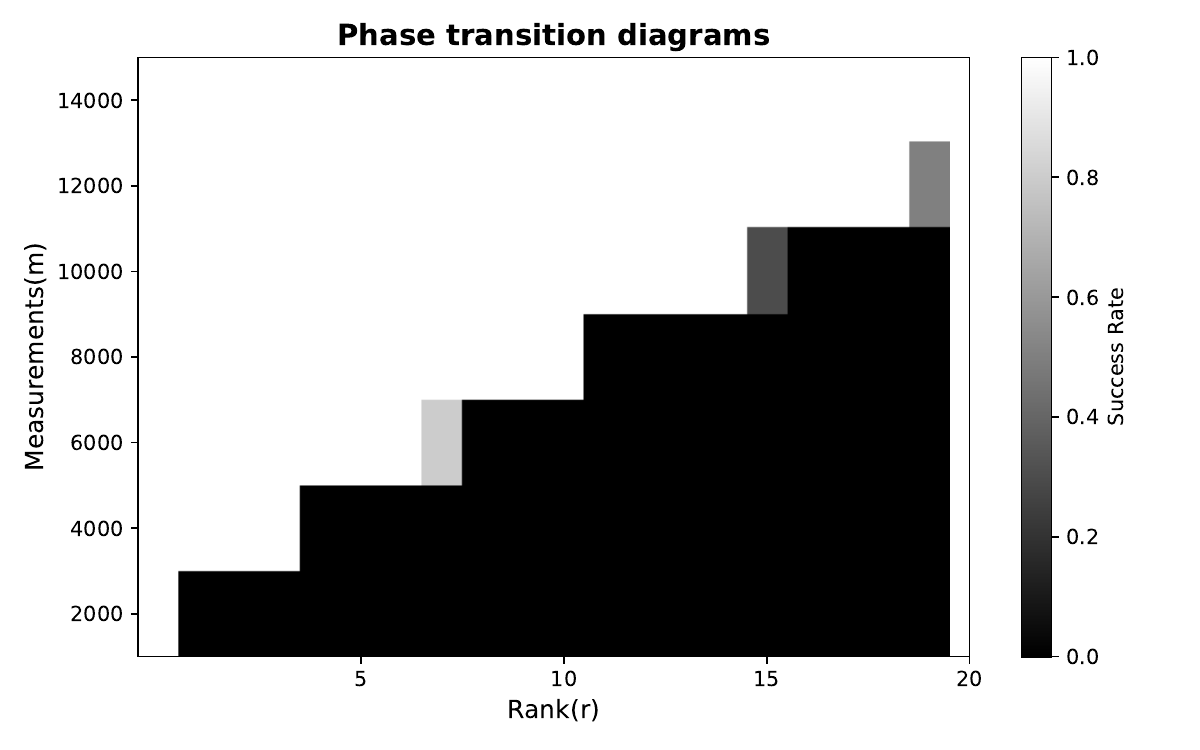}
    \caption{Phase transition diagrams: $m$ vs $r$. Black indicates failure,  and white indicates success.}
    \label{fig:transition diagram}
\end{figure}

\section{Discussions}\label{sec:discussions}
In this paper, we demonstrate that Scaled Gradient Descent can recover a rank-$r$ matrix $\XXstar \in \R^{n_1\times n_2}$  within $O\left(\log(1/\epsilon)\right)$ iterations to achieve $\epsilon$-accuracy, with an iteration complexity that is  independent of the condition number. Moreover, the sample complexity is $O (\left(n_1+n_2\right)r \kappa^2)$, which matches the optimal information-theoretic scaling. Compared with the recent work of St\"{o}ger and Zhu \cite{stoger2025nonconvexmatrixsensingbreaking}, this improves the iteration complexity from $O\left(\kappa^2 \log(1/\epsilon)\right)$ to $O\left(\log(1/\epsilon)\right)$, and, at the same time, removes the PSD assumption, extending the guarantees to general low-rank matrix recovery.

There are some interesting problems for future research:
\begin{itemize}
\item \textbf{Removing the condition number in sample complexity.} The sample complexity of ScaleGD established in this paper is $O(\left(n_1+n_2\right)r \kappa^2)$. Compared with convex methods, which achieve sample complexity $O(\left(n_1+n_2\right)r)$, there remains a gap.  In fact, this issue is shared by all existing nonconvex methods for low-rank matrix recovery, and removing the dependence on the condition number in the sample complexity is still an open problem. The requirement of $O (r\left(n_1+n_2\right)\kappa^2)$ measurements arises solely from the spectral initialization step. Notably, the Stage-Alternating Minimization algorithm \cite{jain2013low} succeeds in eliminating the condition number dependence in its sampling requirement. This suggests that incorporating similar ideas into ScaledGD may offer a promising direction for further reducing its sample complexity.
\item \textbf{Convergence under random and small initialization.} The convergence analysis of ScaledGD in this paper relies critically on spectral initialization, whereas in practice random, small-norm initializations are often preferred. Establishing rigorous convergence guarantees for ScaledGD under such random and small initializations is an interesting and important direction for future work.
\item \textbf{Overparameterization in low-rank matrix sensing.} In many practical scenarios, the true rank of the target matrix is unknown, leading naturally to overparameterized models. Investigating whether the techniques developed here can be extended to handle overparameterized settings, such as those encountered in PrecGD and ScaledGD($\lambda$) \cite{xu2023power,zhang2021preconditioned}, constitutes another promising direction for future research.
\end{itemize}


\appendix

\section{Proofs of supporting lemmas in Section \ref{sec:proof main result}}
In this section, we prove Lemmas \ref{lemma:auxsequencecloseness}, \ref{lemma:localconv}, and \ref{Le:4_6}.

\subsection{Proof of Lemma \ref{lemma:auxsequencecloseness}}\label{subsec:proofauxsequencecloseness}
To prove Lemma \ref{lemma:auxsequencecloseness}, we first need the follow Lemma, which bounds $ \normf{ \XXt -\XXtw } $ in terms  of $\normf{\VXXT \left( \XXt -\XXtw \right) } $ and $\normf{\left( \XXt -\XXtw \right)\WXX} $.

\begin{lemma}\label{lemma:auxcloseness1}
Let $\left(\ww,\vv\right) \in \mathcal{N}$, where $\mathcal{N}$ is given in \eqref{eq:fixedNep}. Assume that   
\begin{align}
    \max\{\specnorm{\VXXPT \VLLt}, \quad \specnorm{\WXXPT \VRRt} \}&\le \frac{1}{8},\label{assump:lemma12_1} \\
    \specnorm{ \XXt - \XXstar}
    &\le
    \frac{ \sigma_{\min} \left( \XXstar \right)}{80},\label{assump:lemma12_2}\\
    \normf{ \XXt - \XXtw }
    &\le 
    \frac{\sigma_{\min} \left( \XXstar \right)}
    {80  }.\label{assump:lemma12_3}
\end{align}
Then it holds
\begin{align}\label{ineq:lemmaequ2}
    \normf{\XXt - \XXtw}
    \le
    \frac{5}{4}\left(
        \normf{ \VXXT \left(  \XXt - \XXtw \right) }+\normf{ \left( \XXt - \XXtw \right) \WXX}\right)
\end{align}
and 
\begin{align}\label{ineq:lemmaequ1}
    \normf{ \VXXPT \left(\XXt - \XXtw \right) \WXXP} 
    \le
    \frac{1}{4}\left(
        \normf{ \VXXT \left(  \XXt - \XXtw \right) }+\normf{ \left( \XXt - \XXtw \right) \WXX}\right).
\end{align}
\end{lemma}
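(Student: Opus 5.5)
Write $\boldsymbol{D}:=\XXt-\XXtw$ and decompose it into four blocks with respect to the orthogonal splittings $\R^{n_1}=\mathrm{span}(\VXX)\oplus\mathrm{span}(\VXXP)$ and $\R^{n_2}=\mathrm{span}(\WXX)\oplus\mathrm{span}(\WXXP)$. This gives
\[
\normf{\boldsymbol{D}}^2=\normf{\VXXT\boldsymbol{D}}^2+\normf{\VXXPT\boldsymbol{D}\WXX}^2+\normf{\VXXPT\boldsymbol{D}\WXXP}^2 ,
\]
so $\normf{\boldsymbol{D}}\le \normf{\VXXT\boldsymbol{D}}+\normf{\boldsymbol{D}\WXX}+\normf{\VXXPT\boldsymbol{D}\WXXP}$. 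Hence \eqref{ineq:lemmaequ2} follows immediately from \eqref{ineq:lemmaequ1}, and the whole task is to prove \eqref{ineq:lemmaequ1}: the doubly-orthogonal block of the difference must be controlled by the two blocks touching the column and row spaces of $\XXstar$.

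For the key step, I exploit the fact that both matrices have rank $r$ and are close to $\XXstar$. The hypotheses \eqref{assump:lemma12_2} and \eqref{assump:lemma12_3} give $\specnorm{\XXtw-\XXstar}\le\sigma_{\min}(\XXstar)/40$. Consequently $\VXXT\XXt\WXX$ and $\VXXT\XXtw\WXX$ are both invertible, with smallest singular value at least $(1-1/40)\sigma_{\min}(\XXstar)$.

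For a rank-$r$ matrix $\XX$ whose block $\XX_{11}:=\VXXT\XX\WXX$ is invertible, the Schur complement identity holds:
\[
\VXXPT\XX\WXXP=(\VXXPT\XX\WXX)\,\XX_{11}^{-1}\,(\VXXT\XX\WXXP).
\]
I apply this to both $\XXt$ and $\XXtw$ and subtract, telescoping the difference of the products into three terms. In each term one factor is a block of $\boldsymbol{D}$ of the form $\VXXPT\boldsymbol{D}\WXX$, $\VXXT\boldsymbol{D}\WXXP$, or $\XX_{11}^{-1}-\XX_{11}^{(\ww,\vv)-1}=\XX_{11}^{-1}(\VXXT\boldsymbol{D}\WXX)\XX_{11}^{(\ww,\vv)-1}$. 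The remaining factors are the matrices $\VXXPT\XXt\WXX\,\XX_{11}^{-1}$ and $\XX_{11}^{-1}\VXXT\XXt\WXXP$ (or their virtual analogues). Their operator norms are at most $\specnorm{\XXt-\XXstar}/(0.97\sigma_{\min})\le 1/70$, since $\VXXPT\XXstar=0$; the same bound holds for the virtual sequence with $1/38$.

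Putting this together gives
\[
\normf{\VXXPT\boldsymbol{D}\WXXP}\lesssim \tfrac{1}{30}\left(\normf{\VXXPT\boldsymbol{D}\WXX}+\normf{\VXXT\boldsymbol{D}\WXXP}+\normf{\VXXT\boldsymbol{D}\WXX}\right),
\]
which is well below the factor $1/4$ needed, since each block on the right is dominated by $\normf{\VXXT\boldsymbol{D}}$ or $\normf{\boldsymbol{D}\WXX}$. Assumption \eqref{assump:lemma12_1} on the subspace angles $\specnorm{\VXXPT\VLLt}\le 1/8$ gives an alternative route: one can factor $\XXt=\VLLt\Sg_t\VRRtP$ and write $\VXXPT\XXt\WXXP=(\VXXPT\VLLt)(\VXXT\VLLt)^{-1}\VXXT\XXt\WXXP$, with $\specnorm{(\VXXT\VLLt)^{-1}}\le (1-1/64)^{-1/2}$. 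I would use that route if the Schur-complement bookkeeping for the virtual iterate proves awkward.

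The main obstacle is verifying the invertibility and the norm bounds for the virtual iterate $\XXtw$. That iterate is only known to be Frobenius-close to $\XXt$, so its subspace angles must be inferred, for example via Wedin's theorem, from \eqref{assump:lemma12_2} and \eqref{assump:lemma12_3}. This is exactly why the Schur-complement formulation is preferable: it needs only the operator-norm closeness to $\XXstar$, and not the angle condition for $\XXtw$. Once that is settled, the remaining constants are routine.
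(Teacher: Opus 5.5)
Your argument is correct, but it reaches the lemma by a different route from the paper.

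The paper works in the column space of $\XXt$. It writes $\XXt=\VLLt\VLLtP\XXt$ and $\XXtw=\VLLtw\VLLtwP\XXtw$, and splits $\VXXPT(\XXt-\XXtw)\WXXP$ into two pieces:
\begin{itemize}
\item $\VXXPT\VLLt\VLLtP(\XXt-\XXtw)\WXXP$, controlled by the angle hypothesis \eqref{assump:lemma12_1};
\item $\VXXPT(\VLLt\VLLtP-\VLLtw\VLLtwP)(\XXtw-\XXstar)\WXXP$, controlled by the projector-perturbation bound of Lemma~\ref{lemma:projection matrices}.
\end{itemize}
This gives $\normf{\VXXPT\boldsymbol{D}\WXXP}\le\frac15\normf{\boldsymbol{D}}$. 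The paper absorbs this into the block triangle inequality to get \eqref{ineq:lemmaequ2} with constant $1/(1-\frac15)=\frac54$, then substitutes back to get \eqref{ineq:lemmaequ1}.

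You instead stay in the singular bases of $\XXstar$. You use the Schur-complement identity for rank-$\le r$ matrices whose $r\times r$ block is invertible. This expresses the $(2,2)$ block of $\boldsymbol{D}$ directly through the other three blocks, with coefficients of order $1/40$ or smaller. So you prove \eqref{ineq:lemmaequ1} first, with a constant near $1/38$ instead of $1/4$, and \eqref{ineq:lemmaequ2} follows with no absorption step.

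\textbf{What each route buys.} Yours uses neither \eqref{assump:lemma12_1} nor Lemma~\ref{lemma:projection matrices}. It needs only two things: operator-norm closeness of both iterates to $\XXstar$ (which you correctly derive for $\XXtw$ from \eqref{assump:lemma12_2}--\eqref{assump:lemma12_3}), and the fact that both iterates have rank at most $r$ by construction. It is therefore more self-contained and works under weaker hypotheses. The paper's route has the advantage of reusing the projector machinery it needs anyway for $\MM_3$ and $\MM_4$ in the proof of Lemma~\ref{lemma:auxsequencecloseness}.

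\textbf{Small points.}
\begin{itemize}
\item A sign was dropped: $\XX_{11}^{-1}-(\XX_{11}^{(\ww,\vv)})^{-1}=-\XX_{11}^{-1}(\VXXT\boldsymbol{D}\WXX)(\XX_{11}^{(\ww,\vv)})^{-1}$. This is harmless for norms.
\item The middle telescoped term carries mixed factors such as $(\VXXPT\XXtw\WXX)\XX_{11}^{-1}$. These are bounded by the same computation.
\item The ``main obstacle'' you flag at the end is already settled by your own invertibility argument, so no Wedin-type angle bound for $\XXtw$ is needed.
\end{itemize}
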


\begin{proof}
Let  $\XXt=\VLLt\SSigma_{t}\VRRt^\top $ be the compact SVD of $\XXt$. Since $\VLLt\VLLtP$ and $\VLLtw\VLLtwP$ are the orthogonal projection matrices onto the column spaces of $\XXt$ and $\XXtw$, respectively,  we have
    \begin{align} 
        &\normf{\VXXPT \left(\XXt - \XXtw \right) \WXXP}=\normf{\VXXPT \left(\VLLt\VLLtP\XXt - \VLLtw\VLLtwP\XXtw \right) \WXXP} \notag \\
        &\overleq{(\textup{i})} 
        \normf{\VXXPT \VLLt\VLLtP\left(\XXt -\XXtw\right)\WXXP } \notag \\&+\normf{\VXXPT \left(\VLLt\VLLtP- \VLLtw\VLLtwP\right)\XXtw\WXXP } \notag 
 \end{align}
   \begin{align} 
    &\overleq{(\textup{ii})} 
         \specnorm{\VXXPT \VLLt}\normf{\XXt -\XXtw }+\normf{\VLLt\VLLtP- \VLLtw\VLLtwP}\specnorm{\left(\XXtw-\XXstar\right)\WXXP} \notag \\
               &\overleq{(\textup{iii})}  \frac{1}{8}\normf{\XXt -\XXtw }+\frac{\sqrt{2}\normf{\XXt-\XXtw}}{\sigma_{\min} (\XXt)}\specnorm{\left(\XXtw-\XXstar\right)\WXXP}\notag \\
        &\overleq{(\textup{iv})}\frac{1}{5}\normf{\XXt -\XXtw}, \label{ineq:lemmaequ4} 
        \end{align}
where  $(\textup{i})$ uses the triangle inequality,  $(\textup{ii})$ uses $\XXstar\WXXP=0$, and  $(\textup{iii})$ follows from \eqref{assump:lemma12_1} and Lemma \ref{lemma:projection matrices}. For $(\textup{iv})$, we use the fact that  $\sigma_{\min} (\XXt)\ge \sigma_{\min} (\XXstar)/{2}$ 
     and 
    \begin{align*}
        \specnorm{\left(\XXtw-\XXstar\right)\WXXP}&\le \specnorm{\XXt-\XXstar}+\specnorm{\XXtw-\XXt}\\
        &\le \frac{{\sigma_{\min}\left( \XXstar \right)}}{40},
    \end{align*}
    due to the assumptions \eqref{assump:lemma12_2} and \eqref{assump:lemma12_3}.  Next, we bound $\normf{\XXt -\XXtw}$. By triangle inequality,
    \begin{align*}
        &\normf{\XXt - \XXtw}\\
        &\le
        \normf{\VXXT \left( \XXt - \XXtw \right)}
        +
        \normf{ \VXXPT \left( \XXt - \XXtw \right) \WXX}+
        \normf{\VXXPT \left(\XXt - \XXtw \right) \WXXP}\\
        &\le
        \normf{\VXXT \left( \XXt - \XXtw \right)}+\normf{ \left( \XXt - \XXtw \right) \WXX}+\frac{1}{5} \normf{  \XXt - \XXtw},
    \end{align*}
    where the last inequality follows from inequality \eqref{ineq:lemmaequ4}. Rearranging gives
    \begin{align*}
        \normf{\XXt - \XXtw}
        &\le 
        \frac{1}{1-\frac{1}{5}}\left(
        \normf{ \VXXT \left(  \XXt - \XXtw \right) }+\normf{ \left( \XXt - \XXtw \right) \WXX}\right)\\
        &\le \frac{5}{4}\left(
        \normf{ \VXXT \left(  \XXt - \XXtw \right) }+\normf{ \left( \XXt - \XXtw \right) \WXX}\right),
    \end{align*}
    which is \eqref{ineq:lemmaequ2}. Substituting this bound into \eqref{ineq:lemmaequ4} yields \eqref{ineq:lemmaequ1}. This completes the proof.
\end{proof}

Now, we are ready to prove Lemma \ref{lemma:auxsequencecloseness}.

\begin{proof}[Proof of Lemma \ref{lemma:auxsequencecloseness}]
        
\noindent For convenience, denote
    \begin{align*}
     \MM_t     :&= \left( \Aops \right) \left( \XXstar - \XXt \right)=
            \XXstar - \XXt 
            + \bracing{=: \EEb_t}{\left( \Aops - \IdOp \right) \left(\XXstar - \XXt\right)}, \\
     \MM_{t}^{\left(\ww,\vv \right)} &:= \left( \Aopws \right) \left( \XXstar - \XXtw \right)\\
           &=
            \XXstar - \XXtw
            + \bracing{=: \EEb_{t}^{\left(\ww,\vv \right)}}{\left( \Aopws - \IdOp \right) \left(\XXstar - \XXtw\right)}.
    \end{align*} 
    From the update rule \eqref{equ:gradientdescentdefinition} and the corresponding virtual iteration in Algorithm~\ref{algorithm:SGDvirtual},
    \[
     \LLtplus=  \LLt+ \mu \MM_t  \RRt\left(\RRtT\RRt\right)^{-1},  \quad \RRtplus= 
        \RRt+ \mu \MM_t^\top  \LLt\left(\LLtT\LLt\right)^{-1},
    \]
    and
    \begin{align*}
     \LLtplusw&= \LLtw+ \mu \MM_{t}^{\left(\ww,\vv \right)}  \RRtw\left(\RRtwT\RRtw\right)^{-1}, \\
      \quad \RRtplusw &= \RRtw+ \mu {\MM_{t}^{\left(\ww,\vv \right)}}^\top \LLtw\left(\LLtwT\LLtw\right)^{-1}.
    \end{align*}
   Let  $\VLLt\SSigma_{t}\VRRt^\top $ be the  compact SVD  of $\XXt=\LLt \RRtT$,  with $\VLLtP\VLLt=\II_{r}$ and  $\VRRt^\top \VRRt=\II_{r}$. There exists an invertible matrix $\QQ$
     such that $\LLt=\VLLt\SSigma_{t}^{1/2}\QQ$ and $\RRt=\VRRt\SSigma_{t}^{1/2}\QQ^{-\top}$. It follows that 
  \begin{equation*}
     \LLt \left(\LLtT\LLt\right)^{-1}\LLtT=\VLLt\VLLtP, \qquad  \RRt \left(\RRtT\RRt\right)^{-1}\RRtT=\VRRt\VRRtP;
  \end{equation*}
and
\[
     \RRt\left(\RRtT\RRt\right)^{-1}\left(\LLtT\LLt\right)^{-1}\LLtT=
     \VRRt \SSigma_{t}^{-1} \VLLtP.
\]
A direct calculation gives
    \begin{align}
        \XXtplus
        &=
       \XXt+\mu\MM_t\VRRt\VRRtP
        +\mu\VLLt\VLLtP\MM_t
        +\mu^{2}\MM_t\RRt\left(\RRtT\RRt\right)^{-1}\left(\LLtT\LLt\right)^{-1}\LLtT\MM_t \notag \\
        &=
        \XXt+\mu\left( \XXstar -\XXt\right)\VRRt\VRRtP
        +\mu\VLLt\VLLtP\left( \XXstar -\XXt\right)
        +\mu\EEb_t\VRRt\VRRtP+\mu \VLLt\VLLtP\EEb_t  \notag\\
        &+\mu^{2}\MM_t\VRRt \SSigma_{t}^{-1} \VLLtP\MM_t.  \label{eq:Xt1}
    \end{align}
    Similarly, let  $\XXtw=\VLLtw \SSigma_{t}^{\left(\ww,\vv \right)} \VRRtwP $ be the compact SVD. Then
    \begin{align}
       \XXtwplus&=
        \XXtw+\mu\left( \XXstar -\XXtw\right)\VRRtw\VRRtwP
        +\mu\VLLtw\VLLtwP\left( \XXstar -\XXtw\right)  \notag \\
        &+\mu\EEb_{t}^{\left(\ww,\vv \right)}\VRRtw\VRRtwP+\mu \VLLtw\VLLtwP\EEb_{t}^{\left(\ww,\vv \right)} \notag \\
        &+\mu^{2}\MM_{t}^{\left(\ww,\vv \right)}\VRRtw\left({\SSigma_{t}^{\left(\ww,\vv \right)}}\right) ^{-1}\VLLtwP\MM_{t}^{\left(\ww,\vv \right)}.     \label{eq:Xt1uv}
    \end{align}
    Subtracting \eqref{eq:Xt1uv} from \eqref{eq:Xt1} yields
    \begin{align}
        &\XXtplus-\XXtwplus  \label{ineq:intern42}\\
        &=  \underbrace{\left( \II_{n_1}  -\mu \VLLt\VLLtP  \right) \left(\XXt-\XXtw \right)\left(\II_{n_2}  -\mu \VRRt\VRRtP  \right)}_{\MM_1}  \notag \\
        &-\mu^2  \underbrace{\VLLt\VLLtP\left(\XXt-\XXtw \right)\VRRt\VRRtP}_{\MM_2}+\mu \underbrace{ \left(\VLLt\VLLtP-\VLLtw\VLLtwP \right)\left( \XXstar -\XXtw\right)}_{\MM_3} \notag\\
        &+\mu \underbrace{\left( \XXstar -\XXtw\right)\left(\VRRt\VRRtP-\VRRtw\VRRtwP \right)}_{\MM_4} \notag \\
         &+\mu  \underbrace{\xkh{ \EEb_t\VRRt\VRRtP - \EEb_{t}^{\left(\ww,\vv \right)} \VRRtw\VRRtwP+\VLLt\VLLtP\EEb_t
        -  \VLLtw\VLLtwP\EEb_{t}^{\left(\ww,\vv \right)} }}_{\MM_5} \notag \\
         &+\mu^2  \underbrace{ \xkh{\MM_t \VRRt \SSigma_{t}^{-1} \VLLtP\MM_t-\MM_{t}^{\left(\ww,\vv \right)} \VRRtw \left(\SSigma_{t}^{\left(\ww,\vv \right)} \right)^{-1} \VLLtwP\MM_{t}^{\left(\ww,\vv \right)}}}_{\MM_6}.  \notag 
    \end{align}
    To prove the lemma, upper bounds are derived for  $ \normf{ \MM_i} $, where $i=1,2,\ldots, 6$.\\
        
    \noindent\textbf{Estimating $ \normf{ \VXXT\MM_1}$ and $ \normf{\MM_1\WXX}$:} Observe that
    \begin{align*}
        \VXXT\MM_1&=\left( \VXXT  -\mu \VXXT \VLLt\VLLtP  \right)\left(\VXX \VXXT +\VXXP\VXXPT\right) \left(\XXt-\XXtw \right)\left(\II_{n_2}  -\mu \VRRt\VRRtP  \right)\\
        &=\left(\II_{n_1}  -\mu \VXXT \VLLt\VLLtP\VXX\right)\VXXT\left(\XXt-\XXtw\right)\left(\II_{n_2}  -\mu \VRRt\VRRtP  \right)\\
        &-\mu \VXXT\VLLt\VLLtP \VXXP\VXXPT\left(\XXt-\XXtw\right)\left(\II_{n_2}  -\mu \VRRt\VRRtP  \right).
    \end{align*}
    By triangle inequality and note that $\specnorm{\II_{n_2}  -\mu \VRRt\VRRtP } \le 1$, we have 
    \begin{align*}       
       \normf{\VXXT\MM_1}
        &\le 
        \bracing{=:L_1}{\specnorm{\II_{n_1}  -\mu \VXXT \VLLt\VLLtP\VXX } \normf{\VXXT\left(\XXt-\XXtw\right)}}\\
        &+
        \bracing{=:L_2}{\mu \normf{\VXXT\VLLt\VLLtP \VXXP\VXXPT\left(\XXt-\XXtw\right)}}.
    \end{align*}
    For $L_1$, it is easy to see
    \begin{align*}
        L_1 =
        &\left(1-\mu \sigma^2_{\min}(\VXXT\VLLt)\right)\normf{\VXXT\left(\XXt-\XXtw\right)}\\
        &\le  \left(1-\left(1-\constone^2\right)\mu \right)\normf{\VXXT\left(\XXt-\XXtw\right)},
    \end{align*}
    where the  inequality comes from 
        \begin{align*}
       \sigma^2_{\min}(\VXXT\VLLt) \ge 1-\constone^2,
    \end{align*}
    which can be verified from the fact that   
    $\VLLtP\VLLt=\left(\VXXT\VLLt\right)^\top\VXXT\VLLt+\left(\VXXPT\VLLt\right)^\top\VXXPT\VLLt$ and assumption \eqref{assump:closeness5}.  For  $L_2$, we have 
     \begin{align*}
         \LL_2 &=\normf{\VXXT\VLLt\VLLtP \VXXP\VXXPT\left(\XXt-\XXtw\right)\left( \WXX \WXXT +\WXXP\WXXPT\right)}\\
         &\le
         \specnorm{\VXXT\VLLt\VLLtP \VXXP}\normf{\VXXPT\left(\XXt-\XXtw\right)\WXX\WXXT}\\
         &+\specnorm{\VXXT\VLLt\VLLtP\VXXP}\normf{\VXXPT\left(\XXt -\XXtw\right)\WXXP\WXXPT}\\
         &\overleq{(a)}  \constone\left(
         \normf{\left(\XXt-\XXtw\right)\WXX}+\normf{\VXXPT\left(\XXt-\XXtw\right)\WXXP}\right)\\
         &\overleq{(b)} 
         \frac{\constone}{4}
         \normf{\VXXT\left(\XXt-\XXtw\right)}+\frac{5\constone}{4}\normf{\left(\XXt-\XXtw\right)\WXX}, 
     \end{align*}
     where the inequality $(a)$ follows from assumption\eqref{assump:closeness5} and the inequality $(b)$ comes from Lemma \ref{lemma:auxcloseness1}. Putting  $L_1$ and $L_2$ together, we obtain  
     \begin{align*}
         &\normf{\VXXT\MM_1}\\
         &\le
          \left(1-\left(1-\constone^2-\frac{\constone}{4}\right)\mu \right)\normf{\VXXT\left(\XXt-\XXtw\right)} +\frac{5\constone\mu}{4}\normf{\left(\XXt-\XXtw\right)\WXX}\\   
         &\le 
        \left(1-\frac{15\mu}{16}\right)\normf{\VXXT\left(\XXt-\XXtw\right)}+\frac{5\constone\mu}{4}\normf{\left(\XXt-\XXtw\right)\WXX},
     \end{align*}
    where the last inequality follows from the assumption that $c_1\le 1/20$.
    Similarly, we can obtain that
   \[
        \normf{\MM_1\WXX}\le
        \left(1-\frac{15\mu}{16}\right)\normf{\left(\XXt-\XXtw\right)\WXX}+\frac{5\constone\mu}{4}\normf{\VXXT\left(\XXt-\XXtw\right)}.
\]

    \noindent\textbf{Estimating $ \normf{\MM_2} $:} Triangle inequality gives
  \[
       \normf{\MM_2} \le
       \specnorm{\VLLt\VLLtP}\normf{\XXt-\XXtw }\specnorm{\VRRt\VRRtP} \le 
     \normf{\XXt-\XXtw}.
\]

    \noindent\textbf{Estimating $ \normf{\MM_3} $:} Note that 
\[
        \MM_3 = \left(\VLLt\VLLtP-\VLLtw\VLLtwP \right) \xkh{ \left( \XXstar -\XXt\right)+\left(\XXt-\XXtw\right)}.
\]
    Applying the triangle inequality, we obtain
    \begin{align*}
       \normf{\MM_3} 
       &\le \normf{\VLLt\VLLtP-\VLLtw\VLLtwP} \xkh{ \specnorm{\XXstar -\XXt}+\specnorm{\XXt-\XXtw}} \\
       &\overleq{(\textup{i})}  \frac{\sqrt{2}\normf{\XXt-\XXtw}}{\sigma_{\min} (\XXt)} \cdot \left(\consttwo\sigma_{\min}(\XXstar)+\specnorm{\XXt-\XXtw}\right) \\
       &\overleq{(\textup{ii})} \frac{\sqrt{2}\left(\consttwo+\constthree\right)}{1-\consttwo}\normf{\XXt-\XXtw} , 
    \end{align*}
where the inequality $(\textup{i})$ arises from Lemma \ref{lemma:projection matrices} and assumption\eqref{assump:closeness7}, and the inequality $(\textup{ii})$ follows from Weyl’s inequality, 
$\sigma_{\min} (\XXt) \ge \left(1-\consttwo\right)\sigma_{\min} (\XXstar)$ and assumption \eqref{assump:closeness8}.

    \noindent\textbf{Estimating $ \normf{\MM_4} $:} Using the similar arguments to $\MM_3$, we have 
    \begin{align*}
       \normf{\MM_4} 
       \le  \frac{\sqrt{2}\left(\consttwo+\constthree\right)}{1-\consttwo}\normf{\XXt-\XXtw}
       .
    \end{align*}

    \noindent\textbf{Estimating $ \normf{\MM_5} $:} By Lemma \ref{lemma:key}, we have
    \begin{align} \label{eq:constfive}
    \specnorm{ \left( \Aops -\IdOp \right) \left( \XXstar - \XXt \right)  }
        &\le  4c^{\prime} \specnorm{ \XXstar - \XXt }+ 6c^{\prime}\supw  \normf{\XXt-\XXtw}  \\
        &\overleq{(a)} 
        \left(4c^{\prime}\consttwo+6c^{\prime}\constthree\right)\sigma_{\min} (\XXstar) \notag 
        =:\constfive \sigma_{\min} (\XXstar),
    \end{align}
where $(a)$ uses assumption\eqref{assump:closeness7} and \eqref{assump:closeness8}. Next, we decompose $ \normf{\MM_5} $ as
    \begin{equation}
       \MM_5
        =
        \bracing{=:\OO_1}{\VLLt\VLLtP\EEb_t- \VLLtw\VLLtwP\EEb_{t}^{\left(\ww,\vv \right)}}+
        \bracing{=:\OO_2}{\EEb_t\VRRt\VRRtP-\EEb_{t}^{\left(\ww,\vv \right)}\VRRtw\VRRtwP}. \label{equ:intern1}
    \end{equation}
    For the first term, observe that
    \begin{align*}
        &\normf{\OO_1} \\& \overleq{(a)}
         \normf{\VLLt\VLLtP\EEb_t- \VLLtw\VLLtwP\EEb_t}+\normf{\VLLtw\VLLtwP\EEb_t- \VLLtw\VLLtwP \EEb_{t}^{\left(\ww,\vv \right)}}\\
       & \le
         \specnorm{\EEb_t}\normf{\VLLt\VLLtP-\VLLtw\VLLtwP}+\normf{\VLLtw\VLLtwP\left(\EEb_t-\EEb_{t}^{\left(\ww,\vv \right)}\right)}\\
        &\overleq{(b)}
         \specnorm{\EEb_t}\frac{\sqrt{2}}{\left(1-\consttwo\right)\sigma_{\min} (\XXstar)}\normf{\XXt-\XXtw}+2c^{\prime} \specnorm{\XXstar - \XXt}
       +  4c^{\prime} \normf{\XXt - \XXtw}\\
      &\overleq{(c)}
      \left(4c^{\prime}+\frac{\sqrt{2}\constfive}{1-\consttwo} \right) \normf{\XXt - \XXtw}+2c^{\prime}\specnorm{\XXstar - \XXt},
    \end{align*} 
    where $(a)$ is by the triangle inequality,  $(b)$  uses Lemma \ref{lemma:projection matrices} together with
    \begin{align*}
    \EEb_t-\EEb_{t}^{\left(\ww,\vv \right)}
    =&\left( \Aops - \IdOp \right)\left(\XXstar-\XXt\right)
        -\left( \Aopws - \IdOp \right)\left(\XXstar-\XXtw\right)\\
    =&\left( \Aops - \Aopws \right) \left( \XXstar - \XXt \right)+\left(\IdOp-\Aopws\right)\left(\XXt-\XXtw\right), 
    \end{align*}
    and 
    \begin{align*}
    \normf{\VLLtwP\left(\EEb_t-\EEb_{t}^{\left(\ww,\vv \right)}\right)}\le 
  2c^{\prime}\specnorm{\XXstar - \XXt}
     +4c^{\prime} \normf{ \XXt - \XXtw },
    \end{align*}
    due to Lemma \ref{lemma:auxestimates}.
   The  inequality $(c)$ follows from assumption \eqref{assump:closeness7} and inequality \eqref{eq:constfive}.  The same argument applies to  $\OO_2$, so its bound is omitted. Consequently,
    \begin{align*}
        \normf{\MM_5}
        \le 
        \left(8 c^{\prime} +\frac{2\sqrt{2}\constfive}{1-\consttwo} \right) \normf{\XXt - \XXtw }+4c^{\prime}\specnorm{\XXstar - \XXt}.
    \end{align*}
    
    \noindent\textbf{Estimating $\normf{\MM_6 }$:}   To handle $\MM_6$, we first decompose it as 
    \begin{align*}
        \MM_6         &=
        \bracing{=:\LL_1}{\MM_t\left(\VRRt\SSigma_{t}^{-1}\VLLtP-\VRRtw\left({\SSigma_{t}^{\left(\ww,\vv \right)}}\right)^{-1}\VLLtwP\right)\MM_t} \\
        &+\bracing{=:\LL_2}{\left(\MM_t-\MM_{t}^{\left(\ww,\vv \right)}\right)\VRRtw\left({\SSigma_{t}^{\left(\ww,\vv \right)}}\right)^{-1}\VLLtwP\MM_t}\\
        &+\bracing{=:\LL_3}{\MM_{t}^{\left(\ww,\vv \right)}\VRRtw\left({\SSigma_{t}^{\left(\ww,\vv \right)}}\right)^{-1}\VLLtwP\left(\MM_t-\MM_{t}^{\left(\ww,\vv \right)}\right)}.
    \end{align*}
    We now bound the Frobenius norm of each term.  Recall that  the compact SVD of $\XXt$  is $\XXt=\VLLt\SSigma_{t}\VRRt^\top$, so its pseudo-inverse is $\VRRt\SSigma_{t}^{-1}\VLLtP$. Similarly,  the pseudo-inverse of $\XXtw$ is $\VRRtw\left({\SSigma_{t}^{\left(\ww,\vv \right)}}\right)^{-1}\VLLtwP$.  Since both $\XXt$ and $\XXtw$ have rank $r$, Lemma \ref{lemma:fake inverse} gives
    \begin{align}
        &\normf{\VRRt\SSigma_{t}^{-1}\VLLtP-\VRRtw\left({\SSigma_{t}^{\left(\ww,\vv \right)}}\right)^{-1}\VLLtwP}\notag \\
        &\le  
        3\specnorm{\VRRt\SSigma_{t}^{-1}\VLLtP}\specnorm{\VRRtw\left({\SSigma_{t}^{\left(\ww,\vv \right)}}\right)^{-1}\VLLtwP}\normf{\XXt-\XXtw} \notag \\
       & \le  3\sigma_{\min} (\XXt)\sigma_{\min} (\XXtw)\normf{\XXt-\XXtw} \notag 
        \\
        &\le  
        \frac{24}{\sigma^2_{\min} (\XXstar)}\normf{\XXt-\XXtw},  \label{eq:fake inverse estimate}
    \end{align}
     where the last inequality arises from that  $\sigma_{\min} (\XXtw) \ge \sigma_{\min} (\XXstar)/{4}$ and $\sigma_{\min} (\XXt) \ge \sigma_{\min} (\XXstar)/{2}$. Moreover, 
    \begin{align}
     \specnorm{\MM_t} \le
     &\specnorm{\XXstar - \XXt}
     + \specnorm{ \left(\Aops - \IdOp \right) (\XXstar - \XXt)}\nonumber \\
     \le
     & \left(\consttwo+\constfive\right) \sigma_{\min} \left( \XXstar \right),
     \label{ineq:weakboundintern9}
    \end{align}
where the last inequality comes from assumption \eqref{assump:closeness7} and inequality \eqref{eq:constfive}.  Combining \eqref{eq:fake inverse estimate} and \eqref{ineq:weakboundintern9} yields
    \begin{align*}
    \normf{\LL_1}
    &\le 
    \specnorm{\MM_t}^2\normf{\VRRt\SSigma_{t}^{-1}\VLLtP-\VRRtw\left({\SSigma_{t}^{\left(\ww,\vv \right)}}\right)^{-1}\VLLtwP}
    \\
    &\le 
    24\left(\consttwo+\constfive \right)^2\normf{\XXt-\XXtw}.
    \end{align*}
    For $\normf{\LL_2}$ and $\normf{\LL_3}$, note that 
   \begin{align*}
     \MM_t - \MM_{t}^{\left(\ww,\vv \right)}
     =
     \left( \Aops - \Aopws \right) \left( \XXstar - \XXt \right)
     -
     \left( \Aopws \right) \left(\XXt - \XXtw \right) .
   \end{align*}
Therefore,
   \begin{align}
     &\normf{ \left( \MM_t - \MM_{t}^{\left(\ww,\vv \right)} \right) \WW_{t}^{\left(\ww,\vv \right)} } \notag \\
     &\le
     \normf{ \left[\left( \Aops - \Aopws \right) \left( \XXstar - \XXt \right)\right] \WW_{t}^{\left(\ww,\vv \right)}} 
     + \normf{\XXt - \XXtw}
      \nonumber \\
     &+\normf{\left[ \left( \Aopws -\IdOp \right) \left(\XXt - \XXtw \right) \right] \WW_{t}^{\left(\ww,\vv \right)}}
     \nonumber \\
     &\le 2c^{\prime}
     \specnorm{\XXstar - \XXt}
     +
     \left( 4c^{\prime}+1 \right) \normf{ \XXt - \XXtw }, 
     \label{ineq:weakboundintern7}
   \end{align}
   where the last inequality follows from Lemma~\ref{lemma:auxestimates}. Similarly,
   \begin{align}
     \normf{ \VLLtwP \left( \MM_t - \MM_{t}^{\left(\ww,\vv \right)} \right) }
     \le 2c^{\prime}
     \specnorm{\XXstar - \XXt}
     +  \left( 4c^{\prime}+1 \right) \normf{ \XXt - \XXtw }.
      \label{ineq:weakboundintern7_2}
    \end{align}
Furthermore,
   \begin{align}
     \specnorm{\MM_{t}^{\left(\ww,\vv \right)} \WW_{t}^{\left(\ww,\vv \right)}}
     &\le 
     \specnorm{\MM_t}
     +
     \normf{\left(\MM_t -\MM_{t}^{\left(\ww,\vv \right)} \right) \WW_{t}^{\left(\ww,\vv \right)}} \nonumber \\
     &\overleq{(a)} 
     \left(\consttwo+\constfive\right) \sigma_{\min} (\XXstar)
     +
     2 c^{\prime} \specnorm{\XXstar - \XXt} 
     +\left( 4 c^{\prime}+1 \right) \normf{\XXt - \XXtw} \nonumber \\
     &\overleq{(b)} 
    \sigma_{\min} (\XXstar), \label{ineq:weakboundintern8}
   \end{align}
   where the inequality $(a)$ follows from \eqref{ineq:weakboundintern9} and \eqref{ineq:weakboundintern7}, and the
   inequality $(b)$ is a consequence of  assumptions \eqref{assump:closeness7} and \eqref{assump:closeness8} with $\consttwo,c^{\prime}  \le 0.1$. With those in place, one has
     \begin{align*}
         \normf{\LL_2}
        &\le
        \normf{\left(\MM_t-\MM_{t}^{\left(\ww,\vv \right)}\right)\VRRtw}\specnorm{\MM_t}\frac{1}{\sigma_{\min} (\XXtw)}\\
        &\le
        4\left(\consttwo+\constfive \right)\left[2c^{\prime}\specnorm{\XXstar - \XXt}
     +  \left(4c^{\prime} +1\right) \normf{\XXt - \XXtw}\right],
     \end{align*}
     where we use \eqref{ineq:weakboundintern7} and $\sigma_{\min} (\XXtw) \ge \sigma_{\min} (\XXstar)/{4}$ in the last inequality. Similarly,
    \begin{align*}
        \normf{\LL_3}
        \le
        &\normf{\VLLtwP\left(\MM_t-\MM_{t}^{\left(\ww,\vv \right)}\right)}\specnorm{\MM_{t}^{\left(\ww,\vv \right)}\VRRtw}\frac{1}{\sigma_{\min} (\XXtw)}\\
        \le
        &4\left[2c^{\prime} \specnorm{\XXstar - \XXt}
     +  \left(4c^{\prime} +1\right) \normf{\XXt - \XXtw}\right],
    \end{align*}
   using \eqref{ineq:weakboundintern7_2} and \eqref{ineq:weakboundintern8}. Summing the three bounds,
    \begin{align}
        \normf{\MM_6}
        &\le 
        4\left[\left(1+\consttwo+\constfive \right)\left(4c^{\prime} +1\right)+6\left(\consttwo+\constfive\right)^2\right] \normf{\XXt - \XXtw} \notag\\
        &+8\left(\consttwo+\constfive+1\right)c^{\prime}\specnorm{\XXstar-\XXt}  \notag \\
         &\le 5\normf{\XXt- \XXtw}+\specnorm{\XXstar-\XXt}. \label{eq:M6}
    \end{align}
Here, the last inequality follows from $0<\consttwo, c^{\prime}  \le 1/30$. \\

{\bf Putting everything together:}
Using the decomposition \eqref{ineq:intern42} and combining the bounds for $\normf{\VXXT \MM_1}$ and for $\normf{\MM_i}$ for $ 2 \le i \le 6 $, we obtain
    \begin{align*}
        &\normf{\VXXT \left( \XXtplus - \XXtwplus \right)}\\
        &\le 
        \left(1-\frac{15\mu}{16}\right)\normf{\VXXT\left(\XXt-\XXtw\right)}+\frac{5\constone\mu}{4}\normf{\left(\XXt-\XXtw\right)\WXX}\\
        &+\frac{2\sqrt{2}\left(\consttwo+\constthree\right) }{1-\consttwo}\normf{\XXt-\XXtw}\mu  +\left(8 c^{\prime} +\frac{2\sqrt{2}\constfive}{1-\consttwo} \right) \normf{\XXt - \XXtw}\mu \\
        &+\mu^2\normf{\XXt-\XXtw}+4c^{\prime}\mu \specnorm{\XXstar - \XXt}+5\mu^2\normf{\XXt - \XXtw}+\mu^2\specnorm{\XXstar-\XXt}\\ 
        &\overleq{(a)} \left(1-\frac{3\mu}{4}+\frac{5\mu}{2} \left( \frac{\sqrt{2}\left(\consttwo+\constthree\right) }{1-\consttwo}+4 c^{\prime} +\frac{\sqrt{2}\constfive}{1-\consttwo}  \right)\right)\normf{\VXXT\left(\XXt-\XXtw\right)}
       \\
        & +\frac{5\mu}{4} \left( \constone+\frac{1}{5}+\frac{2\sqrt{2}\left(\consttwo+\constthree\right) }{1-\consttwo}+8 c^{\prime} +\frac{2\sqrt{2}\constfive}{1-\consttwo}  \right)\normf{\left(\XXt-\XXtw\right)\WXX} +\frac{1}{18}\mu \specnorm{\XXstar-\XXt} \\
        &\le 
        \left(1-\frac{5\mu}{8}\right)\normf{\VXXT\left(\XXt-\XXtw\right)}+\frac{3\mu}{8}\normf{\left(\XXt-\XXtw\right)\WXX}+\frac{1}{18}\mu \specnorm{\XXstar-\XXt},
    \end{align*}
    where $(a)$ uses Lemma \ref{lemma:auxcloseness1}, assumption \eqref{assump:closeness8}, and the step size $\mu\le \frac{1}{32}$; the  last inequality follows by choosing $\consttwo,\constthree,c^{\prime}\le \frac{1}{360}$ and $\constone \le \frac{1}{20}$. By symmetry, the same reasoning yields
\begin{align*}
&\normf{\left( \XXtplus - \XXtwplus \right)\WXX}\\
    &\le 
        \left(1-\frac{5\mu}{8}\right)\normf{\left(\XXt-\XXtw\right)\WXX}+\frac{3\mu}{8}\normf{\VXXT\left(\XXt-\XXtw\right)}+\frac{1}{18}\mu \specnorm{\XXstar-\XXt}.
\end{align*}
Summing the two bounds gives
\begin{align*}
    &\normf{\VXXT \left( \XXtplus - \XXtwplus \right)}+\normf{\left( \XXtplus - \XXtwplus \right)\WXX}\\
    &\le 
    \left(1-\frac{\mu}{4}\right)\left(\normf{\VXXT\left(\XXt-\XXtw\right)}+\normf{\left(\XXt-\XXtw\right)\WXX}\right)+\frac{1}{9}\mu \specnorm{\XXstar-\XXt}.
\end{align*}

Finally, we prove \eqref{eq:advance estimate1}. Using the expressions \eqref{eq:Xt1} and \eqref{eq:Xt1uv} again, we have
\begin{align*}
     &\XXtplus  -  \XXtwplus\\
      &=
    \XXt -\XXtw
            + \mu \MM_t \VRRt\VRRtP
            -\mu \MM_{t}^{\left(\ww,\vv \right)} \VRRtw\VRRtwP 
            - \mu  \VLLtw\VLLtwP\MM_{t}^{\left(\ww,\vv \right)}
              \\
            & 
            +\mu  \VLLt\VLLtP\MM_t+ \mu^2 \MM_t\VRRt \SSigma_{t}^{-1} \VLLtP\MM_t
            - \mu^2\MM_{t}^{\left(\ww,\vv \right)}\VRRtw \left({\SSigma_{t}^{\left(\ww,\vv \right)}}\right)^{-1} \VLLtwP\MM_{t}^{\left(\ww,\vv \right)}  \\
    &=
    \XXt - \XXtw
    +\mu  \MM_t (\VRRt\VRRtP-\VRRtw\VRRtwP ) +\mu( \VLLt\VLLtP-\VLLtw\VLLtwP) \MM_t  \\
    & +\mu  \left( \MM_t - \MM_{t}^{\left(\ww,\vv \right)} \right) \VRRtw\VRRtwP   +\mu  \VLLtw\VLLtwP\left( \MM_t - \MM_{t}^{\left(\ww,\vv \right)} \right) \\
    &+\mu^2 
    \left(\MM_t\VRRt \SSigma_{t}^{-1} \VLLtP\MM_t-\MM_{t}^{\left(\ww,\vv \right)}\VRRtw \left({\SSigma_{t}^{\left(\ww,\vv \right)}}\right)^{-1} \VLLtwP\MM_{t}^{\left(\ww,\vv \right)}\right).
    \end{align*}
    Applying the triangle inequality and Lemma~\ref{lemma:projection matrices} together with \eqref{eq:M6} yields
\begin{align*}
     &\normf{\XXtplus - \XXtwplus}\\
     &\overleq{(a)} \normf{\XXt - \XXtw}+\frac{2\sqrt{2}\mu \specnorm{\MM_t }}{\left(1-\consttwo\right)\sigma_{\min} (\XXstar)}\normf{\XXt-\XXtw} +\mu \normf{\left( \MM_t - \MM_{t}^{\left(\ww,\vv \right)} \right) \VRRtw}\\
     &+\mu \normf{\VLLtwP \left( \MM_t - \MM_{t}^{\left(\ww,\vv \right)} \right)}
     +\mu^2\left(5\normf{\XXt- \XXtw}+\specnorm{\XXstar-\XXt}\right)\\
     &\overleq{(b)}\left(1+5\mu^2+\frac{2\sqrt{2}\left(\consttwo+\constfive\right)\mu}{1-\consttwo}+2\mu \left(4c^{\prime}+1\right)\right)\normf{\XXt - \XXtw}\\
     &+\left(4c^{\prime}\mu +\mu^2\right)\specnorm{\XXstar-\XXt}
     \\
      &\le 
      \frac{\sigma_{\min} (\XXstar)}{80}.
   \end{align*}
   where (a) uses Lemma~\ref{lemma:projection matrices} and \eqref{eq:M6}, (b) uses \eqref{ineq:weakboundintern9}, \eqref{ineq:weakboundintern7}, and \eqref{ineq:weakboundintern7_2}, and the last inequality follows from assumptions \eqref{assump:closeness7}–\eqref{assump:closeness8} with $\consttwo,\constthree,c^{\prime}\le \frac{1}{360}$. This establishes \eqref{eq:advance estimate1} and completes the proof of Lemma~\ref{lemma:auxsequencecloseness}.
    \end{proof}

\subsection{Proof of Lemma \ref{lemma:localconv}}\label{sec:prooflocalconv_b}
In order to prove Lemma \ref{lemma:localconv}, we need the follow auxiliary result, which  bounds $ \specnorm{ \XXt -\XXstar } $ in terms of $\specnorm{\VXXT \left( \XXt -\XXstar \right) } $ and $\specnorm{\left( \XXt -\XXstar \right)\WXX}$. 

\begin{lemma}\label{lemma:localconvaux}
Assume that 
\begin{align}
   \max\{ \specnorm{\VXXPT \VLLt}, \specnorm{\WXXPT \VRRt}\} \le \frac{1}{\sqrt{2}}.\label{ineq:localconv1}
\end{align}    
Then 
\begin{align}
    \specnorm{\VXXPT \XXt \WXXP}
    &\le 
    2   \specnorm{\VXXPT \VLLt}
    \specnorm{ \VXXT  \left( \XXt - \XXstar \right) \WXXP},\notag \\
      \specnorm{\VXXPT \XXt \WXXP}
       &\le 2\specnorm{ \WXXPT \VRRt}
        \specnorm{ \VXXPT  \left( \XXt- \XXstar \right) \WXX}.
    \label{ineq:localconv2}
\end{align}
Moreove,
\begin{align}\label{ineq:localconv3}
    &\specnorm{ \XXt - \XXstar }\\
    &\le 
    \left( 1+ \specnorm{ \WXXPT \VRRt } \right)
        \specnorm{ \left( \XXt - \XXstar \right)\WXX } \notag
        +  \left( 1+ \specnorm{ \VXXPT \VLLt } \right) \specnorm{ \VXXT\left( \XXt - \XXstar \right)}.
\end{align}
\end{lemma}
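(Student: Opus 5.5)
The plan is to exploit the fact that $\XXt=\VLLt\SSigma_t\VRRtP$ has column space spanned by $\VLLt$, so that the complement block $\VXXPT\XXt\WXXP$ can be expressed through the block $\VXXT\XXt\WXXP$. Since $\XXstar\WXXP=0$ and $\VXXPT\XXstar=0$, we have $\VXXT(\XXt-\XXstar)\WXXP=\VXXT\XXt\WXXP$ and $\VXXPT(\XXt-\XXstar)\WXX=\VXXPT\XXt\WXX$. Write
\[
\VXXPT\XXt\WXXP=\VXXPT\VLLt\VLLtP\XXt\WXXP .
\]
Condition \eqref{ineq:localconv1} gives $\sigma_{\min}^2(\VXXT\VLLt)=1-\specnorm{\VXXPT\VLLt}^2\ge 1/2$, so $\VXXT\VLLt$ is invertible with inverse of norm at most $\sqrt2$. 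Hence
\[
\VLLtP\XXt\WXXP=(\VXXT\VLLt)^{-1}\VXXT\XXt\WXXP,
\]
because $\VXXT\XXt=\VXXT\VLLt\VLLtP\XXt$. Combining these gives
\[
\specnorm{\VXXPT\XXt\WXXP}\le \sqrt2\,\specnorm{\VXXPT\VLLt}\,\specnorm{\VXXT(\XXt-\XXstar)\WXXP},
\]
and $\sqrt2\le 2$ yields the first inequality in \eqref{ineq:localconv2}. The second follows symmetrically using $\VRRt$ and $\XXt=\XXt\VRRt\VRRtP$.

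For \eqref{ineq:localconv3}, decompose the error using $\II=\VXX\VXXT+\VXXP\VXXPT$ on the left:
\[
\XXt-\XXstar=\VXX\VXXT(\XXt-\XXstar)+\VXXP\VXXPT(\XXt-\XXstar)\WXX\WXXT+\VXXP\VXXPT\XXt\WXXP\WXXPT.
\]
The first term has norm at most $\specnorm{\VXXT(\XXt-\XXstar)}$. The second is bounded by $\specnorm{(\XXt-\XXstar)\WXX}$. For the third, apply the second bound in \eqref{ineq:localconv2}, which gives at most $2\specnorm{\WXXPT\VRRt}\specnorm{(\XXt-\XXstar)\WXX}$.

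This yields a coefficient $1+2\specnorm{\WXXPT\VRRt}$ rather than the stated $1+\specnorm{\WXXPT\VRRt}$. To sharpen it, I would use the bound with the factor $\sqrt2$ instead of $2$. I would also split the third term symmetrically, charging part of it to each side, or simply use the sharper direct estimate. Concretely, I would bound the third term by $\specnorm{\WXXPT\VRRt}\,\specnorm{(\VXXT\VLLt)^{-1}}$-type factors, distributed so that each side absorbs one factor of the tangent norm. Tightening this constant is the main obstacle. If it cannot be tightened, I would note that the looser constant still suffices for later use, since the tangent norms are at most $1/8$.
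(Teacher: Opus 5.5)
\textbf{Gap in the final step.} Your proof of \eqref{ineq:localconv2} is correct and is the paper's argument; you even get the sharper factor $\sqrt{2}$ in place of $2$. Your three-block decomposition for \eqref{ineq:localconv3} is also the one the paper uses. The gap is in the last step, which you leave unresolved. You bound the block $\VXXPT(\XXt-\XXstar)\WXXP=\VXXPT\XXt\WXXP$ using only the second inequality in \eqref{ineq:localconv2}. That gives the coefficient $1+2\specnorm{\WXXPT\VRRt}$ (or $1+\sqrt{2}\specnorm{\WXXPT\VRRt}$ with your sharper factor) on $\specnorm{(\XXt-\XXstar)\WXX}$, and coefficient $1$ on $\specnorm{\VXXT(\XXt-\XXstar)}$. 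Neither is the stated bound. Switching $2$ to $\sqrt{2}$ cannot produce $1+\specnorm{\WXXPT\VRRt}$, and the other suggestions in your last paragraph are not carried out. Your fallback, that the looser constant suffices later, changes the statement rather than proving it. It would also force you to recheck downstream constants, such as the factor $\tfrac98$ used in the proof of Lemma~\ref{lemma:localconv}.

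\textbf{The missing step.} Your remark about ``charging part of it to each side'' becomes a one-line averaging step. Write $a:=\specnorm{\VXXPT\VLLt}$ and $b:=\specnorm{\WXXPT\VRRt}$. The quantity $\specnorm{\VXXPT\XXt\WXXP}$ is bounded by each of the two right-hand sides in \eqref{ineq:localconv2}, hence by their arithmetic mean:
\[
\specnorm{\VXXPT\XXt\WXXP}\le a\,\specnorm{\VXXT(\XXt-\XXstar)\WXXP}+b\,\specnorm{\VXXPT(\XXt-\XXstar)\WXX}\le a\,\specnorm{\VXXT(\XXt-\XXstar)}+b\,\specnorm{(\XXt-\XXstar)\WXX}.
\]
Add this to your bounds $\specnorm{\VXXT(\XXt-\XXstar)}$ and $\specnorm{(\XXt-\XXstar)\WXX}$ for the other two blocks. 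This gives exactly \eqref{ineq:localconv3}. With your factor $\sqrt{2}$ you would even get the better coefficients $1+\tfrac{\sqrt{2}}{2}a$ and $1+\tfrac{\sqrt{2}}{2}b$. This averaging is what the paper's terse justification, that ``the last inequality arises from \eqref{ineq:localconv2}'', implicitly does.
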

\begin{proof}
Let the compact SVD of $\XXt$ be $\VLLt\SSigma_{t}\VRRt^\top $. Then 
    \begin{align*}
        \VXXPT \XXt \WXXP
        &=
        \VXXPT  \VLLt \VLLtP\XXt\WXXP\\
        &=
        \VXXPT \VLLt\left(  \VXXT \VLLt \right)^{-1} \VXXT\VLLt \VLLtP \XXt \WXXP\\
        &=
        \VXXPT \VLLt\left(  \VXXT \VLLt \right)^{-1} \VXXT  \XXt \WXXP\\
        &=
        \VXXPT \VLLt\left(  \VXXT \VLLt \right)^{-1} \VXXT  \left( \XXt - \XXstar \right) \WXXP   ,
    \end{align*} 
    due to $\XXstar\WXXP=0$. Hence, 
    \begin{align*}
       \specnorm{\VXXPT \XXt \WXXP}
       &\le 
       \specnorm{\VXXPT \VLLt}  \specnorm{ \left(  \VXXT \VLLt \right)^{-1}}
       \specnorm{ \VXXT  \left( \XXt - \XXstar \right) \WXXP}\\
       &=\frac{\specnorm{\VXXPT \VLLt}}{\sigma_{\min} \left( \VXXT \VLLt \right)}\specnorm{ \VXXT  \left( \XXt- \XXstar \right) \WXXP}\\
       &\le 
       2\specnorm{\VXXPT \VLLt}
        \specnorm{ \VXXT  \left( \XXt- \XXstar \right) \WXXP},
    \end{align*}
   where we use the fact that  $\sigma^2_{\min} \left( \VXXT \VLLt \right) =  1- \specnorm{\VLLtP \VXXP }^2$ and assumption \eqref{ineq:localconv1} in the last inequality. Following a similar argument, we can get 
    \begin{align*}
       \specnorm{\VXXPT \XXt \WXXP}
       \le 2\specnorm{ \WXXPT \VRRt}
        \specnorm{ \VXXPT  \left( \XXt- \XXstar \right) \WXX} .
    \end{align*}
   For \eqref{ineq:localconv3}, observe that
    \begin{align*}
        &\specnorm{\XXt - \XXstar } \\&\overleq{(a)}
        \specnorm{ \VXXT \left( \XXt - \XXstar \right) }+
        \specnorm{ \VXXPT \left( \XXt - \XXstar \right)}\\
        &\overleq{(b)}
        \specnorm{ \VXXT \left( \XXt - \XXstar \right) }+
        \specnorm{ \VXXPT \left( \XXt - \XXstar \right)\WXX}+\specnorm{ \VXXPT \left( \XXt - \XXstar \right)\WXXP}\\
        &\le \left( 1+ \specnorm{ \WXXPT \VRRt } \right)
        \specnorm{ \left( \XXt - \XXstar \right)\WXX }+  \left( 1+ \specnorm{ \VXXPT \VLLt } \right) \specnorm{ \VXXT\left( \XXt - \XXstar \right)},
    \end{align*}  
    where the inequality $(a)$ follows from $\VXX \VXXT+\VXXP \VXXPT=\II_{n_1}$, the  inequality $(b)$ comes from the identity $\WXX \WXXT+\WXXP \WXXPT=\II_{n_2}$, and the last inequality arises from \eqref{ineq:localconv2}.     This completes the proof.
    \end{proof}

    \begin{proof}[Proof of Lemma \ref{lemma:localconv}]
          For simplicity, denote
        \begin{equation*}
            \MM_t 
            :=
            \left(\Aops\right) \left( \XXstar - \XXt  \right)
            =
            \XXstar - \XXt
            + \bracing{=: \EEb_t}{\left( \Aops - \IdOp \right) \left(\XXstar - \XXt \right)}.
        \end{equation*}
According to \eqref{eq:Xt1}, one has
 \begin{eqnarray}
&&  \XXstar - \XXtplus  = \XXstar - \XXt - \mu \left( \XXstar - \XXt\right)\VRRt\VRRtP
- \mu  \VLLt\VLLtP \left( \XXstar -\XXt\right) \notag \\
&&- \mu \EEb_t \VRRt\VRRtP - \mu\VLLt\VLLtP\EEb_t -\mu^2\MM_t\VRRt \SSigma_{t}^{-1} \VLLtP\MM_t \notag \\ 
&&= 
\left(\II_{n_1} - \mu \VLLt\VLLtP \right) \left(\XXstar - \XXt \right) \left( \II_{n_2} -\mu \VRRt\VRRtP \right)
-\mu^2 \VLLt\VLLtP \left( \XXstar - \XXt \right) \VRRt\VRRtP \notag \\
& &- \mu \EEb_t \VRRt\VRRtP - \mu \VLLt\VLLtP \EEb_t -\mu^2\MM_t\VRRt \SSigma_{t}^{-1} \VLLtP\MM_t.  \label{eq:fenjie13}
\end{eqnarray}
Multiplying by $ \VXXT$ and inserting $\VXX \VXXT+\VXXP \VXXPT=\II_{n_1}$ gives 
 \begin{align*}
            \VXXT \left( \XXstar - \XXtplus \right)
            &= \VXXT \left(\II_{n_1} - \mu \VLLt\VLLtP\right) \VXX \VXXT \left(\XXstar - \XXt\right) \left( \II_{n_2} -\mu \VRRt\VRRtP \right) \\
            &+ \VXXT\left(\II_{n_1} - \mu \VLLt\VLLtP \right) \VXXP \VXXPT \left(\XXstar - \XXt\right) \left( \II_{n_2} -\mu \VRRt\VRRtP \right)\\
            &-\mu^2 \VXXT \VLLt\VLLtP \left( \XXstar - \XXt \right) \VRRt\VRRtP
            - \mu \VXXT\EEb_t \VRRt\VRRtP - \mu \VXXT\VLLt\VLLtP \EEb_t \\
            &-\mu^2 \VXXT\MM_t\VRRt \SSigma_{t}^{-1} \VLLtP\MM_t\\
            &= (I)+\mu \cdot (II)- (III)-\mu^2 \cdot(IV),
 \end{align*}
   where     
 \begin{eqnarray*}
 (I): &=& \left(\II_{n_1} - \mu \VXXT\VLLt\VLLtP \VXX\right)  \VXXT \left(\XXstar - \XXt\right) \left( \II_{n_2}-\mu \VRRt\VRRtP \right)\\
 (II): &=& \VXXT \VLLt\VLLtP \VXXP \VXXPT \XXt \left( \II_{n_2} -\mu \VRRt\VRRtP \right) \\
(III): &=&  \left(\mu^2  \VXXT \VLLt\VLLtP \left( \XXstar - \XXt \right) \VRRt\VRRtP
            + \mu \VXXT\EEb_t \VRRt\VRRtP + \mu \VXXT\VLLt\VLLtP\EEb_t\right) \\
   (IV): &=&        \VXXT\MM_t\VRRt \SSigma_{t}^{-1} \VLLtP\MM_t
 \end{eqnarray*}

        We next estimate the spectral norm of these terms separately.
        
        \paragraph*{Estimating term $(I)$:}
       A simple calculation gives
        \begin{align*}
             \specnorm{(I)}= &\specnorm{
            \left(\II_{n_1} - \mu \VXXT\VLLt\VLLtP \VXX\right)  \VXXT \left(\XXstar - \XXt\right) \left( \II_{n_2} -\mu \VRRt\VRRtP \right)
            }\\
            &\le 
            \specnorm{\II_{n_1} - \mu \VXXT\VLLt\VLLtP \VXX} 
            \specnorm{\VXXT \left(\XXstar - \XXt\right)}
            \specnorm{\II_{n_2} -\mu \VRRt\VRRtP}\\
            &\overleq{(\textup{i})}
            \specnorm{\II_{n_1} - \mu \VXXT\VLLt\VLLtP \VXX} 
            \specnorm{\VXXT \left(\XXstar - \XXt\right)}\\
            &=  \left( 1 -\mu  \sigma^2_{\min} (\VXXT \VLLt) \right)
            \specnorm{\VXXT \left(\XXstar - \XXt \right)}\\
            &\overleq{(\textup{ii})}
            \left( 1 - \frac{15\mu}{16}   \right)
            \specnorm{\VXXT \left(\XXstar - \XXt \right)}.
        \end{align*}
        Here, $(\textup{i})$ uses   that the eigenvalues of $\VRRt\VRRtP$ are $0$ or $1$ and $0< \mu \le \frac{1}{15}$, and $(\textup{ii})$ follows from the assumption \eqref{ineq:localconv8} and the fact $\sigma^2_{\min} (\VXXT \VLLt)=1-\sigma^2_{\max} (\VXXPT \VLLt)$.
        

        \paragraph*{Estimating term $(II)$:}
        Note that 
        \begin{align*}
              \specnorm{(II)}=&\specnorm{\VXXT \VLLt\VLLtP \VXXP \VXXPT \XXt \left( \II_{n_2} -\mu \VRRt\VRRtP \right)}\\
            &\le 
            \specnorm{\VXXT\VLLt } \specnorm{\VLLtP\VXXP }
            \specnorm{\VXXPT \left(\XXt -\XXstar \right)}
            \specnorm{\II_{n_2} -\mu \VRRt\VRRtP}\\
            &\overleq{(a)}
             \frac{1}{8} \specnorm{\VXXPT \left(\XXt -\XXstar \right)}\\
            &\le
             \frac{1}{8}\left( \specnorm{\VXXPT \left( \XXt -\XXstar \right) \WXX\WXXT} 
             + \specnorm{\VXXPT \left( \XXt -\XXstar \right) \WXXP\WXXPT}\right)\\
            &\le
             \frac{1}{8}
            \left( \specnorm{\left( \XXt -\XXstar \right)\WXX} 
             + \specnorm{\VXXPT \left( \XXt -\XXstar \right) \WXXP}\right)\\
            &\overleq{(b)}
            \frac{1}{8}\specnorm{ \left( \XXt -\XXstar \right)\WXX} 
            +  \frac{1}{4}\specnorm{ \VXXPT \VLLt }
             \specnorm{\VXXT \left( \XXt -\XXstar \right)}\\
            &\leq
            \frac{1}{8}\specnorm{ \left( \XXt -\XXstar \right)\WXX}+\frac{1}{32} 
             \specnorm{\VXXT \left( \XXt -\XXstar \right)},
        \end{align*}
       where the  inequality $(a)$ follows from  the assumption  $\specnorm{\VLLtP\VXXP }\le \frac{1}{8}$ and $\specnorm{\VXXT\VLLt }\le 1$, and the  inequality $(b)$ arises from Lemma \ref{lemma:localconvaux}.

        \paragraph*{Estimating term $(III)$:}
        \begin{align*}
            \specnorm{(III)}
            &\le 
            \mu^2 \specnorm{\XXstar - \XXt}
            +  \mu \left(\specnorm{\EEb_t \VRRt }+\specnorm{\VLLtP\EEb_t }\right)\\
            &\le
             \frac{9}{8} \mu^2 \left(\specnorm{\VXXT \left(\XXstar - \XXt \right)} +\specnorm{\left(\XXstar - \XXt \right)\WXX}\right)+2\mu\specnorm{\EEb_t},
        \end{align*}
        where the last inequality follows from Lemma \ref{lemma:localconvaux} and the fact that  $\VRRt$ and $\VLLt$ are orthonormal.
        
        \paragraph*{Estimating term $(IV)$:}
       From the definition of $\MM_t$, one has 
        \begin{align}
        \specnorm{\MM_t \VRRt}
         &\overleq{(\textup{i})}
        \specnorm{\XXstar -\XXt} 
        + \specnorm{ \EEb_t \VRRt } 
        \nonumber \\
        &\overleq{(\textup{ii})} 
        \frac{9}{8} \left(\specnorm{\VXXT \left(\XXstar - \XXt \right)} +\specnorm{\left(\XXstar - \XXt \right)\WXX}\right) 
        + \specnorm{\EEb_t },
        \label{ineq:localconv7}
        \end{align}
        where $(\textup{i})$ follows from the triangle inequality and
        $(\textup{ii})$ follows from Lemma \ref{lemma:localconvaux}.
       Hence
        \begin{align*}
            \specnorm{(IV)}
            &\le 
            \mu^2 \specnorm{\MM_t \VRRt} \specnorm{\SSigma_{t}^{-1} \VLLtP}\specnorm{\MM_t}\\
            &\overleq{(a)}
             \mu^2 \frac{\sigma_{\min}\left( \XXstar \right)}{\sigma_{\min}\left( \XXt \right)}   \specnorm{\MM_t \VRRt}  \\
            &\overleq{(b)}
              \frac{9}{4}\mu^2\left(\specnorm{\VXXT \left(\XXstar - \XXt \right)} +\specnorm{\left(\XXstar - \XXt \right)\WXX}\right) + 2\mu^2  \specnorm{\EEb_t}.  
        \end{align*}
        where $(a)$ uses  \eqref{ineq:weakboundintern9}, and  $(b)$ uses \eqref{ineq:localconv7} together with ${\sigma_{\min}\left( \XXt \right)}\ge {\sigma_{\min}\left( \XXstar \right)}/{2}$, which comes from Weyl’s inequality
        $\left|\sigma_{\min}\left(\XXt \right)-\sigma_{\min}\left( \XXstar \right)\right| \le \specnorm{\XXt-\XXstar} \le c_2\sigma_{\min}\left( \XXstar \right)$.

         \paragraph*{Combining the bounds:} Aggregating the four estimates,
        \begin{align*}
            &\specnorm{
                \VXXT \left( \XXstar - \XXtplus \right)     }\\
                &\le
            \left( 1 - \frac{29\mu  }{32}+\frac{27\mu^2}{8} \right)
            \specnorm{\VXXT \left( \XXstar -\XXt \right)}
            +\frac{7\mu^2}{2}\specnorm{\left( \XXstar -\XXt \right)\WXX}
            +
            2\left(1+\mu\right) \mu  \specnorm{\EEb_t}\\
            &\le
            \left( 1 - \frac{25\mu  }{44} \right)
            \specnorm{\VXXT \left( \XXstar - \XXt \right)}
            +\frac{7\mu}{22}\specnorm{\left( \XXstar -\XXt \right)\WXX}
            +
            3 \mu  \specnorm{\EEb_t },
        \end{align*}
        where the last inequality follows $0<\mu\le\frac{1}{15}$. By symmetry, the same argument yields
\[
            \specnorm{
                \left( \XXstar - \XXtplus \right)\WXX
            } \le
            \left( 1 - \frac{25\mu  }{44} \right)
            \specnorm{\left( \XXstar - \XXt \right)\WXX}
            +\frac{7\mu}{22}\specnorm{\VXXT\left( \XXstar -\XXt \right)} 
            +
            3 \mu  \specnorm{\EEb_t }.
\]
     Adding these two inequalities, 
        \begin{align*}
            \specnorm{
                \VXXT \left( \XXstar - \XXtplus \right)}+&
                \specnorm{
                \left( \XXstar - \XXtplus \right)\WXX}\\
                \le& 
                 \left( 1 - \frac{\mu  }{4} \right)\left(
            \specnorm{\VXXT\left( \XXstar - \XXt \right)} 
            +\specnorm{\left( \XXstar -\XXt \right)\WXX}\right)
            +
            6 \mu  \specnorm{\EEb_t }
        \end{align*}  
        
        Finally, we turn to  prove the inequality \eqref{eq:advance estimate2}.   According to \eqref{eq:fenjie13} and applying the triangle inequality, we obtain
    \begin{align*}
    \specnorm{\XXstar - \XXtplus}
    &\le 
    \specnorm{\II_{n_1} - \mu \VLLt\VLLtP} 
    \specnorm{\XXstar - \XXt} 
    \specnorm{\II_{n_2} -\mu \VRRt\VRRtP }
    +\mu^2  \specnorm{\XXstar - \XXt}\\
    &+ 2 \mu \specnorm{\EEb_t} 
    + \mu^2 \specnorm{\MM_t}^2 \specnorm{\SSigma_{t}^{-1}}\\
    &\overleq{(a)}
     \specnorm{\XXstar - \XXt}
    +   \mu^2 c_2  \sigma_{\min} \left(\XXstar\right)  
    + 2 \mu c_5  \sigma_{\min} (\XXstar)+  (c_2+c_5)^2 \mu^2 \frac{\sigma^2_{\min} (\XXstar)}{\sigma_{\min} (\XXt)}  \\
    &\overleq{(b)}   \left(c_2+ c_2\mu^2+2\mu c_5+2(c_2+c_5)^2\mu^2\right) \sigma_{\min}(\XXstar)\\
    &\leq  \frac{\sigma_{\min}(\XXstar)}{80}.
\end{align*}
Here, the inequality $(a)$ follows from  assumptions \eqref{ineq:localconv5}, \eqref{ineq:localconv4},\eqref{ineq:weakboundintern9}  together with $\specnorm{\II_{n_1}- \mu \VLLt\VLLtP}\leq 1$, and the inequality $(b)$ comes from ${\sigma_{\min}\left( \XXt \right)}\ge {\sigma_{\min}\left( \XXstar \right)}/{2}$. The last inequality holds for $ \mu \le \frac{1}{15} $ and $\consttwo, \constfive\le 0.01$. This proves \eqref{eq:advance estimate2} and completes the proof.
        \end{proof}

\subsection{Proof of Lemma \ref{Le:4_6}} \label{pf:Le4.6}
\begin{proof}
First, inequality \eqref{assumption:initiation} follows directly from Lemma \ref{lemma:spectralinitialization}. Specifically, by Lemma \ref{lemma:spectralinitialization}, when $m \ge C \left(n_1+n_2\right)r \kappa^2$ for some constant $C>0$,  it holds with probability at least $1- 4\exp(-\left(n_1+n_2\right))$ that $ \GG_{0} \le 2c_0 \sigma_{\min} (\XXstar)$. By the triangle inequality, this immediately implies  $\GG_{0,\star} \le 2  \GG_{0} $.

Next, \eqref{eq:Gtstar} and \eqref{eq:Gt} are proved by induction. From \eqref{assumption:initiation}, both inequalities hold for $t=0$. 
Assume that  \eqref{eq:Gtstar} and \eqref{eq:Gt} hold for $t$-th iteration, we next prove that they also hold for $(t+1)$-th iteration.  Under the induction hypothesis, 
\[
 \GG_{t,\star}\le 2\left(1-\frac{\mu}{10} \right)^t c_0 \sigma_{\min} (\XXstar) \quad\mbox{and} \quad  \GG_t \le 3\left(1-\frac{\mu}{10} \right)^{t} c_0 \sigma_{\min} (\XXstar),
\]
so, by the definitions \eqref{eq:Gt0} and \eqref{eq:Gtstar0},
\begin{equation} \label{eq:G_t1}
 \specnorm{ \XXstar - \XXt } \le  3 c_0 \sigma_{\min} (\XXstar),   \quad
    \supw  \normf{\XXt - \XXtw} \le 3 c_0 \sigma_{\min} (\XXstar)
\end{equation}
and
\begin{align}
\specnorm{\VXXT\left(\XXstar - \XXt \right)}&+\specnorm{\left(\XXstar - \XXt \right)\WXX} \le  2 c_0 \sigma_{\min} (\XXstar)  \label{crucial condition}\\
    \supw  \normf{\VXXT\left(\XXt - \XXtw\right)} &+\supw  \normf{\left(\XXt - \XXtw\right)\WXX} \le 2 c_0\sigma_{\min} (\XXstar).   \notag
\end{align}
Therefore,  Lemma \ref{lemma:Davis-Kahan inequality}  gives
\begin{eqnarray}\label{eq:VV}
   \max\{ \specnorm{\VXXPT \VV_{t}}, \specnorm{\WXXPT \WW_{t}}\}
    &\le & 
    \frac{\sqrt{2}\left(\specnorm{ \VXXT \left( \XXstar - \XXt\right)} +\specnorm{ \left( \XXstar - \XXt\right)\WXX} \right)}{\sigma_{\min} \left( \XXstar\right)}  \notag \\
   & \le & 2\sqrt{2}  c_0 ,
\end{eqnarray}
where the last inequality follows from \eqref{crucial condition}.  On the one hand, for each $\left(\ww, \vv\right) \in \mathcal{N}$, applying Lemma \ref{lemma:auxsequencecloseness} with conditions \eqref{eq:G_t1}, \eqref{eq:VV} and $c_0\le \frac{1}{1080}$, we obtain
\begin{align}\label{eq:G 8 2}
    & \normf{\VXXT \left( \XXtplus - \XXtwplus \right)}+\normf{ \left( \XXtplus - \XXtwplus \right)\WXX}\\  
   & \le  \left(1-\frac{\mu}{4}\right)\left(\normf{\VXXT\left(\XXt-\XXtw\right)}+\normf{\left(\XXt-\XXtw\right)\WXX}\right)+\frac{1}{9}\mu \specnorm{\XXstar-\XXt} \notag \\
   & \le \left(1-\frac{\mu}{4}\right)\left(\normf{\VXXT\left(\XXt-\XXtw\right)}+\normf{\left(\XXt-\XXtw\right)\WXX}\right) \notag \\
    & +\frac{\mu}{8} \left(\specnorm{\VXXT\left(\XXt-\XXstar\right)}+\specnorm{\left(\XXt-\XXstar\right)\WXX}\right),  \notag
\end{align}
where  the last inequality follows from Lemma \ref{lemma:localconvaux}.  On the other hand, according to Lemma \ref{lemma:key}, one has
\begin{align}
       &  \specnorm{ \left( \Aops -\IdOp \right) \left( \XXstar - \XXt \right)  }
        \le 
        4c^{\prime} \specnorm{ \XXstar - \XXt }+6c^{\prime} \supw  \normf{\XXt-\XXtw} \notag \\
        & \le 30 c_0c^{\prime}\sigma_{\min} (\XXstar), \label{eq:condition_3}
    \end{align}
    where the last inequality follows from \eqref{eq:G_t1}. Thus, all assumptions of Lemma \ref{lemma:localconv} are satisfied in view of \eqref{eq:G_t1}, \eqref{eq:condition_3}, and \eqref{eq:VV}, with  $c_0, c^{\prime} \le \frac{1}{1080}$.  Applying Lemma \ref{lemma:localconv}, we have

\begin{align}
    &\specnorm{\VXXT \left( \XXtplus - \XXstar \right)}+\specnorm{\left( \XXstar - \XXtplus \right)\WXX} \label{eq:G 8 1} \\
        &\le 
        \left( 1 - \frac{\mu  }{4} \right)\left(
            \specnorm{\VXXT\left( \XXstar - \XXt \right)}
            +\specnorm{\left( \XXstar -\XXt \right)\WXX}\right)
        +6 \mu \specnorm{ \left(\Aops - \IdOp \right)
        \left(\XXstar - \XXt\right)}  \notag \\
        &\overleq{(\textup{i})}
        \left( 1 - \frac{\mu  }{4} \right)\left(
            \specnorm{\VXXT\left( \XXstar - \XXt \right)}
            +\specnorm{\left( \XXstar -\XXt \right)\WXX}\right)+24c^{\prime}\mu   \specnorm{ \XXstar - \XXt }  \notag \\
        &+36c^{\prime}\mu \supw  \normf{\XXt - \XXtw}  \notag
\end{align}        
\begin{align*}
        &\overleq{(\textup{ii})}
        \left(1 - \frac{\mu}{4}+27c^{\prime}\mu \right) 
        \left(
            \specnorm{\VXXT\left( \XXstar - \XXt \right)}
            +\specnorm{\left( \XXstar -\XXt \right)\WXX}\right)  \notag \\
        &+45c^{\prime}\mu  \left( \supw \normf{\VXXT\left(\XXt - \XXtw\right)}+\supw \normf{\left(\XXt - \XXtw\right)\WXX} \right)  \notag \\
        &\le 
        \left(1 - \frac{9\mu}{40}\right)\left(
            \specnorm{\VXXT\left( \XXstar - \XXt \right)}
            +\specnorm{\left( \XXstar -\XXt \right)\WXX}\right)  \notag \\
        &+\frac{\mu }{10}\left( \supw \normf{\VXXT\left(\XXt - \XXtw\right)}+\supw \normf{\left(\XXt - \XXtw\right)\WXX} \right),  \notag
\end{align*}
where the inequality $(\textup{i})$ comes from Lemma \ref{lemma:key},  the inequality $(\textup{ii})$ arises from Lemma \ref{lemma:auxcloseness1} and \ref{lemma:localconvaux}, and the last inequality follows from the fact that  $c^{\prime}\le \frac{1}{1080}$.
Taking the supremum over \eqref{eq:G 8 2} and combining  \eqref{eq:G 8 1}, we obtain
\begin{align*}
    \GG_{t+1,\star} &\le \left(1-\frac{3\mu}{20}\right)\left( \supw\normf{\VXXT\left(\XXt-\XXtw\right)}+ \supw\normf{\left(\XXt-\XXtw\right)\WXX}\right)\\
    &+\left(1-\frac{\mu}{10} \right) \left(\specnorm{\VXXT\left(\XXt-\XXstar\right)}+\specnorm{\left(\XXt-\XXstar\right)\WXX}\right)
    \\
    &\le \left(1-\frac{\mu}{10} \right) \GG_{t,\star}\\
    &\le \left(1-\frac{\mu}{10} \right)^{t+1}2 c_0 \sigma_{\min} (\XXstar).
\end{align*}
Moreover, Lemma \ref{lemma:localconv} also implies
 \[
  \specnorm{\XXtplus-\XXstar} \le  \frac{\sigma_{\min} (\XXstar)}{80}.
 \]
Applying Lemma \ref{lemma:Davis-Kahan inequality} again,
\begin{align*}
   &\max\{ \specnorm{\VXXPT \VV_{t+1}}, \specnorm{\WXXPT \WW_{t+1}}\}\\
    &\le
    \frac{\sqrt{2}\left(\specnorm{ \VXXT \left( \XXstar - \XXtplus\right)} +\specnorm{ \left( \XXstar - \XXtplus\right)\WXX} \right)}{\sigma_{\min} \left( \XXstar\right)}
    \le 2\sqrt{2}  c_0 ,
\end{align*}
where the last inequality uses
\begin{equation*}
    \specnorm{ \VXXT \left( \XXstar - \XXtplus \right)}+\specnorm{ \left( \XXstar - \XXtplus \right)\WXX}\le \GG_{t+1,\star}\le 2c_0 \sigma_{\min} (\XXstar).
\end{equation*}
Finally, combining Lemmas \ref{lemma:auxcloseness1} and \ref{lemma:localconvaux} with \eqref{eq:advance estimate1} from Lemma \ref{lemma:auxsequencecloseness}, one obtains that 
\begin{align*}
    \GG_{t+1} \le \frac{3\GG_{t+1,\star}}{2}\le 3\left(1-\frac{\mu}{10} \right)^{t+1} c_0 \sigma_{\min} (\XXstar)
\end{align*}
holds with probability at least $1- 6\exp(-\left(n_1+n_2\right))$.
This completes the induction step for iteration $t+1$. 
\end{proof}

\section{Auxiliary Lemmas}
The following lemma shows that, at initialization, both the original iterates and all virtual iterates are close to the ground truth
$\XXstar$. 

\begin{lemma}\cite[Lemma~6]{caifast} \label{lemma:spectralinitialization}
For any constant $c_0 >0$, there exists an absolute constant $C>0$  such that if $m \ge C\kappa^2r\left(n_1+n_2\right)$, then with probability at least $1-4\exp(-\left(n_1+n_2\right))$,
\begin{align}
    \specnorm{\XX_0-\XXstar} \le c_0 \sigma_{\min}(\XXstar) \quad  \aand \quad
    \specnorm{\XX_0-\XX_{0}^{\left(\ww, \vv\right)}} \le c_0 \sigma_{\min}(\XXstar).
\end{align}
\end{lemma}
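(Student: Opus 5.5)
The bound $\specnorm{\XX_0-\XXstar}\le c_0\sigma_{\min}(\XXstar)$ comes from the RIP together with Weyl's inequality. The bound on $\specnorm{\XX_0-\XX_0^{(\ww,\vv)}}$ comes from Wedin/Davis--Kahan perturbation of the top-$r$ SVD, with the difference $\Aops(\XXstar)-\Aopws(\XXstar)$ controlled through the decoupling structure of the virtual operator.

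\emph{Step 1: the original initialization.} Recall $\XX_0$ is the best rank-$r$ approximation of $\Aop^*(\yy)=\Aops(\XXstar)$. I would first bound $\specnorm{(\Aops-\IdOp)(\XXstar)}$. By duality,
\[
\specnorm{(\Aops-\IdOp)(\XXstar)}=\sup_{(\ww,\vv)\in\bS}\bigl(\nj{\Aop(\XXstar),\Aop(\ww\vv^\top)}-\nj{\XXstar,\ww\vv^\top}\bigr),
\]
and both matrices involved have rank at most $r+1$. The polarization form of RIP of order $r+1$ then gives
\[
\specnorm{(\Aops-\IdOp)(\XXstar)}\le\delta\normf{\XXstar}\le\delta\sqrt r\,\kappa\,\sigma_{\min}(\XXstar).
\]
Since $\XXstar$ has rank $r$, the Eckart--Young/Weyl argument gives
\[
\specnorm{\XX_0-\XXstar}\le 2\specnorm{(\Aops-\IdOp)(\XXstar)}.
\]
Choosing $\delta\asymp c_0/(\sqrt r\kappa)$ would cost $m\gtrsim r^2\kappa^2 n$, which is too many samples. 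I would therefore replace the Frobenius bound by a direct $\epsilon$-net bound on the spectral norm. For fixed $(\ww,\vv)$, the quantity $\frac1m\sum_i\nj{\AAi,\XXstar}\nj{\AAi,\ww\vv^\top}-\nj{\XXstar,\ww\vv^\top}$ is a sum of subexponential variables with $\psi_1$-norm of order $\normf{\XXstar}/m$. Bernstein plus a union bound over the $12^{n_1+n_2}$ net points gives
\[
\specnorm{(\Aops-\IdOp)(\XXstar)}\lesssim \normf{\XXstar}\sqrt{(n_1+n_2)/m}\le\sqrt r\kappa\,\sigma_{\min}(\XXstar)\sqrt{(n_1+n_2)/m},
\]
which is at most $c_0\sigma_{\min}(\XXstar)/2$ once $m\ge C r\kappa^2(n_1+n_2)$. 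This is the optimal rate and costs probability $\exp(-(n_1+n_2))$.

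\emph{Step 2: the virtual initializations.} By definition,
\[
\Aopws(\XXstar)=\frac1m\sum_i\nj{\AAiw,\XXstar}\AAiw+\nj{\ww\vv^\top,\XXstar}\ww\vv^\top,
\]
with $\AAiw=\Projwperp(\AAi)$. The same Bernstein-plus-net argument applies to the Gaussian family $\{\AAiw\}$, which is a projected Gaussian, so
\[
\specnorm{\Aopws(\XXstar)-\XXstar}\le c_0\sigma_{\min}(\XXstar)/2
\]
for every one of the $12^{n_1+n_2}$ pairs simultaneously. The union over the outer net needs an extra factor in the exponent, which I would absorb into the constant $C$; this uses $\exp(-c(n_1+n_2))$ tail probabilities with $c$ large. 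Weyl again gives $\specnorm{\XX_0^{(\ww,\vv)}-\XXstar}\le c_0\sigma_{\min}(\XXstar)$ after adjusting constants. The triangle inequality then gives the stated bound, up to a factor $2$ that I absorb by rescaling $c_0$.

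\emph{Main obstacle.} The real difficulty is making the union bound over the outer family $\mathcal N$ coexist with the inner net used to compute each spectral norm while keeping $m=O(r\kappa^2(n_1+n_2))$. Both nets have size $e^{O(n_1+n_2)}$, so the combined union is still $e^{O(n_1+n_2)}$. The plan is to choose the Bernstein deviation $t\asymp\sqrt{(n_1+n_2)/m}$ with a large constant, so that the combined failure probability stays below $4\exp(-(n_1+n_2))$. I would also check that $\Projwperp$ does not inflate subexponential norms; this holds because it is a contraction.

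If sharper closeness between $\XX_0$ and $\XX_0^{(\ww,\vv)}$ were needed, for instance in Frobenius norm as used later in $\GG_0$, I would instead apply Wedin's theorem to $\Aops(\XXstar)-\Aopws(\XXstar)$. This difference has a rank-one-plus-cross-term structure in the $\ww\vv^\top$ direction, so its relevant projections are small. For the stated operator-norm claim, however, the triangle route above suffices.
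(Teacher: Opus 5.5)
Your proof is correct for the statement as printed, but it takes a different route from the one the paper relies on. The paper gives no argument of its own and imports \cite[Lemma~6]{caifast}.

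You compare each initialization separately with $\XXstar$. A scalar Bernstein bound over the product of two $1/4$-nets makes $\specnorm{(\Aops-\IdOp)(\XXstar)}$ and $\specnorm{\Aopws(\XXstar)-\XXstar}$ both $O\bigl(\normf{\XXstar}\sqrt{(n_1+n_2)/m}\bigr)=O\bigl(\sqrt{r}\,\kappa\,\sigma_{\min}(\XXstar)\sqrt{(n_1+n_2)/m}\bigr)$, uniformly over $\mathcal{N}$. Eckart--Young/Weyl and the triangle inequality then finish. For the virtual operator, what makes ``the same argument'' work is the identity $\mathbb{E}\,\Aopws(\XXstar)=\Projwperp(\XXstar)+\nj{\ww\vv^\top,\XXstar}\ww\vv^\top=\XXstar$, since the $(m+1)$-th coordinate restores exactly the removed component. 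You should state this explicitly.

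This is elementary and self-contained, but it is intrinsically an operator-norm argument. The paper uses the lemma in the proof of Lemma~\ref{Le:4_6} to obtain $\GG_0\le 2c_0\sigma_{\min}(\XXstar)$, and $\GG_0$ contains $\sup_{(\ww,\vv)\in\mathcal{N}}\normf{\XX_0-\XX_0^{(\ww,\vv)}}$. Converting your bound to Frobenius norm costs a factor $\sqrt{2r}$ and brings back $m\gtrsim r^2\kappa^2(n_1+n_2)$.

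A Frobenius-norm version has to compare $\Aops(\XXstar)$ and $\Aopws(\XXstar)$ directly, as your closing paragraph only sketches, using
\[
\bigl(\Aops-\Aopws\bigr)(\XXstar)=\nj{\ww\vv^\top,\XXstar}\,(\Aops-\IdOp)(\ww\vv^\top)+\nj{\Aop(\ww\vv^\top),\Aop\bigl(\Projwperp(\XXstar)\bigr)}\,\ww\vv^\top .
\]
The first term is full rank, with Frobenius norm of order $|\nj{\ww\vv^\top,\XXstar}|\sqrt{n_1n_2/m}$. So the truncated-SVD perturbation bound may use only its one-sided projections onto rank-$r$ singular subspaces. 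By Lemma~\ref{lemma: RIP}(i) these are at most $\delta_{r+1}\specnorm{\XXstar}$, which is where $\delta\lesssim 1/\kappa$, and hence the $\kappa^2$, enters. The rank-one second term is controlled by your Bernstein estimate.

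For the literal operator-norm claim, however, nothing in your proof is missing.
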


\begin{lemma}\cite[Lemma~24]{JMLR:v22:20-1067}\label{lemma:procrustebound}
Let $\XXt, \XXstar \in \R^{n_1 \times n_2}$ be rank-$r$ matrices. Then
\begin{align*}
    \dist \left( \XXt, \XXstar \right)
    \le
    \sqrt{\sqrt{2}+1}
    \normf{\XXt - \XXstar},
\end{align*}
where $\dist\left( \XXt, \XXstar \right)$ is defined in \eqref{eq:def_dist}.
\end{lemma}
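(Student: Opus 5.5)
The plan is to compare $\XXt$ and $\XXstar$ at the level of factors, using the freedom in $\QQ \in \text{GL}(r)$, and to reduce the claim to a quadratic-form inequality that holds at a well-chosen (ideally optimal) alignment. First, since $\dist(\XXt,\XXstar)$ takes an infimum over all $\QQ \in \text{GL}(r)$, it depends only on the product $\XXt=\LLt\RRtT$. I may therefore fix the balanced factorization $\LLt=\VLLt\SSigma_t^{1/2}$, $\RRt=\VRRt\SSigma_t^{1/2}$.

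For a given $\QQ$, write $\LL:=\LLt\QQ$, $\RR:=\RRt\QQ^{-\top}$, $\boldsymbol{\Delta}_L:=\LL-\LL_\star$ and $\boldsymbol{\Delta}_R:=\RR-\RR_\star$. Then
\begin{equation*}
\XXt-\XXstar=\boldsymbol{\Delta}_L\RR_\star^\top+\LL_\star\boldsymbol{\Delta}_R^\top+\boldsymbol{\Delta}_L\boldsymbol{\Delta}_R^\top .
\end{equation*}
The goal becomes showing that, for a suitable $\QQ$,
\begin{equation*}
\normf{\boldsymbol{\Delta}_L\SSigma_\star^{1/2}}^2+\normf{\boldsymbol{\Delta}_R\SSigma_\star^{1/2}}^2\le(\sqrt2+1)\normf{\XXt-\XXstar}^2 .
\end{equation*}

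The natural choice of $\QQ$ is a minimizer of the objective in \eqref{eq:def_dist}. Differentiating in $\QQ$ along $\QQ(\II+\epsilon\boldsymbol{H})$ gives the first-order condition
\begin{equation*}
\LL^\top\boldsymbol{\Delta}_L\SSigma_\star=\SSigma_\star\boldsymbol{\Delta}_R^\top\RR .
\end{equation*}
Rewritten, this says $\LL^\top\LL\,\SSigma_\star-\SSigma_\star\RR^\top\RR$ equals $\LL^\top\LL_\star\SSigma_\star-\SSigma_\star\RR_\star^\top\RR$, a weighted balance identity. Such a minimizer need not exist in general. In that case I will work with a minimizing sequence instead, or observe that whenever the infimum is not attained the bound can be obtained by a limiting argument over $\QQ$ in a compact subset. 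The expected route is to show coercivity of the objective when $\XXt$ has rank $r$, since it blows up as $\QQ$ degenerates.

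Next I will expand $\normf{\XXt-\XXstar}^2$ using the decomposition above and the identities $\LL_\star=\VXX\SSigma_\star^{1/2}$ and $\RR_\star=\WXX\SSigma_\star^{1/2}$. The two leading terms $\normf{\boldsymbol{\Delta}_L\RR_\star^\top}^2$ and $\normf{\LL_\star\boldsymbol{\Delta}_R^\top}^2$ equal exactly $\normf{\boldsymbol{\Delta}_L\SSigma_\star^{1/2}}^2$ and $\normf{\boldsymbol{\Delta}_R\SSigma_\star^{1/2}}^2$. What must be controlled are the cross term $2\langle\boldsymbol{\Delta}_L\RR_\star^\top,\LL_\star\boldsymbol{\Delta}_R^\top\rangle=2\operatorname{tr}(\SSigma_\star^{1/2}\boldsymbol{\Delta}_L^\top\LL_\star\,\SSigma_\star^{1/2}\boldsymbol{\Delta}_R^\top\RR_\star)$ and the terms involving the quadratic piece $\boldsymbol{\Delta}_L\boldsymbol{\Delta}_R^\top$. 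The optimality condition is what makes the cross term symmetric: it lets me express $\LL_\star^\top\boldsymbol{\Delta}_L$ and $\RR_\star^\top\boldsymbol{\Delta}_R$ through a common matrix, so that the trace is nonnegative up to a term absorbed by the quadratic piece.

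Finally, I will rather bound in the opposite direction, as in Tu et al.'s Procrustes-type lemma for the PSD case. Lifting to $\boldsymbol{F}=[\LL;\RR]$ and $\boldsymbol{F}_\star=[\LL_\star;\RR_\star]$ and rescaling by $\SSigma_\star^{1/2}$ on the right, I aim to show that $\normf{\XXt-\XXstar}^2\ge\frac{1}{\sqrt2+1}\cdot(\text{weighted error})$. The constant $\sqrt2+1$ should come from optimizing a scalar inequality of the form $a^2+b^2+2ab\cos\theta\ge(2-\sqrt2)\cdot\frac{a^2+b^2}{2}$-type after diagonalization. The main obstacle is this step: extracting a clean lower bound despite the indefinite cross term and the cubic/quartic contributions of $\boldsymbol{\Delta}_L\boldsymbol{\Delta}_R^\top$. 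I would first try to reduce, via the optimality condition and a simultaneous SVD-type change of basis, to the case of commuting diagonal factors. There the inequality becomes a scalar statement in each coordinate, and checking it yields exactly the constant $\sqrt{\sqrt2+1}$.
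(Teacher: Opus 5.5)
\textbf{There is a genuine gap in your proposal, at exactly the step you flag as the main obstacle.} The paper gives no proof of this lemma; it only cites Tong et al. The argument that yields exactly $\sqrt{\sqrt2+1}$ is a lifted version of Tu et al.'s Procrustes lemma.

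Your preliminaries are fine. The infimum depends only on $\XXt$, and your first-order condition for a $\text{GL}(r)$-minimizer is correct. Coercivity does give existence when $\rank(\XXt)=r$: a bounded objective bounds both $\specnorm{\QQ}$ and $\specnorm{\QQ^{-1}}$, because $\LLt,\RRt$ have full column rank.

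The problem is that the decisive inequality $\normf{\XXt-\XXstar}^2\ge(\sqrt2-1)\big(\normf{\boldsymbol{\Delta}_L\SSigma_\star^{1/2}}^2+\normf{\boldsymbol{\Delta}_R\SSigma_\star^{1/2}}^2\big)$ is never derived.
\begin{itemize}
\item The ``simultaneous SVD-type change of basis to commuting diagonal factors'' is not available. Your only freedoms are $\QQ$ and rotations of $\R^{n_1},\R^{n_2}$, and these cannot jointly diagonalize $\boldsymbol{\Delta}_L$, $\boldsymbol{\Delta}_R$ and $\SSigma_\star$.
\item The $\text{GL}(r)$ stationarity condition is only a weighted balance identity. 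Nothing in your plan turns it into a sign for the cross term $2\operatorname{tr}(\boldsymbol{\Delta}_L^\top\LL_\star\boldsymbol{\Delta}_R^\top\RR_\star)$, or into control of the cubic terms. (Your displayed cross term also carries spurious $\SSigma_\star^{1/2}$ factors.)
\item Your closing appeal to lifting does not fit this alignment. After a non-orthogonal $\QQ$ the factors are unbalanced, so the lifted Gram error is no longer controlled by $\normf{\XXt-\XXstar}$.
\item Tu's absorption of the cubic term needs the Procrustes property: $\boldsymbol{F}_\star^\top$ times the aligned lifted factor must be symmetric PSD. Stationarity over $\text{GL}(r)$ does not supply this.
\end{itemize}

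\textbf{The missing idea.} Since $\dist$ is an infimum, you should not analyse the minimizing $\QQ$ at all; a convenient suboptimal one suffices.
\begin{enumerate}
\item Take the balanced factors and set $\boldsymbol{F}=[\LLt;\RRt]$, $\boldsymbol{F}_\star=[\LL_\star;\RR_\star]$. Restrict to orthogonal $\QQ=\OO$ (so $\QQ^{-\top}=\OO$), chosen by orthogonal Procrustes so that $\boldsymbol{F}_\star^\top\boldsymbol{F}\OO\succeq 0$.
\item With $\boldsymbol{\Delta}:=\boldsymbol{F}\OO-\boldsymbol{F}_\star$, expand and use two facts: $\boldsymbol{F}_\star^\top\boldsymbol{\Delta}$ is symmetric, and $\boldsymbol{F}_\star^\top\boldsymbol{F}_\star+\boldsymbol{F}_\star^\top\boldsymbol{\Delta}\succeq 0$. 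A discriminant argument on the cubic term (this is where $\sqrt2-1$ comes from) gives Tu et al.'s bound $\normf{\boldsymbol{F}\boldsymbol{F}^\top-\boldsymbol{F}_\star\boldsymbol{F}_\star^\top}^2\ge 2(\sqrt2-1)\normf{\boldsymbol{\Delta}\boldsymbol{F}_\star^\top}^2$.
\item Since $\boldsymbol{F}_\star^\top\boldsymbol{F}_\star=2\SSigma_\star$, the right side equals $4(\sqrt2-1)\normf{\boldsymbol{\Delta}\SSigma_\star^{1/2}}^2$. So the weighting in \eqref{eq:def_dist} appears automatically, with no $\kappa$ loss and no rescaling of $\boldsymbol{F}$.
\item For any balanced pair ($\LL^\top\LL=\RR^\top\RR$, $\LL_\star^\top\LL_\star=\RR_\star^\top\RR_\star$) one has the identity
\begin{equation*}
\normf{\LL\LL^\top-\LL_\star\LL_\star^\top}^2+\normf{\RR\RR^\top-\RR_\star\RR_\star^\top}^2-2\normf{\LL\RR^\top-\XXstar}^2=-2\normf{\LL^\top\LL_\star-\RR^\top\RR_\star}^2\le 0 .
\end{equation*}
Hence $\normf{\boldsymbol{F}\boldsymbol{F}^\top-\boldsymbol{F}_\star\boldsymbol{F}_\star^\top}^2\le 4\normf{\XXt-\XXstar}^2$.
\item Combining steps 2--4 gives $\dist^2(\XXt,\XXstar)\le\normf{\boldsymbol{\Delta}\SSigma_\star^{1/2}}^2\le(\sqrt2+1)\normf{\XXt-\XXstar}^2$.
\end{enumerate}
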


\begin{lemma}\cite[Corollary~2.8]{chen2021spectral}\label{lemma:Davis-Kahan inequality}
Let  $\BB_1, \BB_2 \in \mathbb{R}^{n_1 \times n_2}$ be two matrices with full SVD $\BB_1=\VV_1\SSigma_{\BB_1}\WW_1^\top$ and $\BB_2=\VV_2\SSigma_{\BB_2}\WW_2^\top$ respectively. Let  $ \VV_{1,r} \in \R^{ n_1 \times r }$( resp. $\WW_{1,r}$) contain the first $r$ columns of $\VV_1$, and let $ \VV_{1,r,\bot} \in \R^{n_1 \times (n_1-r)}$( resp. $\WW_{1,r,\bot}$) contain the remaining $n_1-r$ columns. Suppose the singular value of $\BB_1$ satisfy $\left|\lambda_r\left(\BB_1\right)\right|>\left|\lambda_{r+1}\left(\BB_1\right)\right|$ and 
\begin{equation*}
    \specnorm{\BB_1-\BB_2} \le \left( 1-\frac{1}{\sqrt{2}}\right)\left(\left|\lambda_r\left(\BB_1\right)\right|-\left|\lambda_{r+1}\left(\BB_1\right)\right|\right).
\end{equation*}
Then
\begin{equation*}
   \max\{ \specnorm{ \VV_{1,r,\bot}^\top \VV_{2}}, \specnorm{ \WW_{1,r,\bot}^\top \WW_{2}}\}\} \le \frac{\sqrt{2}\left(\specnorm{\VV_{1,r}^{\top}\left( \BB_1-\BB_2\right)}+\specnorm{\left( \BB_1-\BB_2\right)\WW_{1,r}}\right)}{\left|\lambda_r\left(\BB_1\right)\right|-\left|\lambda_{r+1}\left(\BB_1\right)\right|}
\end{equation*} 
\end{lemma}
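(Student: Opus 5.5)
The statement to prove is the Davis--Kahan type bound (Lemma~\ref{lemma:Davis-Kahan inequality}). The plan is to reduce it to a one-sided Wedin/Davis--Kahan estimate for the left and right singular subspaces separately, then add the two.

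Throughout, write $\Delta:=\BB_2-\BB_1$ and $\delta:=|\lambda_r(\BB_1)|-|\lambda_{r+1}(\BB_1)|$. Let $\VV_2,\WW_2$ denote the top-$r$ singular vectors of $\BB_2$, which is the only reading under which the bound is meaningful.

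\textbf{Step 1: Weyl's inequality.} This gives
\[
\sigma_r(\BB_2)\ge \sigma_r(\BB_1)-\specnorm{\Delta}.
\]
Combined with the hypothesis $\specnorm{\Delta}\le(1-1/\sqrt2)\delta$, it yields
\[
\sigma_r(\BB_2)-\sigma_{r+1}(\BB_1)\ge \delta/\sqrt2.
\]
This is where the factor $\sqrt2$ comes from.

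\textbf{Step 2: Sylvester identities.} Project the equations $\BB_2\WW_2=\VV_2\SSigma_2$ and $\BB_2^\top\VV_2=\WW_2\SSigma_2$ onto the complementary singular subspaces of $\BB_1$. Setting $\BB_1^\perp:=\VV_{1,r,\bot}^\top\BB_1\WW_{1,r,\bot}$, which has spectral norm $\sigma_{r+1}(\BB_1)$, and using $\VV_{1,r,\bot}^\top\BB_1=\BB_1^\perp\WW_{1,r,\bot}^\top$, one obtains
\[
\VV_{1,r,\bot}^\top\VV_2\SSigma_2=\BB_1^\perp\,\WW_{1,r,\bot}^\top\WW_2+\VV_{1,r,\bot}^\top\Delta\WW_2,
\]
and the analogous identity with the roles of $\VV$ and $\WW$ swapped.

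\textbf{Step 3: Coupled inequalities.} Set $a:=\specnorm{\VV_{1,r,\bot}^\top\VV_2}$ and $b:=\specnorm{\WW_{1,r,\bot}^\top\WW_2}$. Multiplying the identities on the right by $\SSigma_2^{-1}$ and taking spectral norms gives
\[
a\,\sigma_r(\BB_2)\le \sigma_{r+1}(\BB_1)\,b+\specnorm{\VV_{1,r,\bot}^\top\Delta\WW_2},
\]
and symmetrically for $b$. Adding the two and using $\max(a,b)\le a+b$ gives
\[
\max(a,b)\le \frac{\specnorm{\VV_{1,r,\bot}^\top\Delta\WW_2}+\specnorm{\VV_2^\top\Delta\WW_{1,r,\bot}}}{\sigma_r(\BB_2)-\sigma_{r+1}(\BB_1)}.
\]

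\textbf{Step 4: Main obstacle.} The numerators above involve $\Delta$ against $\WW_2$ and $\VV_2$. The stated bound instead has $\VV_{1,r}^\top\Delta$ and $\Delta\WW_{1,r}$, i.e., $\Delta$ tested against the unperturbed subspaces. This step is the main obstacle.

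The first idea would be to split $\WW_2=\WW_{1,r}\WW_{1,r}^\top\WW_2+\WW_{1,r,\bot}\WW_{1,r,\bot}^\top\WW_2$. This produces a term $\specnorm{\Delta}\,b$ that is absorbed on the left only if $\specnorm{\Delta}$ is small relative to the gap. But absorption then generates the full $\specnorm{\Delta}$ rather than the projected ones, so this alone does not give the stated form.

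The cleaner route is the one in \cite[Corollary~2.8]{chen2021spectral}. Apply the argument with the roles of $\BB_1$ and $\BB_2$ interchanged, i.e., project the singular-vector equations of $\BB_1$ onto the complement of $\BB_2$'s top subspace. Then the perturbation appears as
\[
\VV_{2,\bot}^\top\Delta\WW_{1,r}\quad\text{and}\quad \VV_{1,r}^\top\Delta\WW_{2,\bot},
\]
which are bounded by $\specnorm{\Delta\WW_{1,r}}$ and $\specnorm{\VV_{1,r}^\top\Delta}$. The denominator becomes $\sigma_r(\BB_1)-\sigma_{r+1}(\BB_2)$, which is again at least $\delta/\sqrt2$ by Weyl's inequality and the hypothesis.

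Finally, use that $\specnorm{\VV_{1,r,\bot}^\top\VV_2}=\specnorm{\VV_{2,\bot}^\top\VV_{1,r}}$, since both subspaces are $r$-dimensional and the sine of the principal angles is symmetric. The same holds for the right singular subspaces. Combining these gives the claimed inequality.
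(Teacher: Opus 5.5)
Your proof is correct. It differs from the paper only in that the paper gives no argument: the lemma is quoted from \cite[Corollary~2.8]{chen2021spectral}, whereas you give a self-contained derivation.

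The route you settle on in Step~4 works. Define $\SSigma_{1,r}:=\VV_{1,r}^\top\BB_1\WW_{1,r}$ and $\SSigma_{2,\bot}:=\VV_{2,\bot}^\top\BB_2\WW_{2,\bot}$, where $\VV_{2,\bot},\WW_{2,\bot}$ complement the top-$r$ singular blocks of $\BB_2$. Then $\specnorm{\SSigma_{2,\bot}}=\sigma_{r+1}(\BB_2)$, and one gets
$\VV_{2,\bot}^\top\VV_{1,r}\SSigma_{1,r}=\SSigma_{2,\bot}\WW_{2,\bot}^\top\WW_{1,r}-\VV_{2,\bot}^\top\Delta\WW_{1,r}$, together with its right-sided analogue. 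Adding the two resulting norm inequalities gives
$(a+b)\bigl(\sigma_r(\BB_1)-\sigma_{r+1}(\BB_2)\bigr)\le\specnorm{\Delta\WW_{1,r}}+\specnorm{\VV_{1,r}^\top\Delta}$.
This uses $\specnorm{\VV_{2,\bot}^\top\VV_{1,r}}=\specnorm{\VV_{1,r,\bot}^\top\VV_2}$, which holds because $\VV_{1,r}^\top\VV_2$ is square, and likewise on the right. Weyl's inequality then gives $\sigma_r(\BB_1)-\sigma_{r+1}(\BB_2)\ge\delta-\specnorm{\Delta}\ge\delta/\sqrt2$.

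Compared with the bare citation, your argument does two things:
\begin{itemize}
\item It makes explicit that $\VV_2,\WW_2$ must be the top-$r$ blocks; read literally with full orthogonal factors, the left side equals $1$.
\item It produces exactly the stated form: $\sqrt2$ times the \emph{sum} of $\Delta$ tested against the \emph{unperturbed} subspaces, divided by the gap of $\BB_1$. In fact it proves the slightly stronger bound on $a+b$, so nothing needs to be reconciled with the constants of the cited source.
\end{itemize}
The citation is, of course, shorter.

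Your Steps~1--3 use the usual Wedin orientation, which tests $\Delta$ against $\VV_2,\WW_2$. They are never used and can be deleted. In the final argument the $\sqrt2$ comes from $\sigma_r(\BB_1)-\sigma_{r+1}(\BB_2)\ge\delta/\sqrt2$, not from the Step~1 inequality.
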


The following lemma bounds the distance between the projection matrices onto the singular subspaces of two rank-$r$ matrices.

\begin{lemma}\label{lemma:projection matrices}
\cite[Lemma~4.2]{wei2016guarantees} 
Let  $\XXt$  and $\XX$ be rank-$r$ matrices with compact SVDs  $\XXt=\VV_t\SSigma_{t}\WW_t^\top$ and $\XX=\VV\SSigma\WW^\top$, respectively. Then
\begin{align*}
    \specnorm{\VV_t\VV_t^\top-\VV\VV ^\top} &\le \frac{\specnorm{\XXt-\XX}}{\sigma_{\min} (\XXt)}, \quad \normf{\VV_t\VV_t^\top-\VV\VV ^\top} \le \frac{\sqrt{2}\normf{\XXt-\XX}}{\sigma_{\min} (\XXt)};\\
    \specnorm{\WW_t\WW_t^\top-\WW\WW ^\top} &\le \frac{\specnorm{\XXt-\XX}}{\sigma_{\min} (\XXt)}, \quad \normf{\WW_t\WW_t^\top-\WW\WW ^\top} \le \frac{\sqrt{2}\normf{\XXt-\XX}}{\sigma_{\min} (\XXt)}.
\end{align*} 
\end{lemma}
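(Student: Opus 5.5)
The statement to prove is the final one, Lemma~\ref{lemma:projection matrices}. It bounds the distance between the projectors onto the singular subspaces of two rank-$r$ matrices $\XXt$ and $\XX$.

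\emph{Spectral bound for the left projectors.} Write $\boldsymbol{P}_t=\VV_t\VV_t^\top$ and $\boldsymbol{P}=\VV\VV^\top$. First use the standard identity
\[
\specnorm{\boldsymbol{P}_t-\boldsymbol{P}}=\specnorm{(\II-\boldsymbol{P})\boldsymbol{P}_t}=\specnorm{(\II-\boldsymbol{P}_t)\boldsymbol{P}}.
\]
It holds because both projectors have rank $r$, so the two off-diagonal blocks share their nonzero singular values (the sines of the principal angles). Next bound $(\II-\boldsymbol{P})\boldsymbol{P}_t$. Since $\XXt$ has rank $r$, we have $\boldsymbol{P}_t=\XXt\XXt^{\dagger}$. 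Since $(\II-\boldsymbol{P})\XX=0$,
\[
(\II-\boldsymbol{P})\boldsymbol{P}_t=(\II-\boldsymbol{P})\XXt\XXt^{\dagger}=(\II-\boldsymbol{P})(\XXt-\XX)\XXt^{\dagger}.
\]
Taking norms gives $\specnorm{(\II-\boldsymbol{P})\boldsymbol{P}_t}\le \specnorm{\XXt-\XX}\,\specnorm{\XXt^\dagger}=\specnorm{\XXt-\XX}/\sigma_{\min}(\XXt)$. This is the spectral bound.

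\emph{Frobenius bound.} Write
\[
\boldsymbol{P}_t-\boldsymbol{P}=(\II-\boldsymbol{P})\boldsymbol{P}_t-\boldsymbol{P}(\II-\boldsymbol{P}_t).
\]
The two pieces have orthogonal column spaces, since one has columns in $\mathrm{range}(\II-\boldsymbol{P})$ and the other in $\mathrm{range}(\boldsymbol{P})$. Hence
\[
\normf{\boldsymbol{P}_t-\boldsymbol{P}}^2=\normf{(\II-\boldsymbol{P})\boldsymbol{P}_t}^2+\normf{\boldsymbol{P}(\II-\boldsymbol{P}_t)}^2 .
\]
For equal-rank projectors, $\normf{(\II-\boldsymbol{P})\boldsymbol{P}_t}^2=r-\mathrm{tr}(\boldsymbol{P}\boldsymbol{P}_t)=\normf{(\II-\boldsymbol{P}_t)\boldsymbol{P}}^2$, so the two terms are equal. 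Therefore
\[
\normf{\boldsymbol{P}_t-\boldsymbol{P}}=\sqrt{2}\,\normf{(\II-\boldsymbol{P})\boldsymbol{P}_t}.
\]
The same factorization as above, now with Frobenius norms, gives $\normf{(\II-\boldsymbol{P})\boldsymbol{P}_t}\le \normf{\XXt-\XX}/\sigma_{\min}(\XXt)$. The factor $\sqrt2$ follows.

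\emph{Right projectors.} The bounds for $\WW_t\WW_t^\top$ and $\WW\WW^\top$ follow by applying the same argument to the transposes. This uses $\WW_t\WW_t^\top=\XXt^\dagger\XXt$ and $\XX(\II-\WW\WW^\top)=0$.

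\emph{Main obstacle.} The only delicate step is the equal-rank identity $\normf{(\II-\boldsymbol{P})\boldsymbol{P}_t}=\normf{(\II-\boldsymbol{P}_t)\boldsymbol{P}}$, together with its spectral analogue. I would justify both via principal angles: the CS decomposition of the pair of $r$-dimensional subspaces shows that both off-diagonal blocks have singular values $\sin\theta_i$. The trace computation $\mathrm{tr}(\boldsymbol{P}_t)-\mathrm{tr}(\boldsymbol{P}\boldsymbol{P}_t)=\mathrm{tr}(\boldsymbol{P})-\mathrm{tr}(\boldsymbol{P}_t\boldsymbol{P})$ suffices for the Frobenius case.
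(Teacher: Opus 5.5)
Your proof is correct and is essentially the standard argument behind the cited result \cite[Lemma~4.2]{wei2016guarantees}, which the paper invokes without reproducing. Equal ranks reduce $\specnorm{\VV_t\VV_t^\top-\VV\VV^\top}$ to the off-diagonal block $(\II-\VV\VV^\top)\VV_t\VV_t^\top=(\II-\VV\VV^\top)(\XXt-\XX)\XXt^{+}$. The factor $\sqrt{2}$ then comes from splitting the difference into two orthogonal off-diagonal blocks of equal Frobenius norm, and writing $\VV_t\VV_t^\top=\XXt\XXt^{+}$ is what gives $\sigma_{\min}(\XXt)$ in the denominator.
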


The next lemma controls the perturbation of Moore–Penrose pseudoinverses of two matrices with the same rank.
\begin{lemma}\label{lemma:fake inverse}
    \cite[Theorem~4.1]{wedin1973perturbation}
Let $\AAf,\BB \in \mathbb{R}^{m \times n} $ satisfy $\rank(\AAf)=\rank(\BB)$. Then 
\begin{equation}
    \normf{\BB^+-\AAf^+} \le 3\specnorm{\BB^+}\specnorm{\AAf^+}\normf{\BB-\AAf}, 
    \label{eq:fake inverse inequality}
\end{equation}
where $\AAf^+$ denotes the Moore–Penrose pseudoinverse of  $\AAf$. In particular, if $\AAf=\UU\SSigma\WW^\top$ is the compact SVD of $\AAf$, then
\begin{equation*}
    \AAf^+=\WW \SSigma^{-1}\UU^\top
\end{equation*}

\end{lemma}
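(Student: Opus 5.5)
This is Wedin's perturbation bound for the Moore--Penrose pseudoinverse. I would prove it by splitting $\BB^+-\AAf^+$ into three pieces using orthogonal projectors, and then using the equal-rank hypothesis to control the two cross terms.

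\textbf{Decomposition.} Write $P_{\AAf}=\AAf\AAf^+$ and $P_{\AAf^*}=\AAf^+\AAf$ for the orthogonal projectors onto the column and row spaces of $\AAf$, and define $P_{\BB}$, $P_{\BB^*}$ in the same way. The standard identity is
\[
\BB^+-\AAf^+ = -\BB^+(\BB-\AAf)\AAf^+ + \BB^+(\II-P_{\AAf}) - (\II-P_{\BB^*})\AAf^+ .
\]
To verify it, expand $\BB^+ P_{\AAf} = \BB^+\AAf\AAf^+$ and $P_{\BB^*}\AAf^+ = \BB^+\BB\AAf^+$, and then regroup.

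\textbf{Bounding the cross terms.} The first term already has Frobenius norm at most $\specnorm{\BB^+}\specnorm{\AAf^+}\normf{\BB-\AAf}$. For the second term, use $\BB^+=\BB^+P_{\BB}=\BB^+\BB\BB^+$, so that
\[
\BB^+(\II-P_{\AAf})=\BB^+P_{\BB}(\II-P_{\AAf}).
\]
Since $(\II-P_{\AAf})\AAf=0$, we get
\[
P_{\BB}(\II-P_{\AAf}) = (\BB^+)^{\top}\BB^\top(\II-P_{\AAf}) = (\BB^+)^{\top}(\BB-\AAf)^\top(\II-P_{\AAf}).
\]
This gives the bound $\normf{\BB^+}\cdots$; more precisely, $\normf{\BB^+(\II-P_{\AAf})}\le \specnorm{\BB^+}\normf{P_{\BB}(\II-P_{\AAf})}$, with $\normf{P_{\BB}(\II-P_{\AAf})}\le \specnorm{\BB^+}\normf{\BB-\AAf}$. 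That would give a factor $\specnorm{\BB^+}^2$, not the mixed factor $\specnorm{\BB^+}\specnorm{\AAf^+}$ we want.

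\textbf{Main obstacle: using equal rank.} When $\rank(\AAf)=\rank(\BB)$, the two projectors $P_{\BB}$ and $P_{\AAf}$ have equal rank. For equal-rank orthogonal projectors, $\normf{P_{\BB}(\II-P_{\AAf})}=\normf{P_{\AAf}(\II-P_{\BB})}$. I would prove this via the CS decomposition, or directly: both squared norms equal $\operatorname{rank}-\operatorname{tr}(P_{\AAf}P_{\BB})$. By the same argument as above with the roles exchanged, $P_{\AAf}(\II-P_{\BB})=(\AAf^+)^\top(\AAf-\BB)^\top(\II-P_{\BB})$, so
\[
\normf{P_{\AAf}(\II-P_{\BB})}\le\specnorm{\AAf^+}\normf{\BB-\AAf}.
\]
Hence the second term is at most $\specnorm{\BB^+}\specnorm{\AAf^+}\normf{\BB-\AAf}$. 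The third term is handled symmetrically on the row side: bound by $\specnorm{\AAf^+}\normf{(\II-P_{\BB^*})P_{\AAf^*}}$, and transfer to $\normf{P_{\BB^*}(\II-P_{\AAf^*})}\le \specnorm{\BB^+}\normf{\BB-\AAf}$ by the same equal-rank trick.

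\textbf{Conclusion.} Summing the three contributions gives the constant $3$. The pseudoinverse formula $\AAf^+=\WW\SSigma^{-1}\UU^\top$ follows by checking the four Penrose conditions directly.
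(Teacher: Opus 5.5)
Your proof is correct. The three-term identity holds, and so do the relations $P_{\BB}(\II-P_{\AAf})=(\BB^+)^\top(\BB-\AAf)^\top(\II-P_{\AAf})$ and $P_{\BB^*}(\II-P_{\AAf^*})=\BB^+(\BB-\AAf)(\II-P_{\AAf^*})$. The trace identity $\|P_{\BB}(\II-P_{\AAf})\|_F=\|P_{\AAf}(\II-P_{\BB})\|_F$ is exactly where $\rank(\AAf)=\rank(\BB)$ enters: it turns $\|\BB^+\|_2^2$ or $\|\AAf^+\|_2^2$ into the mixed factor, which yields the constant $3$.

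The paper gives no proof of its own and simply cites Wedin's Theorem 4.1, whose standard proof is essentially this projector decomposition, so you have supplied a self-contained version of the cited argument. As a side remark, the three pieces lie in the mutually Frobenius-orthogonal blocks $P_{\BB^*}(\cdot)P_{\AAf}$, $P_{\BB^*}(\cdot)(\II-P_{\AAf})$ and $(\II-P_{\BB^*})(\cdot)P_{\AAf}$, so summing squares instead of norms would even give $\sqrt{3}$ in place of $3$.
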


In addition, several useful properties of the RIP are summarized below. For completeness, a short proof is provided for the part not already covered in the literature.
\begin{lemma}\label{lemma: RIP}
     Let $ \mathcal{A}: \mathbb{R}^{n_1 \times n_2}\rightarrow \R^m$ be a linear measurement operator with RIP constant $\delta_r$. Then, 
     \renewcommand{\labelenumi}{\arabic{enumi}.}
     \begin{itemize}
         \item[(i)] Let $ \VV \in \R^{n_2 \times r'} $ and $ \UU \in \R^{n_1 \times r'} $ be any matrix with orthonormal columns, i.e., $ \VV^\top \VV = \II_{r'}$ and $ \UU^\top \UU = \II_{r'}$.  Then  any matrix $\ZZ \in \R^{n_1 \times n_2}$ with $\rank(\ZZ)\le r$, 
        \begin{align*}
            \normf{ \left(\IdOp -\Aops \right) (\ZZ) \VV} \le \delta_{r+r'} \normf{\ZZ},  \quad
            \normf{ \UU^\top \left(\IdOp -\Aops \right) (\ZZ)} \le \delta_{r+r'} \normf{\ZZ} .
        \end{align*}
        In particular, let  $r'=1$, it holds that
        \begin{align*}
            \specnorm{ \left(\IdOp -\Aops \right) (\ZZ) } \le \delta_{r+1} \normf{\ZZ}. 
        \end{align*}
        \item[(ii)]    Let $\ww \in  \R^{n_1}$ and $\vv \in  \R^{n_2}$ such that $\specnorm{\ww}=\specnorm{\vv}=1$,  and define the orthogonal projection operators
\begin{align*}
    \Projw (\ZZ) 
    &:= \nj{\ww \vv^\top, \ZZ} \ww \vv^\top,  \qquad 
    \Projwperp (\ZZ) := \ZZ - \nj{\ww \vv^\top, \ZZ} \ww \vv^\top.
\end{align*}
    Then, for any $\ZZ \in \R^{n_1 \times n_2}$ with $\rank(\ZZ)\le r$, it holds
        \begin{align*}
            \vert \nj{ \Aop (\ww \vv^\top), \Aop \left( \Projwperp (\ZZ) \right) } \vert 
            \le 
            \delta_{
            r+1} \normf{\ZZ}.
        \end{align*} 
     \end{itemize}
\end{lemma}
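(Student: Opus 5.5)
The plan is to treat part (i) by duality and reduce it to the standard RIP inner-product bound, and then to obtain part (ii) as a consequence of that same bound.

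\textbf{Part (i).} The key tool is the polarization consequence of RIP: for matrices $\XX_1, \XX_2$ with $\rank(\XX_1)+\rank(\XX_2)\le k$, one has $|\nj{\Aop(\XX_1),\Aop(\XX_2)}-\nj{\XX_1,\XX_2}|\le \delta_k \normf{\XX_1}\normf{\XX_2}$. This is proved by normalizing both matrices to unit Frobenius norm and applying RIP to $\XX_1\pm\XX_2$, which has rank at most $k$, via the parallelogram identity.

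To bound $\normf{(\IdOp-\Aops)(\ZZ)\VV}$, I write it by duality as a supremum of $\nj{(\IdOp-\Aops)(\ZZ)\VV, \YY}$ over $\YY\in\R^{n_1\times r'}$ with $\normf{\YY}=1$. This inner product equals $\nj{(\IdOp-\Aops)(\ZZ), \YY\VV^\top}$, which in turn equals $\nj{\ZZ,\YY\VV^\top}-\nj{\Aop(\ZZ),\Aop(\YY\VV^\top)}$.

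The matrix $\YY\VV^\top$ has rank at most $r'$, and $\normf{\YY\VV^\top}=\normf{\YY}=1$ because $\VV$ has orthonormal columns. The polarization bound with $k=r+r'$ then gives the claim. The statement for $\UU^\top$ is symmetric. Taking $r'=1$ gives the spectral-norm statement, since $\specnorm{\MM}=\sup_{\twonorm{\vv}=1}\twonorm{\MM\vv}$ and $\twonorm{\MM\vv}=\normf{\MM\vv}$.

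\textbf{Part (ii).} Here I apply the polarization bound to $\XX_1=\ww\vv^\top$ and $\XX_2=\Projwperp(\ZZ)$. The key observation is that these two matrices are orthogonal in the trace inner product, since $\nj{\ww\vv^\top,\Projwperp(\ZZ)}=\nj{\ww\vv^\top,\ZZ}-\nj{\ww\vv^\top,\ZZ}\normf{\ww\vv^\top}^2=0$. So the deterministic term vanishes and only the error term remains.

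The main obstacle is the rank count. $\Projwperp(\ZZ)=\ZZ-c\,\ww\vv^\top$ may have rank up to $r+1$, and then the sum of ranks is $r+2$, which exceeds the allowed $r+1$. To avoid this I will not use rank additivity. Instead, I apply RIP directly to $\XX_1\pm\XX_2$, after normalizing $\XX_2$ to unit norm and scaling $\ww\vv^\top$ by $\pm1$. Each such matrix has the form $\ZZ'+a\,\ww\vv^\top$ with $\ZZ'$ a scalar multiple of $\ZZ$, so its rank is at most $r+1$.

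Orthogonality then gives $\normf{\XX_1\pm\XX_2}^2=2$. The parallelogram argument yields $|\nj{\Aop(\ww\vv^\top),\Aop(\Projwperp(\ZZ))}|\le \delta_{r+1}\normf{\Projwperp(\ZZ)}$. Finally, $\normf{\Projwperp(\ZZ)}\le\normf{\ZZ}$ because $\Projwperp$ is an orthogonal projection, which completes (ii).
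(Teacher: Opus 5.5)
Your proposal is correct and follows the paper's argument. For part (i), the paper also dualizes the Frobenius norm against a unit-norm test matrix of rank at most $r'$ (it writes $\UU\MM$ where you write $\boldsymbol{Y}\VV^\top$) and then applies the RIP inner-product bound $|\langle (\mathcal{I}-\mathcal{A}^*\mathcal{A})(\ZZ_1),\ZZ_2\rangle|\le\delta_{r+r'}\normf{\ZZ_1}\normf{\ZZ_2}$.

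The only difference is that the paper cites Cand\`es--Plan for that inner-product bound and cites prior work for the rest, including (ii), whereas you derive these directly by polarization. Your rank count in (ii) is exactly what is needed: $\ww\vv^\top\pm\Projwperp(\ZZ)/\normf{\Projwperp(\ZZ)}$ is a multiple of $\ZZ$ plus a multiple of $\ww\vv^\top$, so it has rank at most $r+1$, which gives $\delta_{r+1}$ rather than $\delta_{r+2}$.
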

\begin{proof} 
All bounds except  $\normf{ \UU^\top \left(\IdOp -\Aops \right) (\ZZ)} \le \delta_{r+r'} \normf{\ZZ}$ are proven in  \cite[Lemma~2]{caifast}. It therefore suffices to establish this remaining inequality. For any $\ZZ_1, \ZZ_2 \in \R^{n_1 \times n_2 }$ with $\rank (\ZZ_1)=r$ and  $\rank (\ZZ_2)=r'$, it follows from \cite[Lemma~3.3]{candes2011tight} that 
     \begin{equation*}
        \vert \nj{ \left(\IdOp -\Aops\right) (\ZZ_1), \ZZ_2 } \vert
        \le 
        \delta_{r+r'} \normf{\ZZ_1} \normf{\ZZ_2}.
     \end{equation*}
Note that there exists a matrix $\MM \in \R^{r' \times n_2 }$ with $\normf{\MM}=1$
    such that 
\[
        \normf{\UU^{\top}\left( \IdOp - \Aops \right) (\ZZ)  }= 
        \nj{ \UU^{\top}\left[ \left( \IdOp - \Aops \right) (\ZZ) \right]  , \MM }
        =
        \nj{ \left[ \left( \IdOp - \Aops \right) (\ZZ) \right]  , \UU \MM }.
\]
Since $\rank(\UU \MM) \le r'$ and $\normf{\UU \MM} \le \specnorm{\UU} \normf{\MM} \le 1$, applying the above RIP inequality with $\ZZ_1=\ZZ, \ZZ_2=\UU \MM$ gives
    \begin{align*}
        \normf{\UU^{\top}\left( \IdOp - \Aops \right) (\ZZ) }
        \le 
        \delta_{r+r'} \normf{\ZZ} \specnorm{\UU} \normf{\MM} 
        =
        \delta_{r+r'} \normf{\ZZ}.
    \end{align*}
This proves the desired bound and completes the proof.
\end{proof}

\begin{lemma}\label{lemma:auxestimates}
Assume that the measurement operator $\mathcal{A}$ satisfies RIP with constant $\delta=\delta_{4r+1} \le 1$.
When $m\ge C(n_1+n_2)r$ for some universal constant $C>0$,  the following inequalities hold.
\renewcommand{\labelenumi}{\arabic{enumi}.}
\begin{itemize}
\item[(1)]
\begin{align*}
   &\left\| \left[ \left(  \Aops - \Aopws \right) \left(\XXstar- \XXt \right) \right]\WW_{t}^{\left(\ww,\vv \right)}\right\|_F \le
    2c^{\prime}\left(\specnorm{ \XXstar - \XXt }+\normf{\XXt-\XXtw}\right) \notag \\
    &\left\|{\VV_{t}^{\left(\ww,\vv \right)}}^{\top}\left[ \left(  \Aops - \Aopws \right) \left(\XXstar- \XXt \right) \right]\right\|_F \le
    2c^{\prime}\left(\specnorm{ \XXstar - \XXt }+\normf{\XXt-\XXtw}\right).
\end{align*}
\item[(2)] 
\begin{align*}
    \normf{
        \left[\left(   \Aopws - \IdOp \right) \left(  \XXtw - \XXt\right) \right] \WW_{t}^{\left(\ww,\vv \right)}
    } 
    &\le 2c^{\prime}\normf{  \XXtw - \XXt  } \notag \\
    \normf{
       {\VV_{t}^{\left(\ww,\vv \right)}}^{\top} \left[\left(   \Aopws - \IdOp \right) \left(  \XXtw - \XXt\right) \right] 
    } 
    &\le 2c^{\prime}\normf{  \XXtw - \XXt  },
\end{align*}
\end{itemize}
where $c^{\prime}:=\max \left\{ 
        \delta;
        8 \sqrt{2r\left(n_1+n_2\right)/{m}}
    \right\}$.
\end{lemma}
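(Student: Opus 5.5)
The two parts of Lemma~\ref{lemma:auxestimates} have different natures. Part (1) compares the true and virtual operators applied to the same matrix, so it needs the explicit difference $\Aops-\Aopws$. Part (2) is a pure RIP statement for the virtual operator. I would prove (2) first and then reduce (1) to a combination of Lemma~\ref{lemma: RIP} and the first bound of Lemma~\ref{lemma:key}.

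\textbf{Part (2).} I would first check that $\Aopw$ satisfies RIP of order $4r+1$ with constant comparable to $c^{\prime}$. Since $\AAiw=\Projwperp(\AAi)$, we have
\[
\twonorm{\Aopw(\ZZ)}^2=\tfrac1m\sum_i \nj{\AAi,\Projwperp(\ZZ)}^2+\nj{\ww\vv^\top,\ZZ}^2 .
\]
Here $\Projwperp(\ZZ)$ has rank at most $r+1$, and the first sum is controlled by the RIP of $\Aop$ applied to $\Projwperp(\ZZ)$. The second term is exact. The cross term vanishes, because the two pieces are orthogonal in Frobenius norm. The only error is then $\delta\normf{\Projwperp\ZZ}^2$, so $\Aopw$ satisfies rank-$(4r-1)$ RIP with constant at most $\delta\le c^{\prime}$.

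With this in hand, the right-multiplied bound for $\XXtw-\XXt$ (rank at most $2r$) follows from Lemma~\ref{lemma: RIP}(i) applied to $\Aopw$ with $\VV=\WW_t^{(\ww,\vv)}$ and $r'=r$, giving the factor $\delta_{3r}\le c^{\prime}$. The left-multiplied bound follows in the same way from the $\UU^\top$ version with $\UU=\VV_t^{(\ww,\vv)}$.

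\textbf{Part (1).} I would expand the operator difference explicitly:
\[
(\Aops-\Aopws)(\ZZ)=\tfrac1m\sum_i\big(\nj{\AAi,\ZZ}\AAi-\nj{\AAiw,\ZZ}\AAiw\big)-\nj{\ww\vv^\top,\ZZ}\ww\vv^\top .
\]
Writing $a_i=\nj{\AAi,\ww\vv^\top}$ and $\AAi=\AAiw+a_i\ww\vv^\top$, the difference becomes
\[
\tfrac1m\sum_i a_i\nj{\AAiw,\ZZ}\ww\vv^\top+\tfrac1m\sum_i a_i\nj{\ww\vv^\top,\ZZ}\AAi-\nj{\ww\vv^\top,\ZZ}\ww\vv^\top .
\]
Regrouping in terms of $\Projw$ and $\Projwperp$ gives a \emph{rank-one piece along $\ww$} plus a piece of the form $\nj{\ww\vv^\top,\ZZ}\,(\Aops-\IdOp)(\ww\vv^\top)$.

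Take $\ZZ=\XXstar-\XXt$. For the second piece, I would bound the coefficient by $|\nj{\ww\vv^\top,\ZZ}|\le\specnorm{\ZZ}$, and bound $\normf{[(\Aops-\IdOp)(\ww\vv^\top)]\WW}$ by $\delta_{r+1}\le c^{\prime}$ using Lemma~\ref{lemma: RIP}(i). This yields $c^{\prime}\specnorm{\XXstar-\XXt}$.

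The rank-one piece has Frobenius norm
\[
\Big|\tfrac1m\sum_i a_i\nj{\AAi,\Projwperp\ZZ}\Big|=\big|\nj{\ww\vv^\top,\Aops(\Projwperp\ZZ)}\big| .
\]
This is exactly where the $\sqrt{r}$ loss has to be avoided, so I would not use RIP here. Instead I would split $\Projwperp(\XXstar-\XXt)=\Projwperp(\XXstar-\XXtw)+\Projwperp(\XXtw-\XXt)$:

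\begin{itemize}
\item On the first summand, apply the first bound of Lemma~\ref{lemma:key}. This gives $4\sqrt{(n_1+n_2)/m}\,\twonorm{\Aop(\Projwperp(\XXstar-\XXtw))}$. By RIP the norm factor is at most $2\normf{\XXstar-\XXtw}\le 2\sqrt{2r}\big(\specnorm{\XXstar-\XXt}+\normf{\XXt-\XXtw}\big)$, where I use that $\XXstar-\XXt$ has rank at most $2r$ to pass to the spectral norm. The total is at most $8\sqrt{2r(n_1+n_2)/m}\,(\cdots)\le c^{\prime}(\cdots)$, which matches the definition of $c^{\prime}$.
\item On the second summand, apply Lemma~\ref{lemma: RIP}(ii), giving $\delta\normf{\XXt-\XXtw}$.
\end{itemize}

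Summing the pieces gives the constant $2c^{\prime}$, and the left-multiplied version is symmetric.

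\textbf{Main obstacle.} I expect the hardest step to be this bookkeeping in Part (1): showing that the only term not controlled by RIP is precisely the decoupled scalar of Lemma~\ref{lemma:key}. This is also where the hypothesis $m\ge C(n_1+n_2)r$ enters, to keep $c^{\prime}$ small.
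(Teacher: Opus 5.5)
Your argument is correct. For Part (1) it coincides with the paper's proof: the same identity $(\Aops-\Aopws)(\ZZ)=(\Aops-\IdOp)(\Projw(\ZZ))+\nj{\Aop(\ww\vv^\top),\Aop(\Projwperp(\ZZ))}\,\ww\vv^\top$, the same splitting of $\Projwperp(\XXstar-\XXt)$ through $\XXtw$, Lemma~\ref{lemma:key} on the decoupled piece, and Lemma~\ref{lemma: RIP}(ii) on the remainder.

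Part (2) is where you genuinely differ. The paper decomposes again, writing $(\Aopws-\IdOp)(\ZZ)=(\Aops-\IdOp)(\Projwperp(\ZZ))-\nj{\Aop(\ww\vv^\top),\Aop(\Projwperp(\ZZ))}\,\ww\vv^\top$. It bounds the two terms separately by $\delta$ and ends with $2\delta$.

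You instead use $\twonorm{\Aopw(\ZZ)}^2=\twonorm{\Aop(\Projwperp(\ZZ))}^2+\nj{\ww\vv^\top,\ZZ}^2$ to show that $\Aopw$ deterministically inherits RIP from $\Aop$, with constant $\delta_{4r+1}$. In fact it inherits RIP of order $4r$, not only $4r-1$. You then apply Lemma~\ref{lemma: RIP}(i) to $\Aopw$ itself. This is legitimate because that lemma only uses the polarization consequence of RIP, which holds for any linear operator regardless of the codomain dimension.

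Your route is shorter and yields the sharper factor $\delta\le c^{\prime}$ instead of $2c^{\prime}$. It also makes transparent that the virtual iterates are driven by an operator as well-behaved as $\Aop$. The paper's route has only the minor advantage of reusing one identity in both parts.

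As you correctly recognized, the inheritance trick cannot replace the decoupling in Part (1). There, RIP alone would only bound the rank-one coefficient by $\delta\normf{\XXstar-\XXt}$, losing the factor $\sqrt{2r}$ relative to $\specnorm{\XXstar-\XXt}$.
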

\begin{proof}

The lemma is a non-symmetric version of \cite[Lemma B.1]{stoger2025nonconvexmatrixsensingbreaking}. By definition of $\AAiw$, it holds $\nj{\AAiw, \Projw (\ZZ)} =0$ for all $\ZZ$. Consequently, one has
    \begin{align*}
    \left( \Aopws \right) \left( \Projw (\ZZ) \right)
   &=
    \frac{1}{m} \sum_{i=1}^{m}  \nj{\AAiw, \Projw (\ZZ)}\AAiw
    +\nj{\ww \vv^\top, \ZZ}\ww \vv^\top\\
    &=
    \nj{\ww \vv^\top, \ZZ}\ww \vv^\top,
    \end{align*}
and
\begin{align}
    &\left( \Aopws \right) \left( \Projwperp (\ZZ) \right) \notag \\
    &=
    \frac{1}{m} \sum_{i=1}^m  \nj{\AAiw, \Projwperp (\ZZ)} \AAiw 
    +
    \nj{\ww \vv^\top, \Projwperp (\ZZ) } \ww \vv^\top  \notag \\
    &\overeq{(a)}
    \frac{1}{m} \sum_{i=1}^m  \nj{\AAiw, \Projwperp (\ZZ)}\AAiw \overeq{(b)}\frac{1}{m} \sum_{i=1}^m  \nj{\AAi, \Projwperp (\ZZ)}\AAiw \notag \\
    &\overeq{(b)}
    \frac{1}{m} \sum_{i=1}^m  \nj{\AAi, \Projwperp (\ZZ)}\AAi
    -
    \frac{1}{m} \sum_{i=1}^m  \nj{\AAi, \Projwperp (\ZZ)} \nj{\ww \ww^\top, \AAi} \ww \vv^\top \notag \\
    &=
    (\Aops) \left( \Projwperp (\ZZ)\right) 
    - \nj{ \Aop (\ww \vv^\top), \Aop ( \Projwperp (\ZZ) ) } \ww \vv^\top, \label{eq:two crucial equalities}
\end{align}
where we used $\nj{\ww \vv^\top, \Projwperp (\ZZ) }=0$ in $(a)$,  and the definition of $\AAiw = \Projwperp (\AAi)$ in $(b)$. Recall that $\XXt=\LLt \RRt^\top$ and $\XXtw=\LLtw {\RRtw}^\top$. Then
\begin{align*}
    &(\Aopws) \left(\XXstar- \XXt \right)\\
    &=
    (\Aopws) \left( \Projw \left( \XXstar- \XXt \right) \right) 
    + (\Aopws) \left( \Projwperp \left(\XXstar- \XXt \right) \right)\\
    &=
    \Projw \left( \XXstar- \XXt \right)+(\Aops) \left(  \Projwperp \left( \XXstar- \XXt \right) \right)\\
    &-\nj{\Aop \left(\ww \vv^\top \right), \Aop \left( \Projwperp \left(\XXstar - \XXt\right) \right)} \ww \vv^\top,
\end{align*}
where the last equation follows from $\nj{\ww \vv^\top, \XXstar- \XXt}\ww \vv^\top=\Projw \left( \XXstar- \XXt \right)$ and equality \eqref{eq:two crucial equalities}. Hence
\begin{align*}
    \left(  \Aops - \Aopws \right) \left(\XXstar- \XXt \right)
    &=
    \left( \Aops - \IdOp \right)
    \left( \Projw \left( \XXstar- \XXt \right) \right) \nonumber \\
    &+\nj{\Aop (\ww \vv^\top), \Aop \left( \Projwperp (\XXstar - \XXt) \right)} \ww \vv^\top. 
\end{align*}

\paragraph*{Proof of (1).} Using the triangle inequality, we have
\begin{align}
    &\normf{\left(  \Aops - \Aopws \right) \left(\XXstar- \XXt \right) \WW_{t}^{\left(\ww,\vv \right)}} \nonumber \\
    &\le
    \normf{ \left( \Aops - \IdOp \right) \left( \Projw \left( \XXstar- \XXt \right) \right)\WW_{t}^{\left(\ww,\vv \right)}} 
    +
    \normf{\nj{\Aop (\ww \vv^\top), \Aop \left( \Projwperp (\XXstar - \XXt ) \right)} \ww \vv^\top} \nonumber \\
    &\overleq{(\textup{i})} 
    \delta \normf{ \Projw \left( \XXstar- \XXt \right) }
    + \big\vert \nj{ \Aop (\ww \vv^\top), \Aop \left( \Projwperp (\XXstar - \XXt)  \right) } \big\vert \nonumber\\
    &\overleq{(\textup{ii})} 
    \delta \specnorm{ \XXstar - \XXt }
    + \big\vert \nj{ \Aop (\ww \vv^\top), \Aop \left( \Projwperp (\XXstar - \XXtw)  \right) } \big\vert \nonumber \notag \\
    &+ \big\vert \nj{ \Aop (\ww \vv^\top), \Aop \left( \Projwperp (\XXtw-\XXt)  \right) } \big\vert \nonumber \\
    &\overleq{(\textup{iii})} 
    \delta \specnorm{ \XXstar - \XXt }
    + 4 \sqrt{\frac{n_1+n_2}{m}} \twonorm{\Aop \left( \Projwperp (\XXstar - \XXtw ) \right)}
    + \delta \normf{\XXt - \XXtw} \nonumber \\
    &\overleq{(\textup{iv})} 
    \delta \specnorm{ \XXstar - \XXt }
    + 4 \sqrt{\frac{2\left(n_1+n_2\right)}{m}} \normf{ \XXstar - \XXtw }
    + \delta \normf{\XXt-\XXtw} \nonumber \\
    &\overleq{(v)} 
    \left(\delta +   8 \sqrt{\frac{r\left(n_1+n_2\right)}{m}}  \right) \specnorm{ \XXstar - \XXt }
    +  \left( \delta  +   4 \sqrt{\frac{2\left(n_1+n_2\right)}{m}} \right) \normf{\XXt-\XXtw} \notag \\
    &\le 
    2 c^{\prime} \specnorm{ \XXstar - \XXt }+2 c^{\prime}\normf{\XXt-\XXtw} \notag .
\end{align}
Here, $(\textup{i})$ uses Lemma \ref{lemma: RIP} and  $\normf{\ww \vv^\top}\le 1$;  $(\textup{ii})$ uses the fact that 
 $\Projw$ is a rank-one projection matrix;  $(\textup{iii})$ follows from Lemma \ref{lemma:key} , Lemma \ref{lemma: RIP} and the fact that $\Projwperp$ is an orthogonal projection; $(\textup{iv})$ uses the RIP of rank $2r+1$;  $(\textup{v})$ uses $\rank(\XXstar - \XXt)\le 2r$ and  $\normf{ \XXstar - \XXtw }\le \normf{ \XXstar - \XXt }+\normf{ \XXt - \XXtw }$. The last inequality follows from the definition of $c^{\prime}:=\max \left\{ \delta; 8 \sqrt{2r\left(n_1+n_2\right)/m} \right\}$.
An identical argument with left multiplication by ${\VV_{t}^{\left(\ww,\vv \right)}}^{\top}$ yields
\begin{align*}
    &\normf{{\VV_{t}^{\left(\ww,\vv \right)}}^{\top}\left(  \Aops - \Aopws \right) \left(\XXstar- \XXt \right) } \le 2c^{\prime} \specnorm{ \XXstar - \XXt }+2c^{\prime}\normf{\XXt-\XXtw}.
\end{align*}

\paragraph*{Proof of (2).}
We similarly compute
\begin{align*}
   &\left(  \Aopws - \IdOp \right) \left( \XXtw - \XXt \right)\\
   &=
   \left(  \Aops - \IdOp \right) \left( \Projwperp \left( \XXtw - \XXt \right) \right)
   -
   \nj{ \Aop (\ww \vv^\top), \Aop \left( \Projwperp (\XXtw-\XXt) \right)  } \ww \vv^\top.
\end{align*}
Thus
\begin{align}
    &\normf{
        \left[\left(   \Aopws - \IdOp \right) \left( \XXtw-\XXt\right) \right] \WW_{t}^{\left(\ww,\vv \right)}
    } \nonumber \\
    &\overleq{(a)} 
    \delta \normf{ \Projwperp \left(\XXtw - \XXt \right) } 
    +
    \big\vert \nj{  \Aop (\ww \vv^\top), \Aop \left( \Projwperp (\XXtw - \XXt) \right) } \big\vert \nonumber \\
    &\overleq{(b)} 
    2\delta \normf{ \Projwperp (\XXtw - \XXt ) }\nonumber\\
    &\le 
    2c^{\prime}\normf{  \XXtw - \XXt  }, \nonumber
\end{align}
where  $(a)$ uses Lemma \ref{lemma: RIP} and $\specnorm{\ww \vv^\top}\le 1$, and $(b)$ follows from Lemma \ref{lemma: RIP}.  The same reasoning with ${\VV_{t}^{\left(\ww,\vv \right)}}^{\top}$ in place of $\WW_{t}^{\left(\ww,\vv \right)}$ yields the second inequality in (2). This completes the proof.
\end{proof}

\end{document}